\documentclass{article}

\usepackage[preprint]{neurips_2026}

\usepackage[utf8]{inputenc} % allow utf-8 input
\usepackage[T1]{fontenc}    % use 8-bit T1 fonts
\usepackage{hyperref}       % hyperlinks
\usepackage{url}            % simple URL typesetting
\usepackage{booktabs}       % professional-quality tables
\usepackage{amsfonts}       % blackboard math symbols
\usepackage{nicefrac}       % compact symbols for 1/2, etc.
\usepackage{microtype}      % microtypography
\usepackage{xcolor}         % colors

\usepackage[disable,textsize=tiny]{todonotes}
\RequirePackage{algorithm}
\RequirePackage{algorithmic}
\usepackage{wrapfig}

\usepackage{hyperref}       % hyperlinks
\usepackage{url}            % simple URL typesetting
\usepackage{booktabs}       % professional-quality tables
\usepackage{nicefrac}       % compact symbols for 1/2, etc.
\usepackage{microtype}      % microtypography
\usepackage{xcolor}         % colors

\usepackage{amsmath,amsfonts,amssymb,amsthm,bm}
\usepackage{graphicx}
\usepackage{enumitem}

\usepackage[normalem]{ulem}
\usepackage{multirow}
\usepackage{makecell}

\usepackage{cleveref}

\usepackage{xargs}

\definecolor{blush}{rgb}{0.87, 0.36, 0.51}
\newcommand{\ak}[1]{\todo[color=blush,size=\footnotesize]{ \textbf{AK:}  #1}}

\newcommand{\E}{\mathbb{E}}

\newcommand{\bP}{\mathbb{P}}
\newcommand{\R}{\mathbb{R}}
\newcommand{\bS}{\mathbb{S}}

\newcommand{\cD}{\mathcal{D}}

\newcommand{\cM}{\mathcal{M}}
\newcommand{\cN}{\mathcal{N}}
\newcommand{\cP}{\mathcal{P}}
\newcommand{\cU}{\mathcal{U}}
\newcommand{\cX}{\mathcal{X}}
\newcommand{\cY}{\mathcal{Y}}

\newcommand{\bOne}{\bm 1}

\newcommand{\veps}{\varepsilon}

\newcommand{\W}{\mathop{\mathrm{W}}\nolimits}
\newcommand{\SW}{\mathop{\mathrm{SW}}\nolimits}

\newcommand{\dd}{\;\mathrm{d}}

\newcommand{\spt}{\mathrm{spt}}
\newcommand{\Lap}{\mathrm{Lap}}
\newcommand{\Beta}{\mathrm{Beta}}

\newcommand{\setcond}{\;|\;}

\newcommand{\sca}[2]{\langle#1|#2\rangle}

\newcommand{\Rescale}{\mathop{\mathrm{Rescale}}\nolimits}

\theoremstyle{plain}
\newtheorem{theorem}{Theorem}[section]
\newtheorem{proposition}[theorem]{Proposition}
\newtheorem{lemma}[theorem]{Lemma}

\theoremstyle{definition}
\newtheorem{definition}[theorem]{Definition}

\theoremstyle{remark}
\newtheorem{remark}[theorem]{Remark}

\title{Highly Data Parallelizable Estimation of the Sliced-Wasserstein Distance Using Cumulative Distribution Functions}

\author{%
    Christophe Vauthier \\
    Laboratoire de Mathématiques d'Orsay, Université Paris-Saclay \\
    Gif-sur-Yvette, France \\
    \texttt{christophe.vauthier@universite-paris-saclay.fr} \\
    \And
    Quentin Mérigot \\
    Laboratoire de Mathématiques d'Orsay, Université Paris-Saclay \\
    Gif-sur-Yvette, France \\
    \texttt{quentin.merigot@universite-paris-saclay.fr} \\
    \And
    Anna Korba \\
    Centre de recherche en économie et statistique, ENSAE \\
    Palaiseau, France \\
    \texttt{anna.korba@ensae.fr} \\
}

\begin{document}

\maketitle

\begin{abstract}
  The Sliced Wasserstein (SW) distance has emerged as a computationally attractive alternative to the Wasserstein distance by leveraging one-dimensional optimal transport along random projections. Standard estimators of the SW distance rely on Monte Carlo averages of one-dimensional Wasserstein distances computed via quantile functions, which require sorting projected samples and access to full datasets. In this work, we introduce a new class of estimators for the Sliced Wasserstein distance based on cumulative distribution functions (CDFs) of projected measures, that avoid sorting and scale via massive dataset parallelism. This class includes several estimators, some of them being indexed by hyperparameters controlling their variance or smoothness. We show that they are especially well suited to scenarios in which CDFs are more tractable than quantile functions, such as mixtures of Gaussians, and moreover that they are also naturally compatible with federated learning, since CDFs of projected data can be computed and aggregated locally without requiring the exchange of raw samples.
\end{abstract}

\section{Introduction}

The Sliced Wasserstein (SW) distance~\citep{rabin2012wasserstein,bonneel2015sliced} has emerged as a widely used alternative to the Wasserstein distance for comparing high-dimensional probability measures. By projecting measures onto one-dimensional subspaces and averaging the resulting one-dimensional Wasserstein distances, the SW distance exploits the closed-form structure of optimal transport on the real line while significantly reducing computational complexity. As a result, SW distances have been successfully applied in a broad range of machine learning tasks, including generative modeling~\citep{liutkus2019sliced,kolouri2018sliced,nguyen2021distributional,nguyen2024energy}, clustering~\citep{kolouri2018sliced}, variational inference~\citep{yi2023sliced}, domain adaptation~\citep{lee2019sliced}, approximate Bayesian computation~\citep{nadjahi2020approximate}.

In practice, the SW distance is often estimated using Monte Carlo sampling over projection directions. For each sampled direction, the one-dimensional Wasserstein distance between projected measures is computed exactly via quantile functions, which for empirical measures amounts to sorting the projected samples. This procedure yields an unbiased estimator of the SW distance and scales as $\mathcal{O}(LN\log(N))$ for $L$ projections and  $N$ samples. While this complexity is substantially lower than that of high-dimensional optimal transport, quantile-based estimators of the SW distance suffer from structural limitations. The sorting operation requires access to all projected samples, which makes standard SW estimators poorly suited to distributed and federated learning settings, where data are split across multiple clients and cannot be centrally aggregated. Moreover, although different projections can be processed independently, the need to compute global order statistics on all samples limits parallelism. These limitations motivate the search for alternative estimators of the SW distance that preserve its favorable properties while relaxing its computational and structural constraints.

A number of extensions of the Sliced Wasserstein distance have been proposed to improve flexibility or computational burden, including max-sliced Wasserstein distances~\citep{deshpande2019max}, non linear projections~\citep{kolouri2019generalized}, learned or adaptive projections~\citep{nguyen2021distributional,ohana2023shedding,nguyen2023markovian,dai2020sliced}, orthogonal projections~\citep{rowland2019orthogonal} and hierarchical slicing schemes~\citep{nguyen2022hierarchical}.
Other works have proposed generalizations of SW distances where projections on lines are replaced by projections on a more advanced structure called tree systems~\citep{le2019tree,tran2025distance}. 
However, all these approaches still rely on quantile functions and therefore retain the need to compute global order statistics of the projected samples, that require access to the full dataset.

In this work, we propose a novel approach to estimating the Sliced Wasserstein distance, based on cumulative distribution functions (CDFs) of projected measures rather than quantile functions. Our construction builds on integral representations of one-dimensional Wasserstein distances, which express Wasserstein distances of orders one and two directly in terms of CDFs. Leveraging these representations, we derive unbiased Monte Carlo estimators of the SW distance that require only pointwise evaluations of projected CDFs at randomly sampled locations, completely avoiding sorting operations. This CDF-based formulation yields estimators with several desirable properties. From a computational standpoint, evaluating a CDF reduces to a simple sum over projected samples and can be implemented in linear time, leading to estimators with complexity 
$O(LNd)$. Furthermore, the CDF of a dataset is naturally expressed as a sum of CDFs of subdatasets, making the proposed estimators massively data parallelizable. 

Yet, we empirically analyze the variance of the resulting estimators and show that a naive Monte-Carlo approximation may lead to poor accuracy for second-order SW distances. To address this issue, we introduce an importance sampling strategy that significantly improves estimation accuracy.
A further challenge of CDF-based estimators is that empirical CDFs are not differentiable with respect to sample locations, which limits their direct use in gradient-based learning. To overcome this limitation, we introduce smooth approximations of CDFs that yield differentiable estimators of the Sliced-Wasserstein distance. These smoothed estimators converge to their original counterparts as the smoothing parameter vanishes, while enabling backpropagation in applications involving neural network training, such as generative modeling.
Finally, the proposed estimators are particularly well suited to federated learning settings. Since CDFs of projected measures can be computed locally and aggregated by simple summation, the SW distance can be estimated without sharing raw data or projected samples. This property enables efficient and private computation of SW distances in distributed environments and allows SW distances to be integrated into federated learning algorithms.

This paper is organized as follows. In \Cref{sec:background}, we provide background material on optimal transport, one-dimensional Wasserstein distances, and the Sliced Wasserstein distance. 
\Cref{sec:cdf_estimators} introduces the proposed CDF-based estimators of the Sliced Wasserstein distance, analyzes their computational and statistical properties, and presents variance reduction and smoothing techniques. \Cref{sec:applications} discusses practical applications of the proposed estimators, with a particular focus on federated learning settings or Gaussian mixtures. \Cref{sec:experiments} presents our experimental results on some generative modeling tasks, Gaussian mixture models and federated learning problems.

\section{Background}\label{sec:background}

\paragraph{Notations} We denote $\cP(\R^d)$ the set of Borel probability measures on $\R^d$, and, for every $p \geq 1$, we denote $\cP_p(\R^d)$ the set of probability measures with finite $p$-th order moment. For every $\mu \in \cP(\R^d)$ and every Borel measurable map $T : \R^d \to \R^m$, we denote $T_\#\mu \in \cP(\R^m)$ the pushforward of $\mu$ by $T$, defined by $T_\#\mu(A) := \mu(T^{-1}(A))$ for every Borel set $A \subseteq \R^m$. The support of a measure $\mu \in \cP(\R^d)$, which is the complement of the largest open set of zero $\mu$-measure, is denoted $\spt(\mu)$. Given $X$ a compact differential manifold, we will denote $\cU(X)$ the uniform probability distribution on $X$. We note $S_d(\R)$ the space of $d \times d$ symmetric matrices, and $S_d^{++}(\R)$ the space of symmetric positive definite matrices. For every $m \in \R^d$ and $\Sigma \in S_d^{++}(\R)$, we denote $\cN(m,\Sigma)$ the normal distribution with mean vector $m$ and covariance matrix $\Sigma$. For every $x \in \R$ we denote $(x)^+ := \max(0,x)$ the positive part of $x$.

\paragraph{Wasserstein distance} Given $p \geq 1$ and two probability measures $\mu,\nu \in \cP_p(\R^d)$, the Wasserstein distance of order $p$ between $\mu$ and $\nu$ is defined by
\begin{equation} \label{eq:wasserstein_distance}
    \W_p(\mu,\nu) := \left(\min_{\gamma \in \Pi(\mu,\nu)} \int |x-y|^p \dd\gamma(x,y)\right)^{1/p}
\end{equation}
where $\Pi(\mu,\nu)$ is the set of  $\gamma \in \cP(\R^d \times \R^d)$ such that $\pi_{1\#}\gamma = \mu$ and $\pi_{2\#}\gamma = \nu$, where $\pi_1,\pi_2 : \R^d \times \R^d \to \R^d$ are the projections on the first and second coordinate, and $|\cdot|$ denotes the Euclidean distance. When $\mu$ and $\nu$ are two point clouds of the form $\mu := \frac 1N \sum_{i=1}^N \delta_{x_i}$ and $\nu := \frac 1N \sum_{i=1}^N \delta_{y_i}$, \eqref{eq:wasserstein_distance} can be reformulated as a particular case of a linear program, and dedicated algorithms allow to compute the Wasserstein distance in time complexity $O(N^{2.5}\log(N))$ \citep{burkard2012assignment}.

\paragraph{1D Wasserstein distance} For a probability measure $\mu \in \cP(\R)$, we note $F_\mu : \R \mapsto [0,1]$ its cumulative distribution function, defined by $F_\mu(x) := \mu((-\infty, x])$. It is nondecreasing and right continuous. We also define its quantile function $F_\mu^{-1}$ to be the pseudo-inverse of $F_\mu$, defined by $F^{-1}_\mu(x) := \inf \{t \in \R \setcond F_\mu(t) \geq x \}$ for every $x \in [0,1]$. It is well-known (see \citep[Chapter 2]{santambrogio2015optimal}) that in dimension $d = 1$, the Wasserstein distance between two probability measures can be explicitly expressed as the $L^p$ distance between their quantile functions: for every $\mu, \nu \in \cP(\R)$,
\begin{equation} \label{eq:wass_1d_expression_quantile}
    \W_p^p(\mu, \nu) = \int_0^1 |F_\mu^{-1}(x) - F_\nu^{-1}(x)|^p \dd x
\end{equation}
This property allows for efficient computation of the Wasserstein distance in dimension 1. Indeed, if $\mu,\nu$ are point clouds of the form $\mu := \frac 1N \sum_{i=1}^N \delta_{x_i}$ and $\nu := \frac 1N \sum_{i=1}^N \delta_{y_i}$, the distance $\W_p(\mu,\nu)$ can be computed in time $\mathcal{O}(N\log(N))$, by sorting the $x_i$'s and the $y_j$'s: letting $\sigma,\tau$ be two permutations of $\{1,\ldots,N\}$ such that $x_{\sigma(1)} \leq \ldots \leq x_{\sigma(N)}$ and $y_{\tau(1)} \leq \ldots \leq y_{\tau(N)}$, we have
\begin{equation}
    \W_p^p(\mu,\nu) = \frac 1N \sum_{i=1}^N |x_{\sigma(i)} - y_{\tau(i)}|^p.
\end{equation}

\paragraph{Sliced Wasserstein distance} Let $p \geq 1$ and $\mu, \nu \in \cP_p(\R^d)$. The Sliced Wasserstein distance of order $p$ between $\mu$ and $\nu$ is
\begin{equation}
    \SW_p(\mu,\nu) := \left(\int_{\bS^{d-1}} \W_p^p(\mu_\theta, \nu_\theta) \dd\theta \right)^{1/p}
\end{equation}
where $\dd\theta$ is the uniform measure on the unit sphere $\bS^{d-1}$, and where $\mu_\theta := P_{\theta\#}\mu$ and $\nu_\theta := P_{\theta\#}\nu$, with $P_\theta$ the projection $x \in \R^d \mapsto \sca{x}{\theta} \in \R$ on the direction $\theta$. The Sliced Wasserstein distance between two point clouds $\mu := \frac 1N \sum_{i=1}^N \delta_{x_i}$ and $\nu := \frac 1N \sum_{i=1}^N  \delta_{y_i}$ is usually computed using the Monte-Carlo estimator
\begin{equation} \label{eq:sw_quantile_estimator}
    \widetilde{\SW}_{p,L}^p(\mu,\nu) := \frac{1}{L} \sum_{l=1}^L \W_p^p(\mu_{\theta_l}, \nu_{\theta_l})
\end{equation}
where $L > 0$ and $\theta_1,\ldots,\theta_L$ are i.i.d. random variables with $\theta_i \sim \cU(\bS^{d-1})$. This gives an unbiased estimator of $\SW_p^p(\mu,\nu)$, which can be computed in time $\mathcal{O}(L(N\log(N) + Nd))$.

\section{The CDF estimators of the Sliced Wasserstein} \label{sec:cdf_estimators}

\paragraph{The estimators for empirical measures} We describe here how to construct our alternative and novel estimators for the Sliced-Wasserstein distance. Our method is based on the following two observations made in \citep{bobkovonedimensional2019}:

\begin{theorem} \label{th:bobkov_ledoux_sw1} \citep[Theorem 2.9]{bobkovonedimensional2019}
    Let $\mu, \nu \in \cP_1(\R)$, then we have
    \begin{equation}
        \W_1(\mu,\nu) = \int_{-\infty}^\infty |F_\mu(x) - F_\nu(x)| \dd x
    \end{equation}
\end{theorem}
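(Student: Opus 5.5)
We will start from the quantile representation \eqref{eq:wass_1d_expression_quantile} with $p=1$, namely $\W_1(\mu,\nu) = \int_0^1 |F_\mu^{-1}(t) - F_\nu^{-1}(t)| \dd t$, and convert it into an integral of $|F_\mu - F_\nu|$ by a Fubini (layer-cake) argument on the region between the two graphs. The key identity is that for real numbers $a,b$,
\begin{equation*}
    |a-b| = \int_{-\infty}^{\infty} |\bOne_{\{a \le x\}} - \bOne_{\{b \le x\}}| \dd x,
\end{equation*}
since the integrand is the indicator of the interval between $\min(a,b)$ and $\max(a,b)$. Applying this with $a = F_\mu^{-1}(t)$, $b = F_\nu^{-1}(t)$ and integrating over $t \in (0,1)$ gives a double integral of a nonnegative function, so Tonelli allows us to swap the order of integration.

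The second step is the Galois-connection property of the pseudo-inverse: for $t \in (0,1)$ and $x \in \R$, $F_\mu^{-1}(t) \le x \iff t \le F_\mu(x)$. This follows from right-continuity and monotonicity of $F_\mu$ (the infimum in the definition of $F_\mu^{-1}$ is attained). Hence $\bOne_{\{F_\mu^{-1}(t) \le x\}} = \bOne_{\{t \le F_\mu(x)\}}$, and similarly for $\nu$. For fixed $x$, the quantity $|\bOne_{\{t \le F_\mu(x)\}} - \bOne_{\{t \le F_\nu(x)\}}|$ is the indicator of the interval of $t$ lying between $F_\mu(x)$ and $F_\nu(x)$, whose Lebesgue measure in $(0,1)$ is exactly $|F_\mu(x) - F_\nu(x)|$. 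Integrating in $x$ then yields $\int_\R |F_\mu - F_\nu| \dd x$, which is the claim.

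The only delicate points are measurability and finiteness, and I expect the Galois-connection step to be the main place needing care, specifically the endpoint conventions $t\in\{0,1\}$ (where $F^{-1}$ may be $\pm\infty$). Restricting to $t\in(0,1)$, a set of full measure, avoids this. Finiteness is not actually needed for the identity, since Tonelli applies to nonnegative integrands and both sides may be compared in $[0,\infty]$. The assumption $\mu,\nu\in\cP_1(\R)$ then just guarantees that both sides are finite, because $\int_0^1 |F_\mu^{-1}| = \int |x|\dd\mu < \infty$.
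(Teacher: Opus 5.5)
The paper does not prove this statement. It quotes it from Bobkov and Ledoux, and the only related ingredient the paper supplies is the quantile formula \eqref{eq:wass_1d_expression_quantile}, which it also takes as known.

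Your argument is correct. For $t\in(0,1)$, monotonicity, the limits of $F_\mu$ at $\pm\infty$ and right-continuity make $\{s \mid F_\mu(s)\ge t\}$ equal to $[F_\mu^{-1}(t),\infty)$ with a finite endpoint. This is exactly the relation $F_\mu^{-1}(t)\le x\iff t\le F_\mu(x)$. The integrand is Borel because $F_\mu,F_\nu$ are monotone. Tonelli then shows that $\int_0^1|F_\mu^{-1}-F_\nu^{-1}|\dd t$ and $\int_\R|F_\mu-F_\nu|\dd x$ are the same area, in $[0,\infty]$, for arbitrary $\mu,\nu\in\cP(\R)$. Be aware that all the transport content sits in \eqref{eq:wass_1d_expression_quantile}, i.e.\ in the optimality of the monotone coupling; your own step is a change of variables.

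The other standard route is Kantorovich--Rubinstein duality. For $1$-Lipschitz $f$, integration by parts gives $\int f\dd(\mu-\nu)=\int_\R f'(x)\bigl(F_\nu(x)-F_\mu(x)\bigr)\dd x\le\int_\R|F_\mu-F_\nu|\dd x$, with equality for $f'=\mathrm{sign}(F_\nu-F_\mu)$. That route bypasses the monotone coupling, but it is specific to $p=1$.

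Your route, by contrast, extends verbatim to \Cref{th:bobkov_ledoux_sw2}. For real $a,b$,
\[
|a-b|^2=2\iint_{x\le y}\bigl(\bOne_{\{a\le x\}}\bOne_{\{y<b\}}+\bOne_{\{b\le x\}}\bOne_{\{y<a\}}\bigr)\dd x\dd y,
\]
and the same Galois relation gives $\bOne_{\{F_\mu^{-1}(t)\le x\}}\bOne_{\{y<F_\nu^{-1}(t)\}}=\bOne_{\{F_\nu(y)<t\le F_\mu(x)\}}$. Its integral over $t\in(0,1)$ is $(F_\mu(x)-F_\nu(y))^+$. Combined with \eqref{eq:wass_1d_expression_quantile} for $p=2$, this yields exactly the double-integral formula behind $\widehat{\SW}_{2,L}^2$. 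So your approach proves both cited identities with a single argument.
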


\begin{theorem} \label{th:bobkov_ledoux_sw2} \citep[Theorem 2.11]{bobkovonedimensional2019}
    Let $\mu, \nu \in \cP_2(\R)$, then we have
    \begin{equation}
        \W_2^2(\mu,\nu) = 2\iint_{x \leq y} (F_\mu(x) - F_\nu(y))^+ + (F_\nu(x) - F_\mu(y))^+ \dd x \dd y.
    \end{equation}
\end{theorem}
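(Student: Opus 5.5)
We have to prove Theorem~\ref{th:bobkov_ledoux_sw2}, the double-integral formula for $\W_2^2$. The plan is to go from the quantile representation \eqref{eq:wass_1d_expression_quantile} with $p=2$ to the stated formula. We use a layer-cake representation of the cost $|a-b|^2$ in terms of indicator functions, then apply Fubini and swap the roles of the "quantile" and "CDF" variables.

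\textbf{Step 1: layer-cake identity for the cost.} For real $a,b$ we have
\[
|a-b|^2 = 2\iint_{x\le y} \bigl(\bOne_{a\le x}\bOne_{b> y} + \bOne_{b\le x}\bOne_{a>y}\bigr)\dd x \dd y .
\]
To check this, take $a<b$. The first integrand is the indicator of $\{a\le x\le y<b\}$, which is a triangle of area $(b-a)^2/2$. The second integrand vanishes, since $b\le x\le y<a$ is impossible. The case $a>b$ is symmetric.

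\textbf{Step 2: substitute the quantiles and integrate.} Put $a=F_\mu^{-1}(t)$ and $b=F_\nu^{-1}(t)$, and integrate over $t\in(0,1)$. All integrands are nonnegative, so Tonelli lets us exchange the order of integration. We then use the standard Galois relation $F_\mu^{-1}(t)\le x \iff t\le F_\mu(x)$, and also $F_\nu^{-1}(t)>y \iff t> F_\nu(y)$. With these, the inner $t$-integral becomes
\[
\int_0^1 \bOne_{F_\nu(y)<t\le F_\mu(x)}\dd t=(F_\mu(x)-F_\nu(y))^+ .
\]
The symmetric term gives $(F_\nu(x)-F_\mu(y))^+$. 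Combining with \eqref{eq:wass_1d_expression_quantile} yields the claimed identity.

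\textbf{Main point of care.} The delicate step is the equivalence $F_\mu^{-1}(t)\le x \iff t\le F_\mu(x)$. It holds for $t\in(0,1]$ and follows from right-continuity and monotonicity of $F_\mu$. The set $t=0$ is null, so it does no harm. Since everything is nonnegative, no integrability issue arises from Tonelli. Both sides may a priori be infinite, but finiteness of the left side holds by the assumption $\mu,\nu\in\cP_2(\R)$, so the identity is one between finite quantities.

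The same argument with $|a-b|=\int \bigl(\bOne_{a\le x<b}+\bOne_{b\le x<a}\bigr)\dd x$ recovers Theorem~\ref{th:bobkov_ledoux_sw1}, which serves as a sanity check on the method.
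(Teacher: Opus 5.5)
Your proof is correct. The paper does not prove this identity; it imports it from \citep[Theorem 2.11]{bobkovonedimensional2019}, so there is no in-paper argument to compare with. Your derivation gives a self-contained proof resting only on the quantile formula \eqref{eq:wass_1d_expression_quantile}, which the paper already takes as background.

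In probabilistic terms, this is the comonotone-coupling argument. With $U \sim \cU([0,1])$, $X = F_\mu^{-1}(U)$ and $Y = F_\nu^{-1}(U)$, your Galois relation says exactly that $\bP(X \le x, Y > y) = (F_\mu(x) - F_\nu(y))^+$. The theorem is then $\E|X-Y|^2$ evaluated through your layer-cake identity. Your treatment of the endpoints $t \in \{0,1\}$ is sound. Because Tonelli applies to nonnegative integrands, the identity holds in $[0,\infty]$.

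Your route also gives something beyond the cited statement. In Step 1, replace the constant weight $2$ by $p(p-1)(y-x)^{p-2}$; this still integrates to $(b-a)^p$ over the triangle $a \le x \le y < b$. The identical Fubini step then gives
\[
\W_p^p(\mu,\nu) = p(p-1)\iint_{x\le y}(y-x)^{p-2}\bigl[(F_\mu(x)-F_\nu(y))^+ + (F_\nu(x)-F_\mu(y))^+\bigr]\dd x \dd y
\]
for every $p>1$. So $p=2$ is precisely the case where the weight is constant, and the CDF-based construction of \Cref{sec:cdf_estimators} would extend to $\SW_p^p$ for any $p>1$.
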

These theorems immediately give alternative expressions for the Sliced-Wasserstein distances of order 1 and 2. For every $\mu, \nu \in \cP_1(\R^d)$, we have
\begin{equation} \label{eq:sw1_alternate}
    \SW_1(\mu,\nu) = \int_{\bS^{d-1}} \int_{-\infty}^\infty |F_{\mu_\theta}(x) - F_{\nu_\theta}(x)| \dd x \dd \theta,
\end{equation}
and for every $\mu, \nu \in \cP_2(\R^d)$, it holds
\begin{equation} \label{eq:sw2_alternate}
    \SW_2^2(\mu,\nu) = 2\int_{\bS^{d-1}} \iint_{x \leq y} (F_{\mu_\theta}(x) - F_{\nu_\theta}(y))^+ + (F_{\nu_\theta}(x) - F_{\mu_\theta}(y))^+ \dd x \dd y \dd \theta.
\end{equation}
Furthermore, if for every $\theta \in \bS^{d-1}$, we let $[a_\theta,b_\theta]$ be any interval containing the supports of $\mu_\theta$ and $\nu_\theta$, then the inner integral in \eqref{eq:sw1_alternate} can be restricted to $[a_\theta,b_\theta]$ and the inner double integral in \eqref{eq:sw2_alternate} can be restricted to $a_\theta \leq x \leq y \leq b_\theta$. Using these alternative expressions, we construct estimators for the $\SW_1$ and $\SW_2$ between two discrete measures $\mu := \sum_{i=1}^N \alpha_i \delta_{x_i}$ and $\nu := \sum_{j=1}^M \beta_j \delta_{y_j}$. For every $\theta \in \bS^{d-1}$, let $m^-_\theta := \min(\spt(\mu_\theta) \cup \spt(\nu_\theta))$ and $m^+_\theta := \max(\spt(\mu_\theta) \cup \spt(\nu_\theta))$ (so $[m^-_\theta,m^+_\theta]$ is the smallest interval containing all the projections $\sca{x_i}{\theta}$ and $\sca{y_j}{\theta}$). Then, up to a change of variable $\tilde{x} = m_\theta^- + r_\theta x$, with $r_\theta = m_\theta^+ - m_\theta^-$, the inner integral in \eqref{eq:sw1_alternate} and the inner double integral in \eqref{eq:sw2_alternate} can be restricted to respectively $[0,1]$ and the triangle $T = \{(x,y) \in [0,1]^2 \setcond x \leq y\}$. This will make sampling easier in the definitions of the estimators.

\begin{definition} \label{def:sw1_alt_estimator}
    Let $L > 0$, and let $(\theta_1,U_1),\ldots,(\theta_L,U_L)$ be i.i.d. variables with $(\theta_1,U_1) \sim \mathcal{U}(\bS^{d-1} \times [0,1])$. We define the estimator $\widehat{\SW}_{1,L}(\mu,\nu)$ of $\SW_1(\mu,\nu)$ by
    \begin{equation} \label{eq:sw1_alt_estimator}
        \widehat{\SW}_{1,L}(\mu,\nu) := \frac 1L \sum_{l=1}^L r_{\theta_l} \Delta_{\mu,\nu}^{(1)}(\theta_l,\tilde{U}_l)
    \end{equation}
    where $\Delta_{\mu,\nu}^{(1)}(\theta,u) := |F_{\mu_\theta}(u) - F_{\nu_\theta}(u)|$ for every $u \in \R$, and $\tilde{U}_l := \Rescale(U_l, m^-_{\theta_l}, m^+_{\theta_l})$ for every $l = 1,\ldots,L$, with $\Rescale(x,a,b) := a + (b-a)x$ for every $x,a,b \in \R$.
\end{definition}

\begin{definition} \label{def:sw2_alt_estimator}
    Let $(\theta_1,U_1,V_1),\ldots,(\theta_L,U_L,V_L)$ be i.i.d. variables with $(\theta_1,U_1,V_1) \sim \mathcal{U}(\bS^{d-1} \times T)$. We define the estimator $\widehat{\SW}_{2,L}^2(\mu,\nu)$ of $\SW_2^2(\mu,\nu)$ by
    \begin{equation} \label{eq:sw2_alt_estimator}
        \widehat{\SW}_{2,L}^2(\mu,\nu) := \frac 1L \sum_{l=1}^L r^2_{\theta_l} \Delta^{(2)}_{\mu,\nu}(\theta_l, \tilde{U}_l, \tilde{V}_l)
    \end{equation}
    where $\Delta_{\mu,\nu}^{(2)}(\theta,u,v) := (F_{\mu_\theta}(u) - F_{\nu_\theta}(v))^+ + (F_{\nu_\theta}(u) - F_{\mu_\theta}(v))^+$ for every $u,v \in \R$ and $\theta \in \bS^{d-1}$, and again $\tilde{U}_l := \Rescale(U_l, m^-_{\theta_l}, m^+_{\theta_l})$ and $\tilde{V}_l := \Rescale(V_l, m^-_{\theta_l}, m^+_{\theta_l})$.
\end{definition}
It is easy to check using \eqref{eq:sw1_alternate} and \eqref{eq:sw2_alternate} that these estimators are unbiased, i.e. that 
\begin{equation}
    \E[\widehat{\SW}_{p,L}^p(\mu,\nu)] = \SW_p^p(\mu,\nu), \quad p \in \{1,2\}.
\end{equation}
Computing the projections $\sca{x_i}{\theta_l}$ and $\sca{y_j}{\theta_l}$ can be done in time $\mathcal{O}(L(N+M)d)$. Moreover, for any $a \in \R$, it holds
$F_{\mu_\theta}(a) = \sum_{i=1}^N \alpha_i \bOne_{\sca{x_i}{\theta} \leq a}$ and $F_{\nu_\theta}(a) = \sum_{j=1}^M \beta_j \bOne_{\sca{y_j}{\theta} \leq a}$, so to compute the cumulative distribution functions $F_{\mu_\theta}(a)$ and $F_{\nu_\theta}(a)$, we only need to sum the $\alpha_i$ and $\beta_j$ of the points $x_i$ and $y_j$ whose projection on the line $\theta$ is less than or equal to $a$, which can be done in time respectively $\mathcal{O}(N)$ and $\mathcal{O}(M)$ (provided the $\sca{x_i}{\theta}$ and $\sca{y_j}{\theta}$ have already been computed). Therefore the time complexity of computing these estimators is of the order of $\mathcal{O}(L(M+N)d)$. Note that, while this implies that for a fixed $L$ the estimators $\widehat{\SW}_{1,L}$ and $\widehat{\SW}^2_{2,L}$ will be faster to compute than the usual Monte-Carlo estimators $\widetilde{\SW}_{1,L}$, $\widetilde{\SW}^2_{2,L}$, which have complexity $O(L(N\log(Nd) + Nd))$, this comes at the cost of higher variance, as we expect the integrands $\Delta^{(1)}_{\mu,\nu}$ and $\Delta^{(2)}_{\mu,\nu}$ to have higher variance than the integrand $\W_p^p(\mu_\theta,\nu_\theta)$. We refer to \Cref{appendix:sec:sec:sec:exp_time_vs_acc} for an empirical analysis of the time-accuracy tradeoff between the various estimators.

\begin{wrapfigure}{l}{0.5\textwidth}
    \centering
    \begin{center}
        \includegraphics[width=\linewidth]{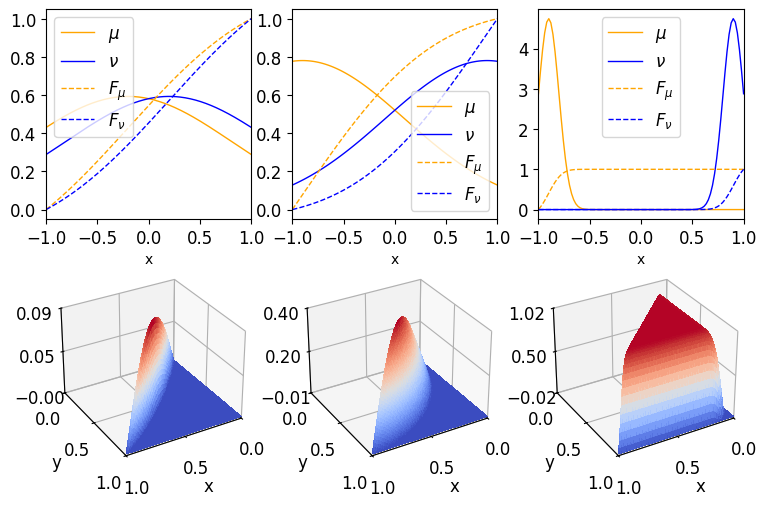}
    \end{center}
    \caption{Plots of $(F_\mu(x)-F_\nu(y))^+$ on the triangle $0 \leq x \leq y \leq 1$ (bottom) and of the probability density functions and CDFs of $\mu,\nu$ (top) for various pairs $(\mu,\nu)$. The measures $\mu$ and $\nu$ are obtained by truncating normal distributions $\cN(m_-,\sigma)$ and $\cN(m_+,\sigma)$ to $[-1,1]$. Left: $m_\pm = \pm 0.2$ and $\sigma = 1$. Middle: $m_\pm = \pm 0.9$ and $\sigma = 1$. Right: $m_\pm = \pm 0.9$ and $\sigma = 0.1$.}
    \label{fig:importance_sampling}
\end{wrapfigure}

\paragraph{Variance minimization via importance sampling} One drawback of the estimator $\widehat{\SW}_{2,L}^2$, which comes from the formula \eqref{eq:sw2_alternate} on which it is based, is that for $\mu, \nu \in \cP_2(\R)$, the function $(x,y) \to (F_\mu(x) - F_\nu(y))^+$ is nondecreasing in $x$ and nonincreasing in $y$, so that if $\mu$ and $\nu$ are ``close", it is highly likely that we will have $F_\mu(x) \leq F_\nu(y)$ whenever the gap $y-x > 0$ is large. This is illustrated in \Cref{fig:importance_sampling} which shows that, the more the measures $\mu$ and $\nu$ are close, the more the subset of $T$ on which $(F_\mu(x) - F_\nu(y))^+$ is nonzero is concentrated near the diagonal $x = y$. Therefore, in this situation, computing $\widehat{\SW}_{2,L}^2$ by sampling the triangle $(U_l, V_l) \in T$ uniformly will likely lead to many zero terms, hence to a high variance. One way of compensating for this problem is to instead sample the triangle $T$ using a probability distribution $\eta$ which gives more weight to pairs $(x,y) \in T$ with $y-x$ small.\\

\begin{definition} \label{def:sw2_alt_estimator_k}
    Let $\eta \in \cP(T)$ be with positive density $f$ (that is $\dd\eta(x,y) = f(x,y) \dd x \dd y$). Let $(\theta_1,U_1,V_1)$, ...,$(\theta_L,U_L,V_L)$ be i.i.d. random variables with $(\theta_1,U_1,V_1) \sim \mathcal{U}(\bS^{d-1}) \times \eta$. We define the estimator
    \begin{equation} \label{eq:sw2_alt_estimator_2}
        \widehat{\SW}_{2,L,\eta}^2(\mu,\nu) := \frac 1L \sum_{l=1}^L \frac{2r^2_{\theta_l}}{f(U_l,V_l)} \Delta^{(2)}_{\mu,\nu}(\theta_l,\tilde{U}_l,\tilde{V}_l).
    \end{equation}
\end{definition}
One can check that this defines again an unbiased estimator of $\SW^2_2(\mu,\nu)$, which can again be computed in time $O(L(N+M)d)$. One possible choice of $\eta$ is $\dd\eta_k(x,y) = k(k+1) (y-x)^{k-1} \dd x \dd y$ for some $k \in (0,1]$. This ensures that, if $(X,Y) \sim \eta_k$, then $Y-X \sim \mathrm{Beta}(k,2)$ (see \Cref{prop:eta_k_gives_beta}). The lower the value of $k$, the more the law of $\eta$ gives mass to small gaps $y-x$. For $k = 1$, $\eta_1$ is simply the uniform distribution on $T$, so that $\widehat{\SW}_{2,L,\eta_1} = \widehat{\SW}_{2,L}$. We refer to \Cref{appendix:sec:sec:importance_sampling} for a more detailed theoretical and empirical analysis of the conditions under which using this importance sampling strategy with $\eta_k$ leads to a decrease in variance.

\paragraph{Differentiability of the CDF estimators} One of the main limitations of the estimators $\widehat{\SW}_{1,L}$, $\widehat{\SW}_{2,L}$ and $\widehat{\SW}_{2,L,\eta}$ is that they only provide estimates of the SW distance and do not readily give access to the gradient of the Sliced-Wasserstein distance. Indeed, as we saw previously, the evaluations at a fixed point $a \in \R$ of the cumulative distribution functions $F_{\mu_\theta}(a)$ and $F_{\nu_\theta}(a)$ are linear combinations of indicator functions $\bOne_{\sca{x_i}{\theta} \leq a}$, and thus are not differentiable with respect to the points $x_i$ and $y_j$. This can be inconvenient since many practical applications of the Sliced Wasserstein distance require to differentiate it, as in the case for example in generative modeling. The workaround we propose is to \emph{smooth the cumulative distribution functions} to make them differentiable. For any $\rho \in \cP(\R)$ and $a \in \R$, letting $\veps > 0$ be a smoothing parameter, we introduce the smoothed version of the cumulative distribution function of $\rho$
\begin{equation} \label{eq:smoothed_cdf}
    F^{(\veps)}_\rho(a) := \int \sigma(-(x-a-\sqrt{\veps})/\veps) \dd\rho(x),
\end{equation}
where $\sigma$ is the sigmoid function $\sigma(x) = 1/(1+e^{-x})$. We can then introduce smoothed estimators $\widehat{\SW}_{1,L,\veps}$, $\widehat{\SW}_{2,L,\veps}$ and $\widehat{\SW}_{2,L,\eta,\veps}$ which are defined by replacing the CDFs $F_{\mu_\theta}$, $F_{\nu_\theta}$ by their smoothed counterparts $F_{\mu_\theta}^{(\veps)}$, $F_{\nu_\theta}^{(\veps)}$ in respectively \Cref{def:sw1_alt_estimator}, \Cref{def:sw2_alt_estimator} and \Cref{def:sw2_alt_estimator_k}. As the $F^{(\veps)}_\rho$ are only approximations of $F_\rho$, these estimators will be biased. Nevertheless, we can check that for any fixed $\rho \in \cP(\R)$ and $a \in \R$, we have $F^{(\veps)}_\rho(a) \to F_\rho(a)$ when $\veps \to 0^+$, so that the estimators $\widehat{\SW}_{1,L,\veps}$, $\widehat{\SW}_{2,L,\veps}$ and $\widehat{\SW}_{2,L,\eta,\veps}$ will converge respectively to $\widehat{\SW}_{1,L}$, $\widehat{\SW}_{2,L}$ and $\widehat{\SW}_{2,L,\eta}$ when $\veps \to 0^+$. Note that, while the process by which we smooth the CDFs in Equation \eqref{eq:smoothed_cdf} may seem at first glance to be ad hoc, it turns out that these smoothed estimators can be interpreted as (biased) estimators of a Sliced-Wasserstein distance between smoothed versions of $\mu$ and $\nu$: we refer for further details to the theoretical analysis in \Cref{appendix:sec:sec:smoothed_estimator_analysis}.

\section{Practical Applications}\label{sec:applications}
In this section, we review practical applications of the estimators \(\widehat{\SW}_{1,L}\), \(\widehat{\SW}_{2,L}^2\), \(\widehat{\SW}_{2,L,\eta}^2\), and their smoothed variants introduced in \Cref{sec:cdf_estimators}. We refer to these methods as the \emph{CDF-based estimators}, in contrast to the standard estimators \(\widetilde{\SW}_{p,L}^p\), which we call the \emph{quantile-based estimators} since they rely on the quantile representation \eqref{eq:wass_1d_expression_quantile} of the one-dimensional Wasserstein distance.

\paragraph{Dataset parallelization} It is well-known that the quantile-based estimator $\widetilde{\SW}_{p,L}^p$ can be computed in a parallel manner over the number $L$ of directions, since each $\W_p^p(\mu_{\theta_l},\nu_{\theta_l})$ in its definition \eqref{eq:sw_quantile_estimator} can be computed independently. This is still true for the CDF-based estimators, since the terms corresponding to each $(\theta_l,U_l) \in \bS^{d-1} \times [0,1]$ or $(\theta_l,U_l,V_l) \in \bS^{d-1} \times T$ can be computed independently. However, thanks to the properties of cumulative distribution functions, a remarkable feature of the CDF-based $\SW$ estimators is that their computation can additionally be made parallel over $N$, where $N$ is the number of samples of the compared measures.

Consider indeed a setting where we have measures $\mu_1,\mu_2 \in \cP_2(\R^d)$ of the form $\mu_i = \sum_{j=1}^{p_i} \alpha_{i,j} \mu_{i,j}$, where the $\mu_{i,j}$ are probability measures held by $p = p_1 + p_2$ separate devices, and the $\alpha_{i,j}$ are nonnegative weights such that $\sum_{j=1}^{p_i} \alpha_{i,j} = 1$, $i=1,2$. Then, in this case, for every $i=1,2$ and $\theta \in \bS^{d-1}$, the CDF of $\mu_{i,\theta}$ has the simple expression
$F_{\mu_{i,\theta}} = \sum_{j=1}^{p_i} \alpha_{i,j} F_{\mu_{i,j,\theta}}$. This allows the computation of the CDF-based $\SW$ estimators to be done in parallel across the $p$ devices using, for instance in the case of $\widehat{\SW}_{1,L}(\mu_1,\mu_2)$, the following algorithm: (1) A central server samples $(\theta_1,U_1),\ldots,(\theta_L,U_L)$ uniformly from $\bS^{d-1} \times [0,1]$ and sends them to the devices, (2) each device $(i,j)$ computes its local CDFs $(F_{\mu_{i,j,\theta_l}}(\tilde{U}_l))_{l=1}^L$ and sends them to the server, (3) which sums them to calculate the CDFs $(F_{\mu_{i,\theta_l}}(\tilde{U}_l))_{l=1}^L$, $i=1,2$ and uses them  to compute the value of $\widehat{\SW}_{1,L}(\mu_1,\mu_2)$. Of course, this algorithm can be adapted with minor changes to the CDF-based $\SW_2$ estimators as well as their smoothed variants.

\paragraph{Federated learning} Federated learning is a machine learning paradigm in which several clients collaborate to train a global model using locally stored data without sharing it \citep{kairouz2021advancesopenproblemsfederated}. While there exists a number of works aiming to devise dedicated algorithms for computing in a federated setting optimal transport distances such as the Wasserstein distance \citep{rakotomamonjy2024federated} or the Sinkhorn divergence \citep{kulcsar2025federated}, it is notable that the parallel algorithm described in the previous paragraph can be straightforwardly adapted to a federated setting. Indeed, in this case, the $p = p_1 + p_2$ devices that the computation is parallelized over are not computation units within a single machine or cluster, but rather distinct clients with whom the central server communicates through a network, each holding a private dataset $\mu_{i,j}$. It is thus possible to compute the CDF-based $\SW$ estimators in a federated setting. Moreover, while there may be privacy concerns regarding this algorithm (notably regarding the information revealed by the transmission of the local CDFs $F_{\mu_{i,j,\theta_l}}$), it can be given formal privacy guarantees in terms of $(\veps,\delta)$-differencial privacy through the addition of a noising scheme. We refer to \Cref{appendix:sec:fed_sw} for a detailed description of the federated algorithms, pseudocode, and analysis of their privacy.

\paragraph{Gaussian mixtures} Although we focused mainly in \Cref{sec:cdf_estimators} on the case where the compared measures $\mu$ and $\nu$ are empirical measures, it must be noted that the CDF-based estimators defined in \Cref{def:sw1_alt_estimator}, \Cref{def:sw2_alt_estimator} and \Cref{def:sw2_alt_estimator_k} only require that the CDFs $F_{\mu_\theta}$ and $F_{\nu_\theta}$ are tractable. In fact, one strength of these estimators, compared to the usual quantile-based estimators $\widetilde{\SW}_p$, is that they are well-suited to the case where $\mu$ and $\nu$ have a form such that their CDFs $F_\mu$ and $F_\nu$ are more practical to compute than their quantile functions $F_\mu^{-1}$ and $F_\nu^{-1}$. This, for instance, is the case of \emph{Gaussian mixtures}. Recall that a Gaussian mixture is a probability measure $\mu \in \cP_2(\R^d)$ of the form $\mu = \sum_{k=1}^n \alpha_k \cN(m_k, \Sigma_k)$ where $\alpha_1,\ldots,\alpha_n \in [0,1]$ are such that $\sum_{i=1}^n \alpha_i = 1$, $m_1,\ldots,m_n \in \R^d$, and $\Sigma_1,\ldots,\Sigma_n \in S_d^{++}(\R)$ are symmetric positive definite matrices. Then, for every $\theta \in \bS^{d-1}$ and every $x \in \R$, the projected measure $\mu_\theta$ and its CDF $F_{\mu_\theta}(x)$ have the form
\begin{equation}
    \mu_\theta = \sum_{k=1}^n \alpha_k \cN(m_{k,\theta}, \sigma_{k,\theta}^2), \quad F_{\mu_\theta}(x) = \sum_{k=1}^n \alpha_k \Phi\left(\frac{x - m_{k,\theta}}{\sigma_{k,\theta}}\right)
\end{equation}
with $m_{k,\theta} := \sca{\theta}{m_k}$ and $\sigma_{k,\theta}^2 := \theta^T \Sigma_k \theta$ for every $k$, and with $\Phi$ the CDF of the standard normal distribution $\cN(0,1)$. In particular $\mu_\theta$ is also a Gaussian mixture. Since $\Phi$ can be efficiently numerically approximated and differentiated, we can thus directly use this expression in the computation of the CDF-based estimators, which will furthermore be straightforwardly differentiable with respect to the parameters $\alpha_k,m_k,\Sigma_k$ of the Gaussian mixture (unlike in the case of empirical measures, in \Cref{sec:cdf_estimators}, for which we needed to smooth the CDFs to obtain differentiable estimators !). On the other hand, there are no practical expression of the quantile functions $F_{\mu_\theta}^{-1}$ of the projections of a Gaussian mixture. Therefore, in order to compute the usual quantile-based estimator, as is done for example in \citep{kolouri2018slicedgaussian}, one needs to approximate the quantile functions by numerically inverting the CDFs $F_{\mu_\theta}$.

\begin{remark}
    If $\mu$ is a Gaussian mixture, the bounds $m_\theta^\pm$ defined in \Cref{sec:cdf_estimators} are infinite: indeed, the projected measure $\mu_\theta$ has full support on $\R$, so that $m_\theta^\pm = \pm\infty$. This makes it impossible to sample on $[0,1]$ or $T$ and rescale to $m_\theta^- \leq u \leq m_\theta^+$ or $m_\theta^- \leq u \leq v \leq m_\theta^+$. One possible workaround is to redefine $m_\theta^\pm$ as the $\max$ and $\min$ of $S(\mu_\theta) \cup S(\nu_\theta)$, where $S(\mu_\theta) := \spt(\mu_\theta)$ when $\mu_\theta$ has bounded support, and, when $\mu$ is a Gaussian mixture of the form $\mu = \sum_k \alpha_k \cN(m_k,\Sigma_k)$, $S(\mu_\theta) := \bigcup_{k=1}^n [m_{k,\theta} - \tau \sigma_{k,\theta}, m_{k,\theta} + \tau \sigma_{k,\theta}]$ for some fixed $\tau > 0$. In other words, we ignore the points that are not within $\tau$ standard deviations of the mean of at least one of the components of $\mu_\theta$. This slightly biases the estimators downward (as this implies truncating the integrals in \eqref{eq:sw1_alternate} and \eqref{eq:sw2_alternate}), but for $\tau$ large enough the bias should be negligible\footnote{Recall that 99.7\% of the mass of a normal distribution is within 3 standard deviations of its mean. Therefore, if $\tau = 3$, then $\mu_\theta(\widetilde{\spt}(\mu_\theta)) \geq 0.997$ for every $\theta \in \bS^{d-1}$.}.
\end{remark}

\section{Experiments}\label{sec:experiments}

In this section we describe several experiments showcasing applications of the CDF-based $\SW$ estimators. Our code will be made publicly available.

\paragraph{Generative modeling} In order to demonstrate the suitability of the smoothed CDF-based estimators for practical machine learning applications, we train a Generative Adversarial Network (GAN) on the datasets MNIST \citep{lecun2010mnist} and CelebA (rescaled to $64 \times 64$) \citep{liu2015deep}, using the $\SW_2$ distance as the discriminator loss, which we estimate using the quantile-based estimator $\widetilde{\SW}_{2,L}^2$ and the CDF-based estimator $\widehat{\SW}_{2,L,\eta_k,\veps}^2$ with $k = 1$ and $k = 0.05$. Following the rule of thumb outlined in \Cref{appendix:sec:sec:sec:exp_time_vs_acc}, we take $L = 10000$ for the quantile estimator and $L = 100000$ for the CDF estimators to obtain comparable variance, and following the results of \Cref{appendix:sec:sec:sec:tune_smoothing}, we take $\veps = 0.01$. We reuse the architectures of \citep{nguyen2022amortized} (in particular, we use spectral normalization), train for 100 epochs, with a batch size of 128, using Adam with learning rate $0.0002$. We use the Fréchet Inception Distance (FID) \citep{heusel2017gans} to evaluate the quality of the images generated by the trained models. The FID scores obtained at the end of the training runs are reported in \Cref{tab:fid_scores}. We observe that the training runs using the CDF-based estimators achieve lower FID scores than the ones using the usual quantile-based estimator\ak{stylé !}, while it does not seem that lowering the importance sampling parameter $k$ systematically improves or worsens the final FID score. We refer to \Cref{appendix:sec:details_gans} for more detailed information on the architecture of the model and on the training.

\begin{table}[H]
    \centering
    \caption{FID scores (mean and standard deviation over 5 computations) for GANs trained with different datasets and estimators of the SW distance.}
    \begin{tabular}{cccc}
        \hline
        \textbf{Estimator} & \textbf{Quantile} & \textbf{CDF} (k=1) & \textbf{CDF} (k=0.05) \\
         \hline
         \textbf{MNIST} & $5.41 \pm 0.02$ & $4.69 \pm 0.10$ & $5.00 \pm 0.09$ \\
         \hline
         \textbf{CelebA} & $10.08 \pm 0.10$ & $9.47 \pm 0.02$ & $9.34 \pm 0.07$ \\
         \hline
    \end{tabular}
    \label{tab:fid_scores}
\end{table}

\begin{figure}
    \centering
    \includegraphics[width=\linewidth]{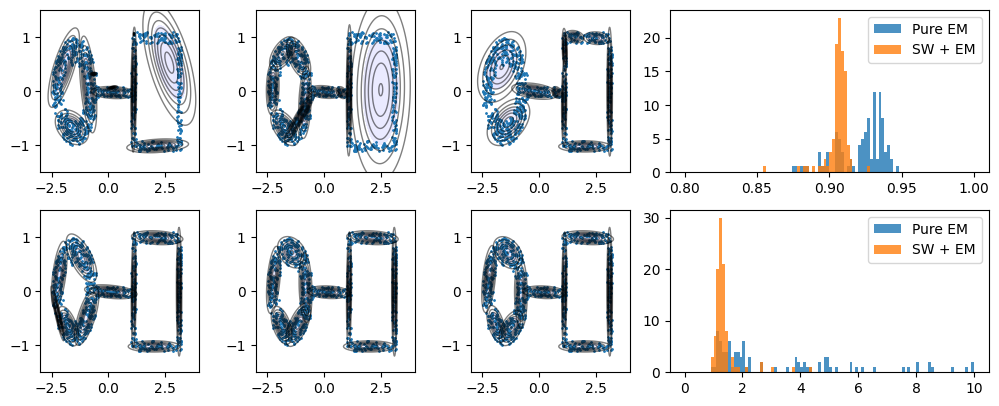}
    \caption{Left: Results of fitting a Gaussian mixture model with 10 modes to a "ring-line-square" for three random initializations of the GMM, using the EM algorithm (top row) or by minimizing the CDF-based SW estimator with a few EM steps at the end (bottom row). Right: Histograms across 100 runs of the negative log likelihood (top) and of the $\SW_2^2$ distance (bottom) between the fitted GMM and the target point cloud for both methods.}
    \label{fig:gmm_sw_vs_em_results}
\end{figure}

\paragraph{Gaussian mixture models} The standard method for fitting the parameters of a Gaussian mixture model (GMM) $\mu = \sum_{k=1}^n \alpha_k \cN(m_k, \Sigma_k)$ to a set of points $X \in (\R^d)^N$ is the expectation-maximization (EM) algorithm, which minimizes the negative log-likelihood (NLL) of $X$ with respect to $\mu$. However, while it guarantees convergence to a stationary point of the NLL, random initializations of the EM algorithm converge to bad local minima with high probability\citep{kolouri2018slicedgaussian}. Another method of fitting the GMM is to minimize the Sliced-Wasserstein distance between $\mu$ and the empirical measure representing the $X$. This method was investigated by \citet{kolouri2018slicedgaussian}, who showed that it is more robust to random initialization and yields solutions closer to the global minimum for both the $\SW$ and NLL objectives than the EM algorithm. While they used the quantile-based estimator of the $\SW$ in their analysis, we show that the benefits of their method can also be attained using the CDF-based estimators. To this end, we reproduce their experiment in \citep[Section 4.1]{kolouri2018slicedgaussian} using the CDF-based estimators instead of the quantile-based ones. We consider a dataset of two-dimensional points $X \in (\R^2)^N$ consisting of a ring, a line and a square, and we fit a GMM with $K = 10$ components to $X$ using one method among: (1) 200 steps of the EM algorithm or (2) 175 gradient descent steps of the loss $\widehat{\SW}_{2,L}^2$ using RMSProp followed by 25 steps of the EM algorithm. 

For the second method, we take $L = 1000$, and we set the RMSProp learning rates to be $0.05$ for the $m_k$ and $0.005$ for the $\Sigma_k$. The component weights $\alpha_k$ are fixed at $\alpha_k = 1/K$ during the RMSProp phase. The CDF-based $\SW$ estimator is calculated as explained in \Cref{sec:applications}, using $\tau = 3$ to compute the bounds $m_\theta^\pm$. The results are shown in \Cref{fig:gmm_sw_vs_em_results}. We see that, just as in \citep{kolouri2018slicedgaussian}, minimizing the $\SW$ leads to a better fit of the model both visually, in terms of the $\SW$ distance, and in terms of the NLL, than the EM algorithm.\ak{reporter les résultats avec le quantile-based estimator en appendix? dire que cest comparable?}

\paragraph{Boosting federated learning algorithms} One of the issues which is known to impair federated learning algorithms is the heterogeneity of the client datasets. To address this, \citet{rakotomamonjy2024federated} proposed an approach consisting of clustering clients with "similar" datasets and training a separate global model on each cluster. This approach makes use of the Optimal Transport Dataset Distance (OTDD) defined by \citet{alvarez2021dataset}, which can be computed in a federated way, thanks to the federated estimator of the $\W_2$ distance (FedWaD) introduced by \citet{rakotomamonjy2024federated}. We propose to use instead of the OTDD a $\SW$-based dataset distance such as the Maximum Mean Discrepancy with Riesz $\SW$ kernel (MMDSW) \citep{bonet2025flowing}, which we can compute in a federated way (see \Cref{appendix:sec:sec:otdd_and_mmdsw}) using our federated algorithms for the CDF-based $\SW$ estimators (see \Cref{sec:applications}). The MMDSW distance indeed has the advantage of being faster to compute than OTDD (both in practice and in terms of theoretical time complexity). Moreover, the federated CDF-based $\SW$ estimator (with which we compute MMDSW) has a number of advantages over FedWaD (with which OTDD is computed): first, the $\SW$ estimator only requires a single exchange of information between clients and server, while FedWaD requires approximating an interpolating measure over several iterations ; moreover, unlike FedWaD, the $\SW$ estimator can easily be given formal privacy guarantees (see \Cref{appendix:sec:fed_sw}).

We reimplemented the experiments of \citet{rakotomamonjy2024federated} using the MMDSW instead of the OTDD. We compute the MMDSW using the $\widehat{\SW}_{2,L}^2$ estimator with $L = 10000$. We train classifiers on the MNIST  \citep{lecun2010mnist} and CIFAR-10 \citep{krizhevsky2009learning} datasets, using the federated algorithms FedAvg, FedPer \citep{arivazhagan2019federated} and FedRep \citep{collins2021exploiting}, and with a number of clients $C \in \{20,40,100\}$. We use the spectral clustering algorithm of \texttt{scikit-learn} \citep{pedregosa2011scikit}, with $K = 5$ clusters. The results are shown in \Cref{tab:federated}. In the ``Structure" columns of the table, we simulate high heterogeneity by imposing a cluster structure on the client datasets: we assign to each client a random pair of labels chosen among $(0,1),(2,3),(4,5),(6,7),(8,9)$, and, for every label $l$, the samples of the dataset with this label are split between the clients to whom the label $l$ has been assigned. Conversely, in the ``No structure" columns, no specific cluster structure has been imposed on the client datasets. We see in particular that our boosting procedure consistently increases the performance of the trained classifiers. More detailed explanations and results can be found in \Cref{appendix:sec:details_boosting_fl}.

\begin{table*}
    \centering
    \caption{Performance for boosted FL algorithms. The average uplifts for OTDD are taken from Table 1 of \citet{rakotomamonjy2024federated} ("Affinity" columns). Note that the higher uplifts for OTDD are in part due to the fact that the baseline performance ("Vanilla") is often lower in \citet{rakotomamonjy2024federated}, so that in our reimplementation the boosting has less room to improve from.}
    
    \resizebox{\linewidth}{!}{
    \begin{tabular}{cccccccccc}
        \hline
         & & \multicolumn{4}{c}{MNIST} & \multicolumn{4}{c}{CIFAR-10} \\ \hline
         & & \multicolumn{2}{c}{Structure} & \multicolumn{2}{c}{No structure} & \multicolumn{2}{c}{Structure} & \multicolumn{2}{c}{No structure} \\ \hline
         & & Vanilla & Clustering & Vanilla & Clustering & Vanilla & Clustering & Vanilla & Clustering \\ \hline
         \multirow{3}{2em}{FedAvg} & 20 & 58.0 $\pm$ 1.7 & \textbf{99.2} $\pm$ \textbf{0.1} & 65.6 $\pm$ 5.0 & \textbf{84.0} $\pm$ \textbf{4.3} & 32.9 $\pm$ 0.9 & \textbf{82.6} $\pm$ \textbf{3.9} & 35.5 $\pm$ 1.4 & \textbf{60.6} $\pm$ \textbf{2.2}\\
         & 40 & 51.7 $\pm$ 3.6 & \textbf{99.0} $\pm$ \textbf{0.0} & 71.3 $\pm$ 3.7 & \textbf{77.0} $\pm$ \textbf{3.3} & 31.8 $\pm$ 1.1 & \textbf{84.0} $\pm$ \textbf{0.3} & 35.2 $\pm$ 1.9 & \textbf{57.5} $\pm$ \textbf{1.8}\\
         & 100 & 56.5 $\pm$ 4.5 & \textbf{98.8} $\pm$ \textbf{0.0} & 70.3 $\pm$ 2.9 & \textbf{72.5} $\pm$ \textbf{2.3} & 27.2 $\pm$ 2.3 & \textbf{82.4} $\pm$ \textbf{0.2} & 31.2 $\pm$ 1.1 & \textbf{47.7} $\pm$ \textbf{2.2}\\
         \multirow{3}{2em}{FedPer} & 20 & 98.0 $\pm$ 0.5 & \textbf{99.2} $\pm$ \textbf{0.0} & 95.7 $\pm$ 1.1 & \textbf{97.9} $\pm$ \textbf{0.3} & 81.6 $\pm$ 0.5 & \textbf{84.7} $\pm$ \textbf{0.2} & 81.9 $\pm$ 2.5 & \textbf{84.2} $\pm$ \textbf{2.4}\\
         & 40 & 97.5 $\pm$ 0.4 & \textbf{99.0} $\pm$ \textbf{0.0} & 94.7 $\pm$ 0.9 & \textbf{96.5} $\pm$ \textbf{0.4} & 80.5 $\pm$ 0.6 & \textbf{84.1} $\pm$ \textbf{0.3} & 81.7 $\pm$ 1.0 & \textbf{82.9} $\pm$ \textbf{1.1}\\
         & 100 & 95.9 $\pm$ 2.4 & \textbf{98.8} $\pm$ \textbf{0.0} & 94.7 $\pm$ 1.2 & \textbf{96.1} $\pm$ \textbf{0.6} & 76.9 $\pm$ 0.8 & \textbf{82.3} $\pm$ \textbf{0.2} & 77.8 $\pm$ 1.1 & \textbf{80.5} $\pm$ \textbf{0.7}\\
         \multirow{3}{2em}{FedRep} & 20 & 98.2 $\pm$ 0.1 & \textbf{98.8} $\pm$ \textbf{0.1} & 97.3 $\pm$ 0.2 & \textbf{97.9} $\pm$ \textbf{0.3} & 78.7 $\pm$ 0.8 & \textbf{82.6} $\pm$ \textbf{0.2} & 80.7 $\pm$ 2.6 & \textbf{83.2} $\pm$ \textbf{2.4}\\
         & 40 & 97.4 $\pm$ 0.2 & \textbf{98.7} $\pm$ \textbf{0.1} & 95.7 $\pm$ 1.2 & \textbf{97.3} $\pm$ \textbf{0.5} & 76.5 $\pm$ 0.6 & \textbf{81.3} $\pm$ \textbf{0.3} & 78.2 $\pm$ 1.2 & \textbf{81.1} $\pm$ \textbf{1.1}\\
         & 100 & 92.9 $\pm$ 4.5 & \textbf{98.4} $\pm$ \textbf{0.0} & 93.8 $\pm$ 0.5 & \textbf{96.1} $\pm$ \textbf{0.5} & 64.2 $\pm$ 1.1 & \textbf{72.9} $\pm$ \textbf{1.0} & 70.4 $\pm$ 0.8 & \textbf{72.9} $\pm$ \textbf{0.7}\\
         \hline 
         \multicolumn{2}{l}{Average uplift} &&&&&&&& \\
         \multicolumn{2}{l}{MMDSW} &  - & 15.7 $\pm$ 19.8 &  - & 6.2 $\pm$ 9.7 &  - & \textbf{24.3} $\pm$ \textbf{27.2} &  - & 15.7 $\pm$ 18.8\\
         \multicolumn{2}{l}{OTDD} & -  & \textbf{26.4} $\pm$ \textbf{27.5} & -  & \textbf{12.7} $\pm$ \textbf{14.6} & -  & 17.6 $\pm$ 19.6 & -  & \textbf{18.8} $\pm$ \textbf{16.6}\\ \hline
    \end{tabular}}
    \label{tab:federated}
\end{table*}

\section{Conclusion}

In this work, we introduced a new class of estimators of the Sliced-Wasserstein distance which leverage the properties of cumulative distribution functions of projections of measures. We discussed applications of these estimators, some novel (such as federated learning), and illustrated them with various practical experiments.

\paragraph{Limitations} The CDF-based estimators suffer from a somewhat worse performance-accuracy trade-off than the quantile-based estimators, that the importance sampling strategy does not fully alleviate (see \Cref{appendix:sec:sec:sec:exp_time_vs_acc}). We lack a complete theoretical understanding of the smoothed CDF-based estimators, such as of their bias or of their convergence properties when the smoothing parameters $\veps$ converges to $0$. While we noted that the federated $\SW$ estimators could be made differentially private through a noising scheme (\Cref{appendix:sec:fed_sw}), we did not investigate the trade-off between privacy and accuracy resulting from the implementation of such a scheme.

\paragraph{Future work} Future avenues for research could include investigating the topological and statistical properties of these estimators in the same vein as \citet{nadjahi2020,bayraktar2021strong,goldfeld2021sliced}, their robustness and computational guarantees (as in \citet{nietert2022statistical}), implementing more sophisticated variance minimization strategies to improve the performance-accuracy trade-off, or adapting them to other settings such as measures supported on manifolds, slicing unbalanced optimal transport \citep{bonet2025slicing} or data with a latent hierarchy \citep{lin2025tree}.

\bibliographystyle{plainnat}
\bibliography{biblio}

%%%%%%%%%%%%%%%%%%%%%%%%%%%%%%%%%%%%%%%%%%%%%%%%%%%%%%%%%%%%

\appendix
\onecolumn

\section{Supplementary results on the CDF estimators}

\subsection{Importance sampling} \label{appendix:sec:sec:importance_sampling}

Recall that for every $\alpha, \beta > 0$, the beta distribution $\Beta(\alpha, \beta)$ is the probability distribution on the segment $[0,1]$ whose probability density function is given by
\begin{equation}
    \Beta(x|\alpha,\beta) := \frac{x^{\alpha-1} (1-x)^{\beta-1}}{B(\alpha,\beta)}, \quad x \in (0,1)
\end{equation}
where
\begin{equation}
    B(\alpha,\beta) := \frac{\Gamma(\alpha)\Gamma(\beta)}{\Gamma(\alpha+\beta)}
\end{equation}
and $\Gamma$ is the Gamma function.

\begin{proposition} \label{prop:eta_k_gives_beta}
    Let $k \in (0,1)$ and let $X,Y$ be real random variables of law $(X,Y) \sim \eta_k$, where $\eta_k$ is the probability distribution defined in \Cref{sec:cdf_estimators}. Then $Y - X \sim \Beta(k,2)$. 
\end{proposition}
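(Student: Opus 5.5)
The plan is a direct change of variables on the triangle. Recall that $\eta_k$ has density $f_k(x,y) = k(k+1)(y-x)^{k-1}$ on $T=\{0\le x\le y\le 1\}$. Before anything else I will check that this is a probability density. The substitution $u=y-x$ gives, for each fixed gap $u\in[0,1]$, admissible values $x\in[0,1-u]$. So the total mass is $k(k+1)\int_0^1 u^{k-1}(1-u)\dd u = k(k+1)B(k,2)$. Since $B(k,2)=\Gamma(k)\Gamma(2)/\Gamma(k+2) = 1/(k(k+1))$, this mass equals $1$.

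The core step is to compute the law of $Z:=Y-X$. I will use the map $\Psi:(x,y)\mapsto(x,u)=(x,y-x)$, which sends $T$ bijectively onto $\{(x,u)\setcond u\in[0,1],\,0\le x\le 1-u\}$ with Jacobian $1$. Hence $(X,Z)$ has density $k(k+1)u^{k-1}$ on this region. Integrating out $x\in[0,1-u]$ gives the marginal density of $Z$:
\[
g(u)=k(k+1)\,u^{k-1}(1-u),\qquad u\in(0,1).
\]
Because $k(k+1)=1/B(k,2)$, this is exactly $\Beta(u\,|\,k,2)$, which proves the claim.

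An equivalent route is to test against bounded measurable functions $h$. By Fubini–Tonelli, $\E[h(Y-X)]=\int_0^1 h(u)\,k(k+1)u^{k-1}(1-u)\dd u$, and I may present the argument this way for rigor.

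The only delicate point is the integrability of the singularity $u^{k-1}$ at $u=0$ when $k<1$. Since $k>0$, the function is integrable and Tonelli applies to nonnegative integrands, so nothing beyond the Beta-function identity $B(k,2)=1/(k(k+1))$ is needed. The same computation also covers $k=1$, where $\eta_1$ is uniform on $T$.
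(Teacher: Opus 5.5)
Your proposal is correct and follows essentially the same route as the paper: substitute $u=y-x$, integrate out $x$ over $[0,1-u]$ by Fubini--Tonelli, and use $B(k,2)=1/(k(k+1))$ to recognize the $\Beta(k,2)$ density (the paper writes it with test functions $\varphi$, as in your alternative). Your justification of that identity via $\Gamma(k+2)=k(k+1)\Gamma(k)$ is cleaner than the factorial notation the paper uses for non-integer $k$, and your explicit normalization check is a harmless addition.
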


\begin{proof}
    Since we have 
    \begin{equation}
        B(k,2) = \frac{\Gamma(k)\Gamma(2)}{\Gamma(k+2)} = \frac{(k-1)!}{(k+1)!} = \frac{1}{k(k+1)}
    \end{equation}
    (recall that $\Gamma(n) = (n-1)!$ for every integer $n > 0$), we have for every $x \in (0,1)$ that $\Beta(x|k,2) = k(k+1) x^{k-1} (1-x)$. Now, let $\varphi : [0,1] \to \R_+$ be any Borel measurable function, then
    \begin{align}
        \E[\varphi(Y-X)] &= \int_T \varphi(y-x) \dd\eta_k(x,y) = \int_0^1 \int_x^1 \varphi(y-x) k(k+1)(y-x)^{k-1} \dd y \dd x \\
        &= \int_0^1 \int_0^{1-x} \varphi(u) k(k+1) u^{k-1} \dd u \dd x \\
        &= \int_0^1 \int_0^{1-u} \varphi(u) k(k+1) u^{k-1} \dd x \dd u \\
        &= \int_0^1 \varphi(u) k(k+1) u^{k-1} (1-u) \dd u = \int_0^1 \varphi(u) \Beta(u|k,2) \dd u \\
        &= \E_{U \sim \Beta(k,2)}[\varphi(U)].
    \end{align}
    where we made the change of variables $u = y-x$ in the second line. Therefore, we indeed have $Y-X \sim \Beta(k,2)$.
\end{proof}

This proposition shows that the distributions $\eta_k$ are good candidates to be used with the $\widehat{\SW}_{2,L,\eta}^2$. Indeed, we expect this estimator to have decreased variance (compared to the estimator $\widehat{\SW}_{2,L}^2$) when the distribution $\eta$, which it uses to sample from the triangle $T = \{0 \leq x \leq y \leq 1\}$, gives more weight to pairs $(x,y) \in T$ where the gap $y-x$ is small. This requirement is satisfied by the $\eta_k$ when $0 < k < 1$: indeed, by the previous proposition, if $(X,Y) \sim \eta_k$, then $Y-X$ follows the law $\Beta(k,2)$, which has density $\propto x^{k-1}(1-x)$ on $[0,1]$. Since $x^{k-1} \to +\infty$ when $x \to 0^+$, this law is concentrated near 0, and in fact the closer $k$ is to 0, the more it is concentrated.

We now provide a condition on the measures $\mu,\nu$ under which using the distributions $\eta_k$ does lead to a decrease in variance of the CDF-based estimator. We first have the following lemma:

\begin{lemma} \label{lemma:importance_sampling_lower_variance}
    Let $L > 0$, $\veps \in (0,e^{-1/2})$, and $\mu, \nu \in \cP(\R^d)$ be compactly supported such that for every $\theta \in \bS^{d-1}$ and $(u,v) \in T$ with $v-u > \veps$, one has $\Delta^{(2)}_{\mu,\nu}(\theta,m_\theta^-+r_\theta u,m_\theta^- + r_\theta v) = 0$ (recall the notations $\Delta^{(2)}_{\mu,\nu}$, $m_\theta^\pm$ and $r_\theta$ from \Cref{sec:cdf_estimators}). Then there exists $k_0 = k_0(\veps) < 1$ such that the function $k \in (0,1] \mapsto \mathrm{Var}(\widehat{\SW}_{2,L,\eta_k}^2(\mu,\nu))$ is nondecreasing on $[k_0,1]$.
\end{lemma}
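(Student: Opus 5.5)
Since the $L$ summands of $\widehat{\SW}^2_{2,L,\eta_k}$ are i.i.d., we have $\mathrm{Var}(\widehat{\SW}^2_{2,L,\eta_k}) = \frac1L\big(M(k) - \SW_2^4(\mu,\nu)\big)$, where
\begin{equation*}
M(k) := \E\Big[\tfrac{4r_\theta^4}{f_k(U,V)^2}\Delta^{(2)}_{\mu,\nu}(\theta,\tilde U,\tilde V)^2\Big].
\end{equation*}
Here the expectation is over $\theta\sim\cU(\bS^{d-1})$ and $(U,V)\sim\eta_k$, and $f_k(x,y)=k(k+1)(y-x)^{k-1}$. The mean $\SW_2^2$ does not depend on $k$ (unbiasedness), so it suffices to show that $k\mapsto M(k)$ is nondecreasing on some $[k_0,1]$.

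We first rewrite $M(k)$ as a deterministic integral. Cancelling one factor of $f_k$ against the density gives
\begin{equation*}
M(k) = \int_{\bS^{d-1}}\int_T \frac{4r_\theta^4\,\Delta_\theta(u,v)^2}{k(k+1)(v-u)^{k-1}}\dd u\dd v\dd\theta,
\end{equation*}
with $\Delta_\theta(u,v):=\Delta^{(2)}_{\mu,\nu}(\theta,m_\theta^-+r_\theta u,m_\theta^-+r_\theta v)$. By hypothesis the integrand vanishes when $v-u>\veps$, so we may restrict to $0\le v-u\le\veps$. Set $g(w):=\int_{\bS^{d-1}}\int_{\{v-u=w\}}4r_\theta^4\Delta_\theta^2$, a nonnegative function supported on $[0,\veps]$ and bounded, since $\Delta\le 2$ and $r_\theta$ is bounded by compact support. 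The change of variables $w=v-u$ then gives
\begin{equation*}
M(k)=\int_0^{\veps} h_k(w)\,g(w)\dd w,\qquad h_k(w):=\frac{w^{1-k}}{k(k+1)}.
\end{equation*}

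It therefore suffices to show that $\partial_k h_k(w)\ge 0$ for every $w\in(0,\veps]$ and every $k\in[k_0,1]$. Differentiation under the integral is justified by dominated convergence, since $g$ is bounded and the relevant powers $w^{1-k}$ and $w^{1-k}|\log w|$ are integrable near $0$ for $k\le1$. We compute
\begin{equation*}
\partial_k \log h_k(w) = -\log w - \frac1k - \frac1{k+1}.
\end{equation*}
For $w\le\veps$, we have $-\log w\ge -\log\veps$, which is strictly greater than $1/2$ because $\veps<e^{-1/2}$. The map $k\mapsto \frac1k+\frac1{k+1}$ is decreasing and equals $3/2$ at $k=1$.

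This exposes the main obstacle. At $k=1$, positivity of $\partial_k h_k$ requires $-\log\veps\ge 3/2$, i.e.\ $\veps\le e^{-3/2}$, which is not what the hypothesis gives. So either the threshold in the statement should be $e^{-3/2}$, or the normalisation differs (for instance if the $\frac{2}{f}$ factor is interpreted so that only $\frac1k$ appears, which would give exactly the threshold $-\log\veps>1$). I would first recheck the constants of $f_k$ against the definition of $\eta_k$ and the factor $2$ in \Cref{def:sw2_alt_estimator_k}. With the correct constant $c$ in place of $\frac1k+\frac1{k+1}$, the plan is to choose $k_0<1$ such that $c(k_0)\le -\log\veps$, which is possible by continuity and strict inequality at $k=1$. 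Then $\partial_k h_k\ge0$ on $(0,\veps]\times[k_0,1]$, hence $M$ and the variance are nondecreasing on $[k_0,1]$. If the constant cannot be reconciled, I would prove the statement with the adjusted threshold.
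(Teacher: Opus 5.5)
\paragraph{There is a genuine gap in the last step.} Your reduction is fine. $M(k)=\int_0^{\veps}h_k(w)g(w)\,\mathrm{d}w$ with $h_k(w)=w^{1-k}/(k(k+1))$ is exactly the paper's $\Phi(k)$. So the constants in $f_k$ and the factor $2$ are consistent, and the threshold $e^{-1/2}$ in the statement is correct, not a typo.

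The gap is where you try to get $M'(k)\ge0$ from pointwise nonnegativity of $\partial_k h_k$ on $(0,\veps]$. To obtain some $k_0<1$, this requires $\veps<e^{-3/2}$. For $\veps\in[e^{-3/2},e^{-1/2})$ your plan therefore proves nothing, and proving the lemma ``with the adjusted threshold'' gives a strictly weaker result.

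In fact no argument that treats $g$ as an arbitrary bounded nonnegative function on $[0,\veps]$ can succeed in that range. If $g$ were concentrated near $w=\veps>e^{-3/2}$, then $M'(1)<0$.

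The missing ingredient is structural: $g$ is \emph{nonincreasing}. This holds because $\Delta^{(2)}_{\mu,\nu}(\theta,u,v)$ is nondecreasing in $u$ and nonincreasing in $v$, by monotonicity of CDFs. So for fixed $u$ the integrand $\Delta_\theta(u,u+w)^2$ decreases in $w$, while the range $u\in[0,1-w]$ of the segment $\{v-u=w\}$ shrinks.

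\paragraph{How monotonicity closes the gap.} You now only need the \emph{cumulative} integrals of $\partial_k h_k$ to be nonnegative. Set
\begin{equation*}
H_k(s):=\int_0^s\partial_k h_k(w)\,\mathrm{d}w=\partial_k\Big(\frac{s^{2-k}}{k(k+1)(2-k)}\Big)=\frac{s^{2-k}}{k(k+1)(2-k)}\Big(-\ln s-\frac1k-\frac1{k+1}+\frac1{2-k}\Big).
\end{equation*}
Since $H_k(0)=0$ and $g(1)=0$, a Stieltjes integration by parts gives $M'(k)=\int_{[0,\veps]}H_k(s)\,\mathrm{d}(-g)(s)$ with $\mathrm{d}(-g)\ge0$.

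Equivalently, use the layer-cake formula $g=\int_0^\infty\mathbf{1}_{\{g>t\}}\,\mathrm{d}t$. Each set $\{g>t\}$ is an interval starting at $0$, of length $s_t\le\veps$. This gives $M'(k)=\int_0^\infty H_k(s_t)\,\mathrm{d}t$.

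Hence $M'(k)\ge0$ whenever $H_k\ge0$ on $[0,\veps]$, that is, whenever $\veps\le r_k:=\exp\big(\frac1{2-k}-\frac1k-\frac1{k+1}\big)$. The extra term $\frac1{2-k}$, which equals $1$ at $k=1$, is precisely what moves the threshold from $e^{-3/2}$ to $r_1=e^{-1/2}$.

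Since $k\mapsto r_k$ is continuous and increasing, with $r_k\to0$ as $k\to0$, you can choose $k_0$ with $r_{k_0}=\veps$. This is the paper's proof; its antiderivative $G_k$ equals $-k(k+1)H_k$.
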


\begin{proof}
    Let $k \in (0,1]$. Recall that $\dd\eta_k(u,v) = f_k(u,v) \dd u \dd v$ with $f_k(u,v) = k(k+1)(v-u)^{k-1}$. Recall from \eqref{eq:sw2_alt_estimator_2} that
    \begin{equation}
        \widehat{\SW}^2_{2,L,\eta_k} = \frac{1}{L} \sum_{l=1}^L \frac{2r_{\theta_l}^2}{f_k(U_l,V_l)} \Delta^{(2)}_{\mu,\nu}(\theta_l,\tilde{U}_l,\tilde{V}_l)
    \end{equation}
    with $(\theta_l,U_l,V_l) \sim \cU(\bS^{d-1}) \times \eta_k$ i.i.d., with the notation $\tilde{U}_l = \Rescale(U_l,m_{\theta_l}^-,m_{\theta_l}^+)$ and similarly for $\tilde{V}_l$. Therefore, we have
    \begin{align}
        \mathrm{Var}\left(\widehat{\SW}^2_{2,L,\eta_k}(\mu,\nu)\right) &= \frac{1}{L} \mathrm{Var}\left(\frac{2r_\theta^2}{f_k(u,v)} \Delta^{(2)}_{\mu,\nu}(\theta,\tilde{U},\tilde{V})\right) \\
        &= \frac 1L \left(\E\left[\left(\frac{2r_\theta^2}{f_k(u,v)} \Delta^{(2)}_{\mu,\nu}(\theta,\tilde{U},\tilde{V})\right)^2\right] - \SW_2^4(\mu,\nu)\right)
    \end{align}
    where $(\theta,U,V) \sim \cU(\bS^{d-1}) \times \eta_k$ and $\tilde{U} = \Rescale(U,m_\theta^-,m_\theta^+)$ and similarly for $\tilde{V}$, and we used the unbiasedness of $\widehat{\SW}^2_{2,L,\eta_k}$ in the second line. Using the fact that $f_k/2$ is the density of $\eta_k$ with respect to $\cU(T)$ (as $\dd\cU(T)(u,v) = 2\dd u \dd v$), we have in fact
    \begin{equation}
        \mathrm{Var}\left(\widehat{\SW}^2_{2,L,\eta_k}(\mu,\nu)\right) = \frac 1L \left(\E\left[\frac{2r_\theta^4}{f_k(u,v)} \Delta^{(2)}_{\mu,\nu}(\theta,\tilde{U},\tilde{V})^2\right] - \SW_2^4(\mu,\nu)\right)
    \end{equation}
    where this time $(\theta,U,V) \sim \cU(\bS^{d-1} \times T)$. If we fix
    \begin{equation}
        \Phi(k) := \E_{(\theta,U,V) \sim \cU(\bS^{d-1} \times T)} \left[\frac{2r_\theta^4}{f_k(u,v)} \Delta^{(2)}_{\mu,\nu}(\theta,\tilde{U},\tilde{V})^2\right],
    \end{equation}
    then we have $\mathrm{Var}\left(\widehat{\SW}^2_{2,L,\eta_k}(\mu,\nu)\right) = \frac 1L (\Phi(k) - \SW_2^4(\mu,\nu))$ so the dependency in $k$ of the variance of the importance sampling estimator is determined by $\Phi$. Now, we have
    \begin{align}
        \Phi(k) &= \int_{\bS^{d-1}} \int_0^1 \int_u^1 \frac{2r_\theta^4}{f_k(u,v)} \Delta_{\mu,\nu}^{(2)}(\theta, \tilde{u}, \tilde{v})^2 2 \dd v \dd u \dd \theta \\
        &= \frac{1}{k(k+1)} \int_{\bS^{d-1}} \int_0^1 \int_u^1 \frac{4r_\theta^4}{(v-u)^{k-1}} \Delta_{\mu,\nu}^{(2)}(\theta, \tilde{u}, \tilde{v})^2 \dd v \dd u \dd \theta
    \end{align}
    where $\tilde{u}$ is shorthand for $\Rescale(u,m_\theta^-,m_\theta^+)$. Thus, taking the change of variable $r = v-u$, we obtain
    \begin{align}
        \Phi(k) &= \frac{1}{k(k+1)} \int_{\bS^{d-1}} \int_0^1 \int_0^{1-u} \frac{4r_\theta^4}{r^{k-1}} \Delta_{\mu,\nu}^{(2)}(\theta, \tilde{u}, \widetilde{u+r})^2 \dd r \dd u \dd \theta \\
        &= \frac{1}{k(k+1)} \int_0^1 \frac{1}{r^{k-1}} \int_{\bS^{d-1}} \int_0^{1-r} 4r_\theta^4 \Delta_{\mu,\nu}^{(2)}(\theta, \tilde{u}, \widetilde{u+r})^2 \dd u \dd \theta \dd r \\
        &= \frac{1}{k(k+1)} \int_0^1 \frac{1}{r^{k-1}} \Delta(r) \dd r
    \end{align}
    where $\Delta : [0,1] \to \R_+$ is the function defined by
    \begin{equation}
        \Delta(r) := \int_{\bS^{d-1}} \int_0^{1-r} 4r_\theta^4 \Delta_{\mu,\nu}^{(2)}(\theta, \tilde{u}, \widetilde{u+r})^2 \dd u \dd \theta.
    \end{equation}
    Since $\Delta_{\mu,\nu}^{(2)}(\theta,u,v)$ is nondecreasing in $u$ and nonincreasing in $v$, it is clear that $\Delta$ is nonincreasing. Moreover, the assumption of the proposition implies that $\Delta(r) = 0$ whenever $r > \veps$. Differentiating $\Phi$ with respect to $k$, we obtain
    \begin{equation}
        \Phi'(k) = -\frac{1}{k(k+1)} \int_0^1 \left(\frac{1}{k} + \frac{1}{k+1} + \ln(r)\right) \frac{\Delta(r)}{r^{k-1}} \dd r.
    \end{equation}
    Noticing that the function $g_k(r) = \frac{1}{k} + \frac{1}{k+1} + \ln(r)$ has on $[0,1]$ the antiderivative $G_k$ defined by
    \begin{equation}
        G_k(r) = \left(\frac{1}{k} + \frac{1}{k+1}\right) \frac{r^{2-k}}{2-k} + \frac{r^{2-k}}{(2-k)^2} ((2-k)\ln(r) - 1),
    \end{equation}
    after an integration by parts\footnote{Since $\Delta$ is nonincreasing, it has bounded variation, so the integration by parts makes sense if we take $\Delta'$ to be its distributional derivative, which is a negative measure.}, since $G_k(0) = 0$ and $\Delta(1) = 0$ (since $\veps < e^{-1/2} < 1$), we obtain
    \begin{equation}
        \Phi'(k) = \frac{1}{k(k+1)} \int_0^1 G_k(r) \Delta'(r) \dd r.
    \end{equation}
    Now, it is not difficult to see that on the interval $(0,1)$, $G_k$ is negative on $(0,r_k)$, zero at $r_k$, and positive on $(r_k,1)$, where $r_k$ is defined by
    \begin{equation}
        r_k := \exp\left(\frac{1}{2-k} - \frac{1}{k} - \frac{1}{k+1}\right)
    \end{equation}
    The function $k \in (0,1] \to r_k$ is continuous and increasing, with $r_1 = e^{-1/2}$ and $r_k \to 0$ when $k \to 0$. Since $\veps < e^{-1/2}$, there exists $k_0 \in (0,1]$ such that $r_{k_0} = \veps$. Thus, for every $k \in (k_0,1]$, we have $r_k > r_{k_0} = \veps$. Since $\Delta$ is nonincreasing, and $\Delta(r) = 0$ for every $r \geq \veps$, this implies that $\Delta'$ is a negative measure supported on $[0,\veps] \subseteq [0,r_k]$, so that
    \begin{equation}
        \Phi'(k) = \frac{1}{k(k+1)} \int_0^1 G_k(r) \Delta'(r) \dd r = \frac{1}{k(k+1)} \int_0^{r_k} G_k(r) \Delta'(r) \dd r \geq 0
    \end{equation}
    since $G_k < 0$ on $(0,r_k)$. Thus $\Phi$ is nondecreasing on $[k_0,1]$, where $k_0$ only depends on $\veps$, and this finishes the proof.
\end{proof}

\begin{remark}
    The proof of \Cref{lemma:importance_sampling_lower_variance} works for arbitrary $m_\theta^\pm$ as long as $\sup(\mu_\theta) \cup \sup(\nu_\theta) \subseteq [m_\theta^-, m_\theta^+]$ for every $\theta \in \bS^{d-1}$, since this is the condition for $\widehat{\SW}^2_{2,L,\eta}(\mu,\nu)$ to be unbiased.
\end{remark}

If $\mu, \nu \in \cP(\R^d)$ are two compactly supported measures, then the $\W_\infty$ distance between them is defined as
\begin{align}
    \W_\infty(\mu,\nu) &:= \inf_{\gamma \in \Pi(\mu,\nu)} \mathrm{esssup}_{(x,y) \sim \gamma} |x-y| \\
    &:= \lim_{p \to \infty} \W_p(\mu,\nu)
\end{align}
(these definitions are equivalent). It is a distance on the set of compactly supported measures, whose induced topology is stronger than the topology of weak convergence (we refer to \citep[Section 2]{bobkovonedimensional2019} for more details). If $\mu,\nu \in \cP(\R^d)$ and $\theta \in \bS^{d-1}$, since $(P_\theta,P_\theta)_\#\gamma \in \Pi(\mu_\theta,\nu_\theta)$ for any $\gamma \in \Pi(\mu,\nu)$ and $|\sca{x}{\theta}-\sca{y}{\theta}| \leq |x-y|$ for every $x,y \in \R^d$, we have $\W_\infty(\mu_\theta,\nu_\theta) \leq \W_\infty(\mu,\nu)$. Then we have the following lemma:

\begin{lemma} \label{lemma:w_infty_controls_imp_sampling_condition}
    Let $\mu, \nu \in \cP(\R^d)$ be compactly supported. Then, for every $\theta \in \bS^{d-1}$ and $u,v \in \R$ such that $v - u \geq \W_\infty(\mu,\nu)$, we have $\Delta^{(2)}_{\mu,\nu}(\theta,u,v) = 0$.
\end{lemma}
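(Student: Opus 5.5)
We first reduce to the one-dimensional case. Fix $\theta \in \bS^{d-1}$ and write $w := \W_\infty(\mu_\theta,\nu_\theta)$. As noted just before the statement, $w \leq \W_\infty(\mu,\nu)$, so the hypothesis $v-u \geq \W_\infty(\mu,\nu)$ gives $v - u \geq w$. It therefore suffices to show that both $(F_{\mu_\theta}(u) - F_{\nu_\theta}(v))^+$ and $(F_{\nu_\theta}(u) - F_{\mu_\theta}(v))^+$ vanish whenever $v \geq u + w$. By symmetry of the two terms in $\mu$ and $\nu$, it is enough to prove $F_{\mu_\theta}(u) \leq F_{\nu_\theta}(u+w)$ for every $u \in \R$. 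The desired inequality at $v$ then follows from monotonicity of $F_{\nu_\theta}$, since $v \geq u+w$.

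To prove $F_{\mu_\theta}(u) \leq F_{\nu_\theta}(u+w)$ we use an optimal coupling for $\W_\infty$. Since $\mu_\theta,\nu_\theta$ are compactly supported, $\Pi(\mu_\theta,\nu_\theta)$ is weakly compact. The map $\gamma \mapsto \mathrm{esssup}_\gamma |x-y| = \sup_{(x,y)\in\spt\gamma}|x-y|$ is lower semicontinuous for weak convergence: if $|x-y|>c$ at a support point, an open neighbourhood carries positive mass, and this persists under weak limits by the portmanteau theorem. Hence the infimum is attained by some $\gamma$ with $|x - y| \leq w$ for $\gamma$-a.e. $(x,y)$.

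If one prefers to avoid this attainment step, we can instead take $\gamma$ with esssup at most $w+\delta$ and obtain $F_{\mu_\theta}(u) \leq F_{\nu_\theta}(u+w+\delta)$. Letting $\delta \to 0^+$ and using right-continuity of $F_{\nu_\theta}$ gives the same conclusion. This is the cleaner route, and it is the one we would actually write.

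With such a $\gamma$ in hand, the conclusion is a one-line computation:
\begin{equation*}
F_{\mu_\theta}(u) = \gamma(\{x \leq u\}) \leq \gamma(\{y \leq u + w + \delta\}) = F_{\nu_\theta}(u+w+\delta),
\end{equation*}
because $x \leq u$ and $|x-y| \leq w+\delta$ imply $y \leq u+w+\delta$, up to a $\gamma$-null set. The main (mild) obstacle is handling the possible non-attainment of the infimum or the non-strict inequality at the boundary $v - u = \W_\infty$. The $\delta$-approximation combined with right-continuity of the CDF resolves it, which is exactly why the statement can allow $v-u \geq \W_\infty(\mu,\nu)$ rather than requiring a strict inequality.
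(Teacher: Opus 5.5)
Your proof is correct, and it differs from the paper's mainly in how the key one-dimensional inequality is obtained. The paper argues by contraposition. Assuming $F_{\mu_\theta}(u) > F_{\nu_\theta}(v)$, it invokes the Bobkov--Ledoux characterization (their Theorem 2.12),
$\W_\infty(\mu_\theta,\nu_\theta) = \inf \{h \geq 0 \setcond \forall x \in \R,\ F_{\nu_\theta}(x-h) \leq F_{\mu_\theta}(x) \leq F_{\nu_\theta}(x+h) \}$,
and concludes $v-u < \W_\infty(\mu_\theta,\nu_\theta) \leq \W_\infty(\mu,\nu)$.

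You instead prove directly the only half of that characterization that is needed, $F_{\mu_\theta}(u) \leq F_{\nu_\theta}(u+w)$. You get it from a near-optimal coupling together with right-continuity of $F_{\nu_\theta}$. This makes your argument self-contained, and it treats the boundary case $v-u = \W_\infty(\mu,\nu)$ explicitly.

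In the paper, the strict inequality $v-u<\W_\infty(\mu_\theta,\nu_\theta)$ silently uses that the infimum in the Bobkov--Ledoux formula is attained, i.e.\ that the set of admissible $h$ is closed. That closedness itself follows from right-continuity of the CDFs, which is exactly the ingredient your $\delta \to 0^+$ step supplies.

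The paper's route is shorter given the citation and comes with the full two-sided CDF characterization of $\W_\infty$. However, nothing beyond your one-sided inequality (and its symmetric counterpart) is actually used.
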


\begin{proof}
    Let $\theta \in \bS^{d-1}$ and $u,v \in \R$ such that $v \geq u$ and $\Delta^{(2)}_{\mu,\nu}(\theta,u,v) > 0$. Then, without loss of generality, we may assume that $F_{\mu_\theta}(u) > F_{\nu_\theta}(v)$. However, by \citep[Theorem 2.12]{bobkovonedimensional2019}, we have
    \begin{equation}
        \W_\infty(\mu_\theta,\nu_\theta) = \inf \{h \geq 0 \setcond \forall x \in \R, F_{\nu_\theta}(x-h) \leq F_{\mu_\theta}(x) \leq F_{\nu_\theta}(x+h) \}.
    \end{equation}
    This implies that $v-u < \W_\infty(\mu_\theta,\nu_\theta) \leq \W_\infty(\mu,\nu)$. This finishes the proof.
\end{proof}

\begin{proposition} \label{prop:importance_sampling_lower_variance}
    Let $R > 0$, $L > 0$ and $\mu, \nu \in \cP(B(0,R))$ be such that $\W_\infty(\mu,\nu) < 2Re^{-1/2}$. Assume that, in the estimator $\widehat{\SW}^2_{2,L,\eta}$, the values $m_\theta^\pm$ are set to $\pm R$ for every $\theta \in \bS^{d-1}$. Then there exists $k_0 = k_0(R,\W_\infty(\mu,\nu)) < 1$ such that the function $k \in (0,1] \mapsto \mathrm{Var}(\widehat{\SW}_{2,L,\eta_k}^2(\mu,\nu))$ is nondecreasing on $[k_0,1]$.
\end{proposition}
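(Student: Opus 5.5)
The plan is to reduce the statement to \Cref{lemma:importance_sampling_lower_variance}, using \Cref{lemma:w_infty_controls_imp_sampling_condition} to verify its hypothesis. By the remark following \Cref{lemma:importance_sampling_lower_variance}, that lemma remains valid when $m_\theta^\pm$ are fixed to $\pm R$. This is legitimate because $\mu,\nu$ are supported in $B(0,R)$. Hence for every $\theta\in\bS^{d-1}$ we have $\spt(\mu_\theta)\cup\spt(\nu_\theta)\subseteq[-R,R]$, since $|\sca{x}{\theta}|\le|x|\le R$. In this setting $r_\theta = 2R$ for all $\theta$, and the rescaled points are $-R+2Ru$ and $-R+2Rv$.

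Set $\veps := \W_\infty(\mu,\nu)/(2R)$. By assumption $\veps < e^{-1/2}$. If $\W_\infty(\mu,\nu)=0$, then $\mu=\nu$ and the integrand vanishes identically, so any $\veps\in(0,e^{-1/2})$ works (e.g.\ replace $\veps$ by a small positive number). In general, one may take $\veps' \in (\veps, e^{-1/2})$, or simply $\veps'=\max(\veps,\text{small})$, to guarantee the strict positivity required in the lemma.

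Now check the hypothesis. Let $(u,v)\in T$ with $v-u>\veps'\geq\veps$. The rescaled gap is $(m_\theta^-+r_\theta v)-(m_\theta^-+r_\theta u)=2R(v-u)>2R\veps=\W_\infty(\mu,\nu)$. By \Cref{lemma:w_infty_controls_imp_sampling_condition}, $\Delta^{(2)}_{\mu,\nu}(\theta,m_\theta^-+r_\theta u,m_\theta^-+r_\theta v)=0$ for every $\theta$. This is exactly the hypothesis of \Cref{lemma:importance_sampling_lower_variance} with parameter $\veps'$.

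Applying that lemma gives $k_0=k_0(\veps')<1$ such that $k\mapsto\mathrm{Var}(\widehat{\SW}^2_{2,L,\eta_k}(\mu,\nu))$ is nondecreasing on $[k_0,1]$. Since $\veps'$ is determined by $\W_\infty(\mu,\nu)/(2R)$, $k_0$ depends only on $R$ and $\W_\infty(\mu,\nu)$.

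The only delicate point is the bookkeeping. One must confirm that the proof of the lemma (the integration by parts with $\Delta(1)=0$, and unbiasedness) goes through with the fixed bounds $\pm R$ rather than the data-dependent $m_\theta^\pm$; this is what the remark asserts. One must also handle the degenerate case $\W_\infty=0$, where the strict inequality $\veps>0$ is needed.
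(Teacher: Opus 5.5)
Your proposal is correct and follows the paper's proof essentially verbatim: set $\veps = \W_\infty(\mu,\nu)/(2R)$, use $r_\theta = 2R$ so that a gap $v-u>\veps$ becomes a rescaled gap exceeding $\W_\infty(\mu,\nu)$, invoke \Cref{lemma:w_infty_controls_imp_sampling_condition} to verify the hypothesis of \Cref{lemma:importance_sampling_lower_variance}, and apply that lemma. Your extra treatment of the degenerate case $\W_\infty(\mu,\nu)=0$ is a small but legitimate refinement that the paper's proof passes over: there $\veps=0$ lies outside the range $(0,e^{-1/2})$ the lemma requires, and you replace it by a small positive $\veps'$, which only weakens the hypothesis to be checked.
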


\begin{proof}
    Let $\veps := \W_\infty(\mu,\nu)/(2R) < e^{-1/2}$. We only need to show that $\mu$ and $\nu$ satisfy the condition of \Cref{lemma:importance_sampling_lower_variance} with $\veps$. Let $u,v \in T$ such that $v-u > \veps$. Let $\theta \in \bS^{d-1}$ and set $\tilde{u} := m_\theta^- + r_\theta u$ and $\tilde{v} := m_\theta^- + r_\theta v$. Then $r_\theta = m_\theta^+ - m_\theta^- = 2R$, so that $\tilde{v} - \tilde{u} = r_\theta (v-u) = 2R (v-u) > 2R \veps = \W_\infty(\mu,\nu)$, and by \Cref{lemma:w_infty_controls_imp_sampling_condition} we have $\Delta^{(2)}_{\mu,\nu}(\theta,\tilde{u},\tilde{v}) = 0$. Thus, we can apply \Cref{lemma:importance_sampling_lower_variance}, and this finishes the proof. 
\end{proof}

Thus, \Cref{prop:importance_sampling_lower_variance} asserts that, if the measures $\mu,\nu$ are close enough in terms of the $\W_\infty$ distance, there exists a $k < 1$ such that performing importance sampling with the distribution $\eta_k$ leads to a decrease of the variance of the CDF-based Sliced-Wasserstein estimator.

However, this results does not tell us the concrete value of $k$ that we should choose in practice. To guide our choice, we perform the following simple experiment. We sample two clouds of $N = 1000$ points in the plane from the standard normal distribution $\cN(0,I_2)$ and denote $\mu$ and $\nu$ their corresponding empirical measures. We then compute $10^4$ times the estimator $\widehat{\SW}_{2,L,\eta_k}^2$ with $L = 1000$ and for various values of $k$. The resulting histograms are shown in \Cref{fig:importance_sampling_k_choice}. Remembering that the estimator $\widehat{\SW}^2_{2,L,\eta_k}$ for $k = 1$ is simply the estimator $\widehat{\SW}^2_{2,L}$ without importance sampling (as $\eta_1$ is the uniform measure on $T$), we observe that we indeed achieve a lower dispersion of the values when $k < 1$, with in this case an optimum near $k \approx 0.1$. On the other hand, very low values of $k$, of the order of $k \approx 0.01$, lead to an increase of the dispersion of the estimator.

\begin{figure}
    \centering
    \includegraphics[width=\textwidth]{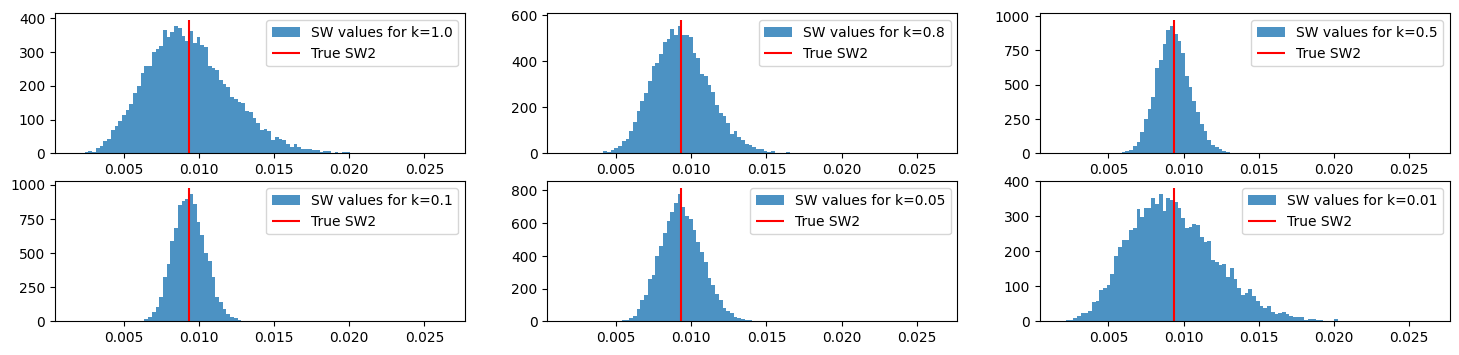}
    \caption{Distribution of the values of $\widehat{\SW}_{2,L,\eta_k}^2$ for various choices of $k$. The red lines denotes the mean of all the computations of the estimators, which is expected to be close to the true $\SW_2^2$ value.}
    \label{fig:importance_sampling_k_choice}
\end{figure}

\subsection{The smoothed estimators: a theoretical analysis} \label{appendix:sec:sec:smoothed_estimator_analysis}

Let $\veps > 0$ be a fixed smoothing parameter. Then for every $\rho \in \cP(\R)$, we recall that we defined the smoothed version of the cumulative distribution function of $\rho$ by
\begin{equation} \label{eq:app:smoothed_cdf}
    F^{(\veps)}_\rho(a) := \int \sigma(-(x-a-\sqrt{\veps})/\veps) \dd\rho(x),\quad a \in \R
\end{equation}
where $\sigma$ is the sigmoid function $\sigma(x) = 1/(1+e^{-x})$. For every $L > 0$ and $\eta \in \cP(T)$ with positive density $f$, we formally define the smoothed versions $\widehat{\SW}_{1,L,\veps}$, $\widehat{\SW}_{2,L,\veps}$ and $\widehat{\SW}_{2,L,\eta,\veps}$ of the estimators $\widehat{\SW}_{1,L}$, $\widehat{\SW}_{2,L}$ and $\widehat{\SW}_{2,L,\eta}$ by:
\begin{equation}
    \widehat{\SW}_{1,L,\veps}(\mu,\nu) := \frac 1L \sum_{l=1}^L r_{\theta_l} \Delta^{(1)}_{\mu,\nu,\veps}(\theta_l,\tilde{U}_l)
\end{equation}
with the same notations as in \Cref{def:sw1_alt_estimator} and with $(\theta_1,U_1),\ldots,(\theta_L,U_L)$ i.i.d. with $(\theta_1,U_1) \sim \cU(\bS^{d-1} \times [0,1])$,
\begin{equation}
    \widehat{\SW}_{2,L,\veps}^2(\mu,\nu) := \frac 1L \sum_{l=1}^L r^2_{\theta_l} \Delta^{(2)}_{\mu,\nu,\veps}(\theta_l,\tilde{U}_l,\tilde{V}_l)
\end{equation}
with the same notations as in \Cref{def:sw2_alt_estimator} and with $(\theta_1,U_1,V_1),\ldots,(\theta_L,U_L,V_L)$ i.i.d. with $(\theta_1,U_1,V_1) \sim \cU(\bS^{d-1} \times T)$, and
\begin{equation}
    \widehat{\SW}_{2,L,\eta,\veps}^2(\mu,\nu) := \frac 1L \sum_{l=1}^L \frac{2r^2_{\theta_l}}{f(U_l,V_l)} \Delta^{(2)}_{\mu,\nu,\veps}(\theta_l,\tilde{U}_l,\tilde{V}_l)
\end{equation}
with the same notations as in \Cref{def:sw2_alt_estimator_k} and with $(\theta_1,U_1,V_1),\ldots,(\theta_L,U_L,V_L)$ i.i.d. with $(\theta_1,U_1,V_1) \sim \cU(\bS^{d-1}) \times \eta$, and where
\begin{align}
    \Delta_{\mu,\nu,\veps}^{(1)}(\theta,u) &:= |F^{(\veps)}_{\mu_\theta}(u) - F^{(\veps)}_{\nu_\theta}(u)| \\
    \Delta_{\mu,\nu,\veps}^{(2)}(\theta,u,v) &:= (F^{(\veps)}_{\mu_\theta}(u) - F^{(\veps)}_{\nu_\theta}(v))^+ + (F^{(\veps)}_{\nu_\theta}(u) - F^{(\veps)}_{\mu_\theta}(v))^+.
\end{align}

It is straightforward to check that these estimators have expectancy
\begin{equation}
    \E[\widehat{\SW}_{1,L,\veps}(\mu,\nu)] = \int_{\bS^{d-1}} \int_{m_\theta^-}^{m_\theta^+} |F_{\mu_\theta}^{(\veps)}(u) - F_{\nu_\theta}^{(\veps)}(u)| \dd u \dd \theta
\end{equation}
and
\begin{align}
    \E[\widehat{\SW}_{2,L,\veps}^2(\mu,\nu)] = \E[\widehat{\SW}_{2,L,\eta,\veps}^2(\mu,\nu)] = 2\int_{\bS^{d-1}} & \iint_{m_\theta^- \leq u \leq v \leq m_\theta^+} (F^{(\veps)}_{\mu_\theta}(u) - F^{(\veps)}_{\nu_\theta}(v))^+ \\
    & \quad + (F^{(\veps)}_{\nu_\theta}(u) - F^{(\veps)}_{\mu_\theta}(v))^+ \dd u \dd v \dd\theta.
\end{align}

Therefore, the smoothed estimators can thus be seen as (biased) estimators of the quantities
\begin{equation}
    \SW_{1,\veps}(\mu,\nu) := \int_{\bS^{d-1}} \int_{-\infty}^{+\infty} |F_{\mu_\theta}^{(\veps)}(u) - F_{\nu_\theta}^{(\veps)}(u)| \dd u \dd \theta
\end{equation}
(for $\widehat{\SW}_{1,L,\veps}$) and
\begin{equation}
    \SW_{2,\veps}^2(\mu,\nu) := 2\int_{\bS^{d-1}} \iint_{u \leq v} (F^{(\veps)}_{\mu_\theta}(u) - F^{(\veps)}_{\nu_\theta}(v))^+ + (F^{(\veps)}_{\nu_\theta}(u) - F^{(\veps)}_{\mu_\theta}(v))^+ \dd u \dd v \dd\theta.
\end{equation}
(for $\widehat{\SW}_{2,L,\veps}$ and $\widehat{\SW}_{2,L,\eta,\veps}$). These quantities are in fact equal to a Sliced-Wasserstein distance between smoothed versions of the measures $\mu$ and $\nu$:

\begin{proposition} \label{prop:smoothed_estimators_estimate_sw_of_smoothed}
    There exists a radial probability distribution $\rho \in \cP(\R^d)$ such that
    \begin{align}
        \SW_{1,\veps}(\mu,\nu) &= \SW_1(\mu \ast \rho_\veps, \nu \ast \rho_\veps) \\
        \SW_{2,\veps}(\mu,\nu) &= \SW_2(\mu \ast \rho_\veps, \nu \ast \rho_\veps)
    \end{align}
    where $\rho_\veps := m_{\veps\#}\rho$ with $m_\veps(x) = \veps x$, and $\ast$ denotes the convolution operation between probability measures.
\end{proposition}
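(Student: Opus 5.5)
The plan is to identify $F^{(\veps)}_{\rho}$ as an ordinary CDF of a convolved measure, and then build $\rho$ as a Gaussian scale mixture whose one-dimensional projections are all logistic.

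First I rewrite the smoothed CDF as a genuine CDF. The sigmoid $\sigma$ is the CDF of the standard logistic distribution; let $Z$ be a random variable with this law. For any $\rho' \in \cP(\R)$ and $a \in \R$,
\begin{equation*}
F^{(\veps)}_{\rho'}(a) = \int \bP\left(\veps Z \leq a + \sqrt{\veps} - x\right) \dd\rho'(x) = F_{\rho' \ast \lambda_\veps}(a+\sqrt{\veps}),
\end{equation*}
where $\lambda_\veps$ is the law of $\veps Z$. In the integrals defining $\SW_{1,\veps}$ and $\SW_{2,\veps}^2$, I then substitute $u' = u+\sqrt\veps$ and $v' = v+\sqrt\veps$. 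This is a translation of both variables, so it preserves the domains $\R$ and $\{u \leq v\}$. The inner integrals become exactly the integrals of \Cref{th:bobkov_ledoux_sw1} and \Cref{th:bobkov_ledoux_sw2} for the pair $(\mu_\theta \ast \lambda_\veps, \nu_\theta \ast \lambda_\veps)$. Those theorems apply because $\mu,\nu$ have finite moments of the relevant order and the logistic law has moments of all orders. Hence the inner integrals equal $\W_1$ and $\W_2^2$ of this pair.

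It remains to find a radial $\rho \in \cP(\R^d)$ such that $P_{\theta\#}\rho$ is the standard logistic law for every $\theta$. Given such a $\rho$, scaling gives $P_{\theta\#}\rho_\veps = \lambda_\veps$. Projection commutes with convolution, so $(\mu\ast\rho_\veps)_\theta = \mu_\theta \ast \lambda_\veps$, and integrating over $\theta$ yields both identities.

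To construct $\rho$, I use the classical fact (Andrews--Mallows) that the logistic distribution is a normal variance mixture: $Z \overset{d}{=} \sqrt{V}\, g$, with $g \sim \cN(0,1)$ independent of a positive random variable $V$, related to the Kolmogorov--Smirnov distribution. I then take $\rho$ to be the law of $\sqrt{V}\,G$ with $G \sim \cN(0,I_d)$ independent of $V$. This law is radial because $G$ is rotation invariant. For any $\theta \in \bS^{d-1}$, $\sca{G}{\theta} \sim \cN(0,1)$ independently of $V$, so $\sca{\sqrt V G}{\theta} \overset{d}{=} \sqrt V g \overset{d}{=} Z$. Its moments are finite since $\E|\sqrt V G|^2 = d\,\E[V] = d\,\E[Z^2] < \infty$. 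Therefore $\mu\ast\rho_\veps$ and $\nu\ast\rho_\veps$ stay in $\cP_1$ or $\cP_2$ as needed.

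The main obstacle is justifying that logistic law is a Gaussian scale mixture. A radial measure on every $\R^d$ with a prescribed marginal exists only if the marginal's characteristic function is completely monotone in $t^2$ (Schoenberg). If I cannot simply cite this, I would verify it directly. The logistic characteristic function is $\pi t/\sinh(\pi t)$, and its partial-fraction / infinite-product expansion writes it as $\E[e^{-Vt^2/2}]$ with $V$ an explicit series of exponentials, from which $V$'s law is read off. Once this is in hand, the remaining steps are the routine substitutions described above.
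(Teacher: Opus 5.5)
Your proposal is correct and takes essentially the same route as the paper. You write $F^{(\veps)}_{\mu_\theta}$ as the CDF of $\mu_\theta$ convolved with a rescaled logistic law, shifted by $\sqrt{\veps}$, and you build $\rho$ as the Gaussian scale mixture $\sqrt{S}\,G$; your fallback derivation from $\pi t/\sinh(\pi t)=\prod_{k\ge 1}(1+t^2/k^2)^{-1}$ gives exactly the paper's choice $S=\sum_k E_k$ with independent $E_k\sim\mathrm{Exp}(k^2/2)$, and the Andrews--Mallows citation is simply another reference for the same fact (your moment check on $\rho$, needed for the Bobkov--Ledoux formulas to apply to the convolved measures, is a detail the paper leaves implicit).
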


\begin{proof}
    The sigmoid function $\sigma$ is known to be the cumulative distribution function of the standard logistic distribution, which we denote $\rho_0$. We will construct a radial probability distribution $\rho \in \cP(\R^d)$ such that $\rho_\theta = \rho_0$ for every $\theta \in \bS^{d-1}$. \newline
    Let $E_1,\ldots,E_k,\ldots$ be a sequence of independent random variables with $E_k \sim \mathrm{Exp}(k^2/2)$ for every $k > 0$, where $\mathrm{Exp}(\lambda)$ denotes the exponential probability distribution with parameter $\lambda$, and let $Z \sim \cN(0,I_d)$ be independent from the $E_k$. Set $S := \sum_{k=1}^\infty E_k$ (which converges almost surely as $\E[E_k] = 2/k^2$), and set $X := \sqrt{S}Z$. Denote by $\rho$ the law of $X$, which is clearly radial. Let $\theta \in \bS^{d-1}$, we want to prove that $\rho_0 = \rho_\theta$, which is equivalent to proving that $\sca{X}{\theta}$ has law $\rho_0$. For this, all we need to do is to prove that the characteristic function of $\sca{X}{\theta}$ is equal to the characteristic function of $\rho_0$, which is $t \in \R \mapsto \pi t/\sinh(\pi t)$. For every $t \in \R$, we have
    \begin{align}
        \E[e^{it\sca{\theta}{X}}] &= \E[e^{it\sqrt{S}\sca{\theta}{Z}}] \\
        &= \E[\E[e^{it\sqrt{S}\sca{\theta}{Z}}|S]] \\
        &= \E[e^{-\frac{1}{2}t^2S}] \\
        &= \prod_{k=1}^\infty \E[e^{-\frac{1}{2}t^2 E_k}] \label{eq:app:l_98}
    \end{align}
    where the $\E[\cdot|S]$ in the second line denotes the conditional expectancy with respect to $S$, the third line is obtained using the fact that the characteristic function of $\cN(\mu,\Sigma)$ is $\xi \in \R^d \mapsto \exp(\mu^T\xi + \frac 12 \xi^T\Sigma\xi)$, and the inversion between the expectancy and the infinite product in the fourth line is justified by the independence of the $E_k$ and standard arguments. Moreover, for every $E \sim \mathrm{Exp}(\lambda)$ and $u \in \R$, it holds
    \begin{equation}
        \E[e^{-uE}] = \int_0^\infty e^{-ux}\lambda e^{-\lambda x} \dd x = \int_0^\infty \lambda e^{-(u+\lambda)x} \dd x = \frac{\lambda}{\lambda + u}.
    \end{equation}
    Injecting this into \eqref{eq:app:l_98}, we deduce that 
    \begin{equation}
        \E[e^{it\sca{\theta}{X}}] = \prod_{k=1}^\infty \E[e^{-\frac{1}{2}t^2 E_k}] = \prod_{k=1}^\infty \frac{k^2}{t^2 + k^2}.
    \end{equation}
    Using the well-known fact that the hyperbolic sine has the infinite product expansion
    \begin{equation}
        \sinh(x) = x \prod_{k=1}^\infty \left(1 + \frac{x^2}{k^2\pi^2}\right)
    \end{equation}
    for every $x \in \R$, we deduce that
    \begin{equation}
        \E[e^{it\sca{\theta}{X}}] = \frac{\pi t}{\sinh(\pi t)}
    \end{equation}
    which is the characteristic function of the standard logistic distribution $\rho_0$. This proves that $\rho_\theta = \rho_0$. \newline
    Now, let $\rho_\veps := m_{\veps\#}\rho$ and $\rho_{0\veps} := m_{\veps\#}\rho_0$, where $m_\veps$ is the multiplication by $\veps$, and denote $\mu_\veps := \mu \ast \rho_\veps$ and $\nu_\veps := \nu \ast \rho_\veps$. Fix $\theta \in \bS^{d-1}$, then we have 
    \begin{equation}
        (\rho_\veps)_\theta = P_{\theta\#}m_{\veps\#}\rho = m_{\veps\#}P_{\theta\#}\rho = m_{\veps\#}\rho_0 = \rho_{0\veps}
    \end{equation}
    and we have $(\mu_\veps)_\theta = \mu_\theta \ast (\rho_\veps)_\theta = \mu_\theta \ast \rho_{0\veps}$ and similarly $(\nu_\veps)_\theta = \nu_\theta \ast \rho_{0\veps}$\footnote{Indeed for any pair of measures $\mu,\nu \in \cP(\R^d)$ it holds $(\mu \ast \nu)_\theta = \mu_\theta \ast \nu_\theta$, since for every measurable $f : \R \to [0,\infty]$ we have
    \begin{equation*}
        \int f \dd(\mu\ast\nu)_\theta = \int f(\sca{x}{\theta}) \dd(\mu\ast\nu)(x) = \iint f(\sca{x+y}{\theta})\dd\mu(x)\dd\nu(y) = \iint f(t+s)\dd\mu_\theta(t)\dd\nu_\theta(s) = \iint f \dd(\mu_\theta\ast\nu_\theta).
    \end{equation*}}. Since the CDF of $\rho_0$ is the sigmoid $F_{\rho_0} = \sigma$, it ensues that the CDF of $\rho_{0\veps}$ is the rescaled sigmoid $F_{\rho_{0\veps}} = \sigma(\cdot/\veps)$. Therefore, for every $a \in \R$, it holds
    \begin{align}
        F_{(\mu_\veps)_\theta}(a) &= F_{\mu_\theta \ast \rho_{0\veps}}(a) \\
        &= \iint \bOne_{t+s \leq a} \dd\rho_{0\veps}(s) \dd\mu_\theta(t) \\
        &= \int F_{\rho_{0\veps}}(a-t) \dd\mu_\theta(t) \\
        &= \int \sigma((a-t)/\veps) \dd\mu_\theta(t) \\
        &= \int \sigma(-(t-(a-\sqrt{\veps})-\sqrt{\veps})/\veps) \dd\mu_\theta(t) \\
        &= F_{\mu_\theta}^{(\veps)}(a - \sqrt{\veps}) \label{eq:app:l_131}
    \end{align}
    and similarly
    \begin{equation} \label{eq:app:l_134}
        F_{(\nu_\veps)_\theta}(a) = F_{\nu_\theta}^{(\veps)}(a - \sqrt{\veps}).
    \end{equation}
    Therefore, the CDFs of the projections of $\mu_\veps$ and $\nu_\veps$ are (up to a translation by $\sqrt{\veps}$) the smoothed CDFs of the projections of $\mu$ and $\nu$. As a consequence, we have
    \begin{align}
        \SW_{1,\veps}(\mu,\nu) &= \int_{\bS^{d-1}} \int_{-\infty}^{+\infty} |F_{\mu_\theta}^{(\veps)}(u) - F_{\nu_\theta}^{(\veps)}(u)| \dd u \dd \theta \\
        &= \int_{\bS^{d-1}} \int_{-\infty}^{+\infty} |F_{\mu_\theta}^{(\veps)}(u-\sqrt{\veps}) - F_{\nu_\theta}^{(\veps)}(u-\sqrt{\veps})| \dd u \dd \theta \label{eq:app:l_140} \\
        &= \int_{\bS^{d-1}} \int_{-\infty}^{+\infty} |F_{(\mu_\veps)_\theta}(u) - F_{(\nu_\veps)_\theta}(u)| \dd u \dd \theta \\
        &= \SW_1(\mu_\veps, \nu_\veps)
    \end{align}
    where we used \eqref{eq:app:l_131} and \eqref{eq:app:l_134} to obtain the third line, and \Cref{th:bobkov_ledoux_sw1} to obtain the fourth line ; and similarly, we have
    \begin{equation}
        \SW_{2,\veps}(\mu,\nu) = \SW_2(\mu_\veps, \nu_\veps)
    \end{equation}
    using this time \Cref{th:bobkov_ledoux_sw2}. This finishes the proof.
\end{proof}

\begin{remark}
    The $-\sqrt{\veps}$ term in the definition \eqref{eq:app:smoothed_cdf} of the smoothed CDFs $F_\rho^{(\veps)}$ serves to ensure the pointwise convergence $F_\rho^{(\veps)}(a) \to F_\rho(a)$ when $\veps \to 0^+$ for every $a \in \R$. Without this term, the convergence would only hold in the case where $a$ is not an atom of $\rho$, since we would instead have $F_\rho^{(\veps)}(a) \to \rho((-\infty,a)) + \frac 12 \rho(\{a\})$ for every $a \in \R$. However, it would not have been consequential to omit this $-\sqrt{\veps}$ term: indeed, the set of atoms of any probability measure is at most countable, so the estimators $\widehat{\SW}_{1,L,\veps}$, $\widehat{\SW}_{2,L,\veps}$ and $\widehat{\SW}_{2,L,\eta,\veps}$ would in this case still converge almost surely to  respectively $\widehat{\SW}_{1,L}$, $\widehat{\SW}_{2,L}$ and $\widehat{\SW}_{2,L,\eta}$ when $\veps \to 0^+$, as the $\tilde{U}_l$ and $\tilde{V}_l$ in their definitions would almost surely not be atoms of $\mu_{\theta_l}$ and $\nu_{\theta_l}$. Moreover, \Cref{prop:smoothed_estimators_estimate_sw_of_smoothed} would still hold, and its proof would be mostly unchanged (the main difference would be that \eqref{eq:app:l_131} and \eqref{eq:app:l_134} would be replaced by $F_{(\mu_\veps)_\theta}(a) = F_{\mu_\theta}^{(\veps)}(a)$ and $F_{(\nu_\veps)_\theta}(a) = F_{\nu_\theta}^{(\veps)}(a)$, which would allow us to skip the step \eqref{eq:app:l_140} of the proof).
\end{remark}

\begin{remark}
    Our definition in Equation \eqref{eq:app:smoothed_cdf} does not represent the only way of smoothing CDFs. Indeed, we could have used instead of the sigmoid $\sigma$ any monotone function on $\R$ that smoothly varies from 0 to 1. For example, we may have set
    \begin{equation}
        F_\rho^{(\veps)}(a) := \int \Phi(-(x-a-\sqrt{\veps})/\veps) \dd\rho(x)
    \end{equation}
    for every $\rho \in \cP(\R)$ and $a \in \R$, where $\Phi$ is the CDF of the standard Gaussian. In this case, we would have $\SW_{1,\veps}(\mu,\nu) = \SW_1(\mu \ast \rho_\veps, \nu \ast \rho_\veps)$ and $\SW_{2,\veps} = \SW_2(\mu \ast \rho_\veps, \nu \ast \rho_\veps)$ where $\rho = \cN(0,I_d)$ and $\rho_\veps := m_{\veps\#}\rho = \cN(0,\veps^2 I_d)$, so that the smoothed estimators would be (biased) estimators of a Sliced-Wasserstein distance between Gaussian smoothings of the measures. While it may appear more natural to work with Gaussian measures (compared to the elaborate construction of $\rho$ in the proof of \Cref{prop:smoothed_estimators_estimate_sw_of_smoothed}), we nevertheless chose to use the sigmoid $\sigma$ as it is more efficiently computable than $\Phi$.
\end{remark}

\subsection{Experiments}

All experiments in this section were conducted on a system equipped with an Intel Xeon CPU and a NVIDIA T4 GPU.

\subsubsection{Time vs accuracy and trade-off} \label{appendix:sec:sec:sec:exp_time_vs_acc}

\begin{figure}
    \begin{center}
        \includegraphics[width=\linewidth]{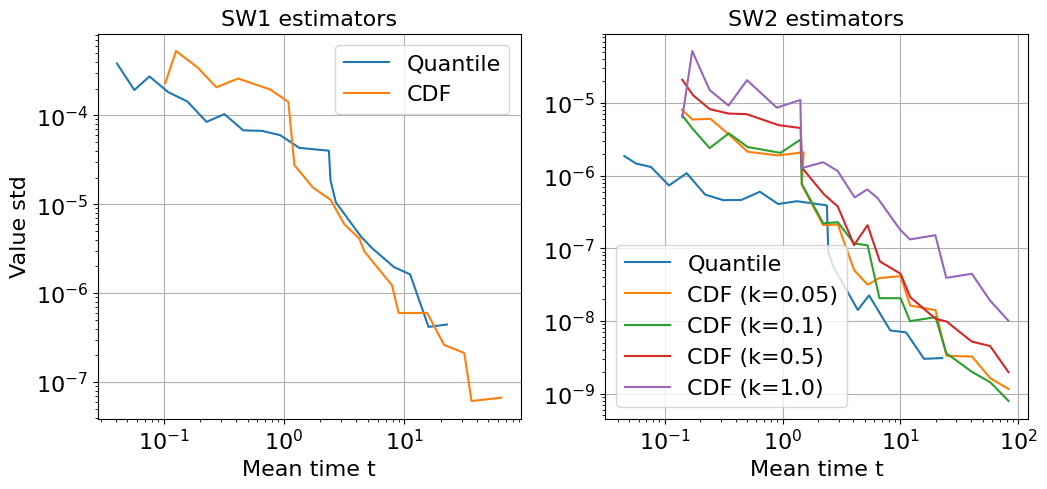}
    \end{center}
    \caption{Accuracy vs computing time for the different $\SW$ estimators. Left: $\SW_1$ estimators, right: $\SW_2$ estimators.}
    \label{fig:time_accuracy}
\end{figure}

In this section we perform a simple experiment comparing the usual quantile function-based estimator and the CDF-based estimators of the $\SW$ distance in terms of the accuracy that can be obtained with a given time budget. We first fix two points clouds $\mu$ and $\nu$ of $N = 10^6$ points each in $d = 10$ dimensions, sampled according to the standard $d$-dimensional Gaussian distribution. Then, for each estimator $\hat{S}_L$ among $\widetilde{\SW}_{1,L}(\mu,\nu)$, $\widetilde{\SW}_{2,L}^2(\mu,\nu)$, $\widehat{\SW}_{1,L}(\mu,\nu)$, and $\widehat{\SW}_{2,L,\eta_k}^2(\mu,\nu)$ for $k \in \{0.05, 0.1, 0.5, 1\}$, for various values of $L$, we compute 10 times $\hat{S}_L(\mu,\nu)$ and derive the standard deviation $\sigma_L$ of the obtained values, while measuring the mean computation time $t_L$ of the estimator, and we draw the graph of the $\sigma_L$ as a function of the $t_L$. The chosen values of $L$ range from 10 to 10000 for the quantile-based estimators, and from 100 to 100000 for the CDF-based estimators. The resulting graphs are shown in \Cref{fig:time_accuracy}. From this experiment we make the following observations:
\begin{itemize}
    \item In the case of the CDF estimators of the $\SW_2$ distance, the estimators $\widehat{\SW}^2_{2,L,\eta_k}$ with $k < 1$ do empirically achieve a better accuracy than the estimator $\widehat{\SW}^2_{2,L,\eta_1} = \widehat{\SW}^2_{2,L}$. In other words, the importance sampling strategy does indeed improve the accuracy of the CDF-based $\SW_2$ estimator.
    \item For a given time budget, the quantile-based estimators generally achieve a better accuracy than the CDF-based estimators. However, the CDF-based $\SW_1$ estimator $\widehat{\SW}_{1,L}$ and the $\SW_2$ estimators $\widehat{\SW}_{2,L,\eta_k}$ for $k \leq 0.1$ achieve comparable accuracy (and sometimes better for $\widehat{\SW}_{1,L}$) to their quantile-based counterparts. 
    \item Since, in \Cref{fig:time_accuracy}, the curves for the quantile-based estimators are plotted for $L$ ranging from 10 to 10000, while those for the CDF-based estimators are plotted for $L$ ranging from 100 to 100000, we infer that a rough rule-of-thumb is that to achieve comparable accuracy, the parameter $L$ must be around one order of magnitude larger for the CDF-based estimators than for the quantile-based estimators.
\end{itemize}

\subsubsection{Tuning the smoothing parameter} \label{appendix:sec:sec:sec:tune_smoothing}

\begin{figure}
    \begin{center}
        \includegraphics[width=\linewidth]{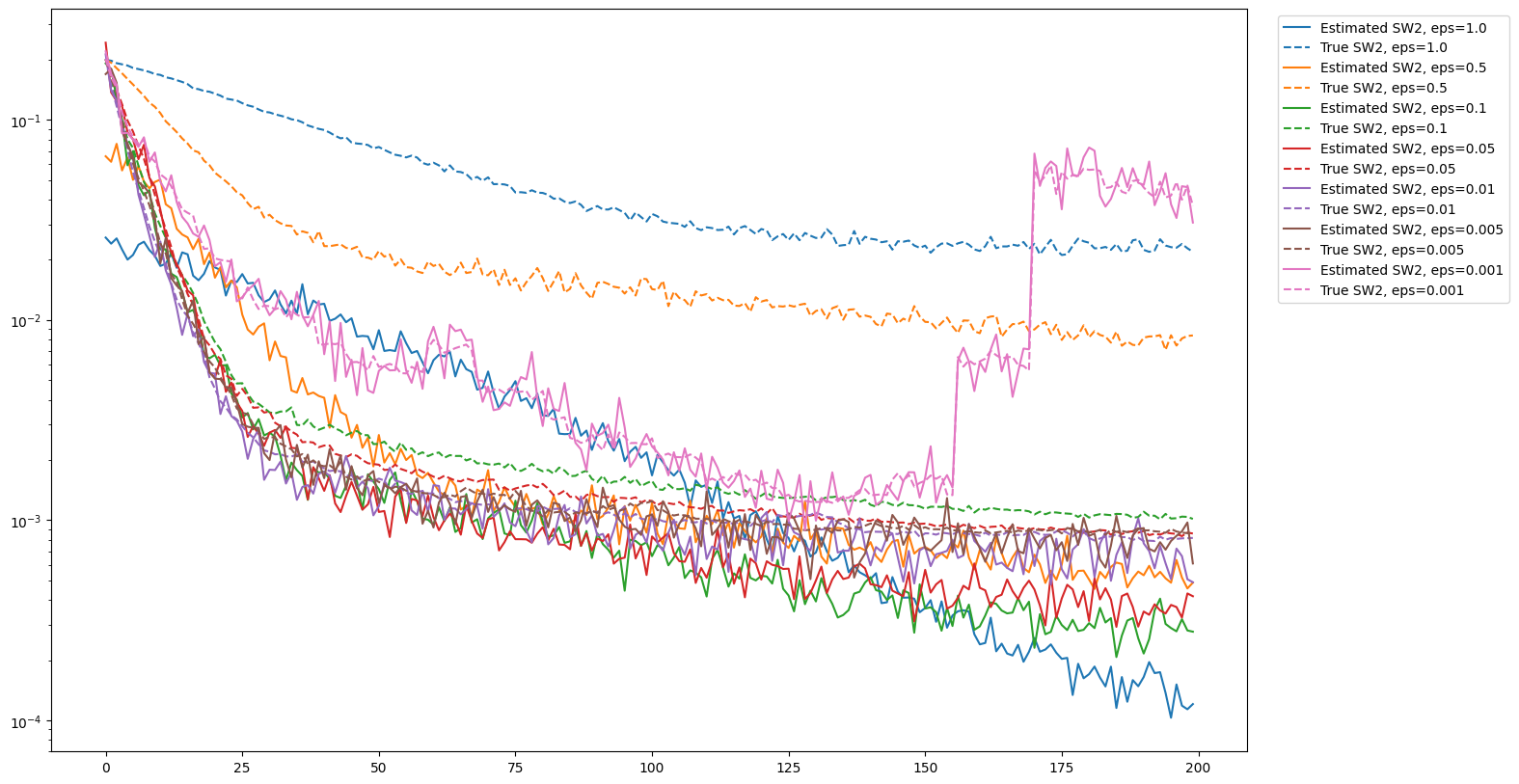}
    \end{center}
    \caption{Gradient descent on the smoothed CDF-based estimators $\widehat{\SW}^2_{2,L,\veps}$ for various values of $\veps$. Solid lines: value of the smoothed CDF-based estimator. Dashed lines: actual value of the SW, as measured by a quantile-based estimator.}
    \label{fig:tune_smoothing}
\end{figure}

Our goal in this experiment is to determine which value to choose for the smoothing parameter $\veps$ in the CDF-based estimator $\widehat{\SW}^2_{2,L,\eta,\veps}$. For this, we fit a point cloud $X$ in the plane to approximate a target probability distribution $\nu$, by performing a gradient descent on the loss function $X \in (\R^2)^N \to \widehat{\SW}^2_{2,L,\eta_k,\veps}(\mu_X, \nu)$, with $\mu_X := \frac 1N \sum_{i=1}^N \delta_{X_i}$, for various values of the parameter $\veps$, ranging from $\veps = 1.0$ to $\veps = 0.001$. The initial point cloud $X_0 \in (\R^2)^N$ is sampled uniformly on the square $[-1,1]^2$, and $\nu$ is the standard Gaussian $\cN(0,I_2)$, which we approximate with $M = 50000$ random samples. We take $N = 500$, $L = 1000$ and $k = 0.05$, and we run the gradient descent for $200$ steps, with a gradient step size of $0.1N$. Moreover, at every step of the gradient descent, we estimate the true value of $\SW_2^2(\mu_X,\nu)$ by computing the quantile estimator $\widetilde{\SW}^2_{2,\tilde{L}}(\mu_X,\nu)$, with $\tilde{L} = 100$. The results are shown in \Cref{fig:tune_smoothing}. We see that for the lowest value of $\veps$, while the CDF-based estimator $\widehat{\SW}^2_{2,L,\eta_k,\veps}$ exhibits low bias (the solid and dashed lines stay close to each other in the graph), the gradient descent is instable and diverges. On the other hand, for the largest values of $\veps$, while the gradient descent does converge, the true value of the SW (as measured by the quantile-based estimator) stays at a high value, so that the smoothed CDF-based estimator exhibits high bias. On the other hand, it appears from this experiment that $\veps \simeq 0.01$ is a good choice for the smoothing parameter of the CDF-based estimator, as it allows both low bias and convergence of the gradient descent converges. 

\section{Differential privacy analysis of the federated Sliced-Wasserstein estimators} \label{appendix:sec:fed_sw}

\subsection{The federated estimator} \label{appendix:sec:sec:fed_sw}

In this section, we analyze the privacy of the federated version of the CDF-based $\SW$ estimators discussed in \Cref{sec:applications}. Consider the setting where we have two datasets $\cD_1 \in (\R^d)^{N_1}$ and $\cD_2 \in (\R^d)^{N_2}$ of respectively $N_1$ and $N_2$ samples, which are split between respectively $C_1$ and $C_2$ client datasets: $\cD_1 = \cD_{1,1} \coprod \ldots \coprod \cD_{1,C_1}$ and $\cD_2 = \cD_{2,1} \coprod \ldots \coprod \cD_{2,C_2}$. We will represent every dataset $\cD = (x_i)_{i=1}^N \in (\R^d)^N$ by the empirical measure of its samples $\mu_\cD := \frac 1N \sum_{i=1}^N \delta_{x_i}$, so that we have for every $i \in \{0,1\}$
\begin{equation}
    \mu_i := \mu_{\cD_i} = \sum_{j=1}^{C_i} \alpha_{i,j} \mu_{i,j}
\end{equation}
where $\mu_{i,j} := \mu_{\cD_{i,j}}$, and $\alpha_{i,j} = N_{i,j}/N_i$ where $N_{i,j}$ is the number of samples of the client dataset $\cD_{i,j}$. This implies in particular that for every $\theta \in \bS^{d-1}$, we have
\begin{equation} \label{eq:app:cdf_is_sum_of_client_cdf}
    F_{\mu_{i,\theta}} = \sum_{j=1}^{C_i} \alpha_{i,j} F_{\mu_{i,j,\theta}}
\end{equation}
\medbreak
We want to compute the $\SW_1$ distance between the datasets $\cD_1$ and $\cD_2$, or more precisely between the probability measures $\mu_1$ and $\mu_2$ representing them, in a federated way. We compute it using the $\widehat{\SW}_{1,L}$ estimator, which we recall is defined for any pair $\mu,\nu \in \cP_2(\R^d)$ as
\begin{equation} \label{eq:app:sw1_cdf_formula_reminder}
    \widehat{\SW}_{1,L}(\mu,\nu) = \frac 1L \sum_{l=1}^L (m_{\theta_l}^+ - m_{\theta_l}^-) |F_{\mu_{\theta_l}}(\tilde{U}_l) - F_{\nu_{\theta_l}}(\tilde{U}_l)|
\end{equation}
where $(\theta_1,U_1)$, ..., $(\theta_l,U_l)$ are i.i.d. random variables with $(\theta_1,U_1) \sim \cU(\bS^{d-1} \times [0,1])$, where $m_\theta^+$ and $m_\theta^-$ are respectively the $\max$ and the $\min$ of $\spt(\mu_\theta) \cup \spt(\nu_\theta)$, and for every $l$ we have noted $\tilde{U}_l := \Rescale(U_l,m_{\theta_l}^-,m_{\theta_l}^+)$. We then compute $\widehat{\SW}_{1,L}(\mu_1,\mu_2)$ using the following procedure:
\begin{enumerate}
    \item \label{item:fed_sw:1} The central server samples $(\theta_1,U_1)$, ..., $(\theta_L,U_L)$ from $\cU(\bS^{d-1} \times [0,1])$, and broadcasts the vector $(\theta_1,U_1,\ldots,\theta_L,U_L)$ to the clients.
    \item \label{item:fed_sw:2} For every $i \in \{1,2\}$, $1 \leq j \leq C_i$, the client holding the dataset $\cD_{i,j}$ computes for every $1 \leq l \leq L$ the bounds $m_{i,j,l}^+ := \max(\spt(\mu_{i,j,\theta_l}))$ and $m_{i,j,l}^- := \min(\spt(\mu_{i,j,\theta_l}))$, and sends the vector $(m_{i,j,1}^-,m_{i,j,1}^+,\ldots,m_{i,j,L}^-,m_{i,j,L}^+)$ to the server.
    \item \label{item:fed_sw:3} For every $1 \leq l \leq L$ the server computes $m_{\theta_l}^\pm$ by taking the maximum of the $m_{i,j,l}^+$ and the minimum of the $m_{i,j,l}^-$. The server then broadcasts the vector $(m_{\theta_1}^-,m_{\theta_1}^+,\ldots,m_{\theta_L}^-,m_{\theta_L}^+)$ to the clients.
    \item \label{item:fed_sw:4} For every $i \in \{1,2\}$, $1 \leq j \leq C_i$, the client holding the dataset $\cD_{i,j}$ computes the CDFs $F_{\mu_{i,j,\theta_1}}(\tilde{U}_1)$, ..., $F_{\mu_{i,j,\theta_L}}(\tilde{U}_L)$ (he can compute the $\tilde{U}_l$ using the $m_{\theta_l}^\pm$ received from the server). It then sends the vector $(F_{\mu_{i,j,\theta_l}}(\tilde{U}_l))_{l=1}^L$ to the server.
    \item \label{item:fed_sw:5} For every $i \in \{1,2\}$, the server computes the CDFs $F_{\mu_{i,l}}(\tilde{U}_l)$, $1 \leq l \leq L$, using the data received from the clients and equation \eqref{eq:app:cdf_is_sum_of_client_cdf} (we assume that the client dataset sizes $N_{i,j}$ are revealed to the server).
    \item \label{item:fed_sw:6} Finally, the server computes $\widehat{\SW}_{1,L}(\mu_1,\mu_2)$ using the CDFs and equation \eqref{eq:app:sw1_cdf_formula_reminder}.
\end{enumerate}

\begin{algorithm}[tb]
  \caption{Federated computation of $\widehat{\SW}_{1,L}$, server side}
  \label{alg:fed_sw1_server}
  \begin{algorithmic}[1]
    \STATE {\bfseries Input:} Parameter $L$, number of clients $C_1, C_2$, client dataset sizes $N_{i,j}$, global dataset sizes $N_1,N_2$
    \STATE Sample $(\theta_1,U_1),\ldots,(\theta_L,U_L)$ uniformly from $\bS^{d-1} \times [0,1]$
    \STATE Broadcast $(\theta_1,U_1,\ldots,\theta_L,U_L)$ to the clients
    \FOR{$i=1,2$}
        \FOR{$j = 1$ {\bfseries to} $C_i$}
            \STATE Receive $(m_{i,j,1}^-,m_{i,j,1}^+,\ldots,m_{i,j,L}^-,m_{i,j,L}^+) \in \R^{2L}$ from client $(i,j)$
        \ENDFOR
    \ENDFOR
    \FOR{$l = 1$ {\bfseries to} $L$}
        \STATE Compute $m_l^- \leftarrow \min_{i,j} m_{i,j,l}^-$ and $m_l^+ \leftarrow \max_{i,j} m_{i,j,l}^+$
    \ENDFOR
    \STATE Broadcast $(m_1^-,m_1^+,\ldots,m_L^-,m_L^+)$ to the clients
    \FOR{$i=1,2$}
        \FOR{$j = 1$ {\bfseries to} $C_i$}
            \STATE Receive $(F_{i,j,1},\ldots,F_{i,j,L}) \in \R^L$ from client $(i,j)$
        \ENDFOR
        \FOR{$l = 1$ {\bfseries to} $L$}
            \STATE Compute $F_{i,l} \leftarrow \sum_{j=1}^{C_i} F_{i,j,l}$
        \ENDFOR
    \ENDFOR
    \STATE Compute $SW \leftarrow \frac 1L \sum_{l=1}^L (m_l^+ - m_l^-) |F_{1,l} - F_{2,l}|$
    \STATE Return $SW$.
  \end{algorithmic}
\end{algorithm}

\begin{algorithm}[tb]
  \caption{Federated computation of $\widehat{\SW}_{1,L}$, client side. Here $i \in \{1,2\}$ and $j \in \{1,\ldots,C_i\}$ denote the index of the client and are fixed.}
  \label{alg:fed_sw1_client}
  \begin{algorithmic}[1]
    \STATE {\bfseries Input:} Private dataset $\mu_{i,j} \in \cP(\R^d)$, private dataset size $N_{i,j}$, global dataset size $N_i$
    \STATE Receive $(\theta_1,U_1),\ldots,(\theta_L,U_L) \in \bS^{d-1} \times [0,1]$ from server
    \FOR{$l = 1$ {\bfseries to} $L$}
        \STATE Compute $m_{i,j,l}^- \leftarrow \min(\spt(\mu_{i,j,\theta_l}))$ and $m_{i,j,l}^+ \leftarrow \max(\spt(\mu_{i,j,\theta_l}))$
    \ENDFOR
    \STATE Send $(m_{i,j,1}^-,m_{i,j,1}^+,\ldots,m_{i,j,L}^-,m_{i,j,L}^+)$ to server
    \STATE Receive $(m_1^-,m_1^+,\ldots,m_L^-,m_L^+) \in \R^{2L}$ from server
    \FOR{$l = 1$ {\bfseries to} $L$}
        \STATE Compute $\tilde{U}_l \leftarrow m_l^- + U_l (m_l^+ - m_l^-)$
        \STATE Compute $F_{i,j,l} \leftarrow \frac{N_{i,j}}{N_i} F_{\mu_{i,j,\theta_l}}(\tilde{U}_l)$
    \ENDFOR
    \STATE Send $(F_{i,j,1},\ldots,F_{i,j,L})$ to server
  \end{algorithmic}
\end{algorithm}

Pseudo-code corresponding to this procedure is presented in \Cref{alg:fed_sw1_server} and \Cref{alg:fed_sw1_client}, for respectively the server-side and the client-side computations\footnote{For simplicity is is assumed that the server knows the client dataset sizes $N_{i,j}$ and that each client knows the size $N_i$ of the global dataset it belongs to}. In the case of the estimator $\widehat{\SW}_{2,L,\eta}$, the procedure is similar. The main differences are that the server samples elements $(\theta_l,U_l,V_l) \sim \cU(\bS^{d-1}) \times \eta$ and broadcasts them to the clients, and that the clients compute and send the CDFs $F_{\mu_{i,j,\theta_l}}(\tilde{U}_l)$ and $F_{\mu_{i,j,\theta_l}}(\tilde{V}_l)$, which the server then use to compute the estimator using equation \eqref{eq:sw2_alt_estimator_2}. Moreover, in the case of the smoothed estimators $\widehat{\SW}_{1,L,\veps}$ and $\widehat{\SW}^2_{2,L,\eta,\veps}$, we proceed likewise, except that the clients compute and the server aggregate the smoothed CDFs $F_{\mu_{i,j,\theta_l}}^{(\veps)}(\tilde{U}_l)$.

\subsection{Privacy issues of the estimator} \label{appendix:sec:sec:fed_sw_privacy_concerns}

This procedure, however, pose a number of concerns in terms of privacy. Indeed, the data that is exchanged between the clients and the server contain information about the clients' datasets that could compromise their privacy. In particular, since a one-dimensional probability measure $\mu \in \cP(\R)$ is determined by its CDF $F_\mu$, and a $d$-dimensional probability measure $\mu \in \cP(\R^d)$ is determined by its projections $\mu_\theta$ (\citep[Proposition 5.1.2]{bonnotte2013unidimensional}), it is in theory possible to reconstruct the client datasets $\cD_{i,j}$ using the CDFs exchanged at step \ref{item:fed_sw:4} of the algorithm provided $L$ is large enough. Fortunately, it is not difficult to significantly mitigate the privacy risks of this algorithm:
\begin{itemize}
    \item First, although the bounds $m_{i,j,l}^\pm$ exchanged at step \ref{item:fed_sw:2} may give information on the dataset $\cD_{i,j}$, it must be noted that if we replace $m_\theta^\pm$ in \eqref{eq:app:sw1_cdf_formula_reminder} by any upper and lower bounds of $\spt(\mu_\theta) \cup \spt(\nu_\theta)$ (instead of the maximum and minimum of this set), we still obtain an unbiased estimator of $\SW_1(\mu,\nu)$ (although with a larger variance). Therefore, the clients can obfuscate the bounds by sending the server $m_{i,j,l}^\pm \pm R_{i,j,l}^\pm$ instead of $m_{i,j,l}^\pm$, where $R_{i,j,l}^\pm$ is some random positive value. Moreover, in some cases, all the measures $\mu_{i,j}$ are supported in some bounded set $B(0,R)$ (for instance, if the samples are images with pixel values ranging from 0 to 1, then all the measures are supported on $[0,1]^d \subseteq B(0,\sqrt{d})$). In such a case, we may as well skip the steps \ref{item:fed_sw:2}-\ref{item:fed_sw:3} and directly use $m_\theta^\pm := \pm R$ for every $\theta \in \bS^{d-1}$.
    \item There exist so-called Secure Aggregation (SecAgg) protocols which allow to compute a sum $\sum_{u \in \cU}^n a_u$ of quantities $a_u$ held by mutually distrustful users $u \in \cU$, which guarantee through cryptographic means that the individual values $a_u$ are not revealed to the other users, as long as some given proportion of the users do not collude with each other (see for example \citep{bonawitz2016practicalsecureaggregation}). Therefore, at steps \ref{item:fed_sw:4}-\ref{item:fed_sw:5} of the algorithm, the clients and server may use such a SecAgg protocol in order to compute the sums $F_{\mu_{i,\theta_l}}(\tilde{U}_l) = \sum_{j=1}^{C_i} \alpha_{i,j} F_{\mu_{i,j,\theta_l}}(\tilde{U}_l)$, $i \in \{1,2\}$, $1 \leq l \leq L$, without revealing the clients' local CDFs $F_{\mu_{i,j,\theta_l}}(\tilde{U}_l)$ to the server and the other clients.
    \item Finally, there is the question of whether the aggregated CDFs $F_{\mu_{i,\theta_l}}(\tilde{U}_l)$ themselves, and any value computed from them, may leak information about the client datasets $\cD_{i,j}$. The main tool in machine learning for tackling such questions is that of \emph{differential privacy}, whose main concepts we will present below. 
\end{itemize}

\subsection{Differential privacy} \label{appendix:sec:sec:differential_privacy}

We present in this section a brief reminder of some of the main results in differential privacy. We refer for further details to the book \citep{dwork2014algorithmic}. Let $\cX$ be a space of datasets, and $\sim$ a binary symmetric relation on $\cX$ denoting "closeness" (for example, a typical choice in the literature is to take $\cX = (\R^d)^N$ the space of datasets with $N$ entries, and for $x,x' \in \cX$ to say that $x \sim x'$ if they differ by at most one entry). Let $\cM : \cX \to \cY$ be a random function, and let $\veps, \delta > 0$.
\begin{definition}{\citep[Definition 2.4]{dwork2014algorithmic}}
    The random function $\cM : \cX \to \cY$ is said to be $(\veps,\delta)$-differentially private if for every pair $x,x' \in \cX$ such that $x \sim x'$, and for every measurable set $S \subseteq \cY$, one has
    \begin{equation} \label{eq:app:eps_dp}
        \bP[\cM(x') \in S] \leq e^\veps \bP[\cM(x) \in S] + \delta.
    \end{equation}
    In the case $\delta = 0$, we say that $\cM$ is $\veps$-differentially private.
\end{definition}
Intuitively, $(\veps,\delta)$-differential privacy means that the randomized algorithm $\cM$ will behave similarly on similar datasets. The notion of $\veps$-differential privacy is stronger than $(\veps,\delta)$-differential privacy: indeed, to say that $\cM$ is $\veps$-differentially private means that for every run $\cM(x)$, the output will be almost equally likely to be observed for a run of $\cM$ on any $x' \sim x$ ; while $(\veps,\delta)$-differential privacy means that for a given pair $x \sim x'$, it is unlikely that a run of $\cM(x)$ will yield an output that is significantly more or less likely to be observed after a run of $\cM(x')$ (see the discussion after \citep[Definition 2.4]{dwork2014algorithmic} for more details). One of the advantages of $(\veps,\delta)$-differential privacy is its ease of manipulation, as it is preserved by any computation:

\begin{proposition}{\citep[Proposition 2.1]{dwork2014algorithmic}} \label{prop:dp_is_preserved}
    Let $\cM : \cX \to \cY$ be an $(\veps,\delta)$-differentially private mechanism, and let $f : \cY \to \cY'$ be a (deterministic or random) function. Then $f \circ \cM : \cX \to \cY'$ is also $(\veps,\delta)$-differentially private.
\end{proposition}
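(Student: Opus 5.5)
We need to prove post-processing: if $\cM$ is $(\veps,\delta)$-DP, then $f\circ\cM$ is $(\veps,\delta)$-DP. We treat the deterministic case first, then reduce the random case to it by averaging.

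\textbf{Deterministic $f$.} Assume $f$ is measurable. Fix $x \sim x'$ and a measurable $S' \subseteq \cY'$. Set $S := f^{-1}(S')$, which is a measurable subset of $\cY$. The event $\{f(\cM(x)) \in S'\}$ coincides with $\{\cM(x) \in S\}$. Applying \eqref{eq:app:eps_dp} to $S$ gives
\begin{equation*}
\bP[f(\cM(x')) \in S'] = \bP[\cM(x') \in S] \leq e^\veps \bP[\cM(x) \in S] + \delta = e^\veps \bP[f(\cM(x)) \in S'] + \delta.
\end{equation*}

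\textbf{Random $f$.} We model $f$ as $f(y) = g(y, W)$, where $g : \cY \times \cW \to \cY'$ is jointly measurable and $W$ is a random variable independent of the internal randomness of $\cM$ (a Markov kernel $K(y,\cdot)$ works equally well). Conditioning on $W = w$, the map $g(\cdot,w)$ is deterministic and measurable, so the previous step gives, for every $w$,
\begin{equation*}
\bP[g(\cM(x'),w) \in S'] \leq e^\veps \bP[g(\cM(x),w) \in S'] + \delta.
\end{equation*}
Independence of $W$ and $\cM$ together with Fubini's theorem let us write $\bP[f(\cM(x)) \in S'] = \int \bP[g(\cM(x),w) \in S'] \dd P_W(w)$, and likewise for $x'$. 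Integrating the pointwise inequality against $P_W$ then gives the claim, because $P_W$ is a probability measure: the factor $e^\veps$ passes through the integral, and $\delta$ integrates to $\delta$.

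Equivalently, in kernel form the statement follows from $\int K(y,S')\dd\bP_{\cM(x')}(y) \leq e^\veps \int K(y,S')\dd\bP_{\cM(x)}(y) + \delta$. This holds for simple functions $0 \leq h \leq 1$ by approximation, and hence for all measurable $h$ with $0 \leq h \leq 1$.

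The only delicate point is measure-theoretic: $f$ must be measurable, and its randomness must be independent of $\cM$, so that Fubini applies. Given these, the argument is the routine calculation above.
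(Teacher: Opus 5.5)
Your proof is correct and follows the argument of the cited source, Dwork--Roth; the paper itself gives no proof beyond the citation. As there, the preimage $S=f^{-1}(S')$ handles deterministic $f$, a random $f$ is treated as a mixture of deterministic maps, and your independence and Fubini remarks supply the measure-theoretic details they leave implicit. One small fix in the kernel aside: you do not get from indicators to $[0,1]$-valued simple $h$ by approximation, but by writing $h=\sum_k (v_k-v_{k-1})\bOne_{\{h\ge v_k\}}$ with nested level sets and weights summing to $\max h\le 1$, so that $\delta$ is not multiplied by $\sum_i c_i$; monotone convergence then extends the bound to all measurable $h$.
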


\begin{remark}
    Note that the definitions and results in \citep{dwork2014algorithmic} are given for a specific choice of $\cX$ and $\sim$. However, the proofs work for generic $\cX$, $\sim$. 
\end{remark}

Moreover, different independent $(\veps,\delta)$-differentially private mechanisms can be combined:

\begin{theorem}{\citep[Theorem 3.16]{dwork2014algorithmic}} \label{th:dp_compositionality}
    For every $i = 1,\ldots,k$, let $\cM_i : \cX \to \cY_i$ be a $(\veps_i,\delta_i)$-differentially private random mechanism, such that the $\cM_1,\ldots,\cM_k$ are independent. Then the random mechanism $\cM : \cX \to \Pi_{i=1}^k \cY_i$ defined by $\cM(x) := (\cM_1(x),\ldots,\cM_k(x))$ is $(\veps,\delta)$-differentially private, with $\veps := \sum_{i=1}^k \veps_i$ and $\delta := \sum_{i=1}^k \delta_i$.    
\end{theorem}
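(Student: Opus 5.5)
The plan is a direct appeal to the definition, proceeding by induction on $k$ and reducing everything to the case $k=2$. For $k=2$, fix neighbouring datasets $x \sim x'$ and a measurable set $S \subseteq \cY_1 \times \cY_2$. By independence, the law of $\cM(x)$ is the product $P^x_1 \otimes P^x_2$ of the laws of $\cM_1(x)$ and $\cM_2(x)$, and likewise for $x'$. Writing $S_{y_1} := \{y_2 \setcond (y_1,y_2) \in S\}$ for the sections, Fubini gives
\begin{equation*}
\bP[\cM(x') \in S] = \int P^{x'}_2(S_{y_1}) \dd P^{x'}_1(y_1).
\end{equation*}
For each $y_1$ we apply the $(\veps_2,\delta_2)$ guarantee of $\cM_2$, and bound the result by $\min\bigl(1, e^{\veps_2} P^x_2(S_{y_1})\bigr) + \delta_2$. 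The function $g(y_1) := \min\bigl(1, e^{\veps_2}P^x_2(S_{y_1})\bigr)$ takes values in $[0,1]$.

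The next step is to transfer the guarantee of $\cM_1$ from sets to $[0,1]$-valued functions. By the layer-cake formula $\int g \dd P = \int_0^1 P(g > t)\dd t$, the set inequality yields
\begin{equation*}
\int g \dd P^{x'}_1 \leq e^{\veps_1}\int g \dd P^x_1 + \delta_1 .
\end{equation*}
Since $g \leq e^{\veps_2} P^x_2(S_{y_1})$, combining these gives
\begin{equation*}
\bP[\cM(x') \in S] \leq e^{\veps_1+\veps_2}\int P^x_2(S_{y_1})\dd P^x_1(y_1) + \delta_1+\delta_2 = e^{\veps_1+\veps_2}\bP[\cM(x)\in S] + \delta_1+\delta_2 .
\end{equation*}

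The induction step then groups $(\cM_1,\ldots,\cM_{k-1})$ as a single mechanism. This grouped mechanism is independent of $\cM_k$, and it is $\bigl(\sum_{i<k}\veps_i,\sum_{i<k}\delta_i\bigr)$-differentially private by the induction hypothesis, so the $k=2$ case finishes the argument. Nothing here uses the specific $\cX$ or $\sim$, which is consistent with the preceding remark.

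The main subtlety is the truncation by $\min(1,\cdot)$. Without it, the integrand $e^{\veps_2}P^x_2(S_{y_1})$ could exceed $1$, and the functional version of the $\cM_1$ guarantee would then pick up an extra factor on the $\delta_1$ term. The truncation keeps $g$ in $[0,1]$, so $\delta_1$ is incurred only once. Measurability of $y_1 \mapsto P_2(S_{y_1})$ is the standard Fubini fact for product $\sigma$-algebras.
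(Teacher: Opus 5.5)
Your proof is correct. The paper does not prove this statement itself: it cites \citep[Theorem 3.16]{dwork2014algorithmic} and only remarks that the argument there does not depend on the choice of $\cX$ and $\sim$. Your argument is the standard one behind that citation, so in substance it follows the same route. You condition on the output of $\cM_1$, apply the guarantee of $\cM_2$ on each section $S_{y_1}$, truncate with $\min(1,\cdot)$ so that $\delta_1$ is paid only once, and then induct on $k$.

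The one genuine difference is how you pass the guarantee of $\cM_1$ through the integral. The textbook argument is written with sums over the range of $\cM_1$, that is, for discrete outputs. You instead derive the functional inequality $\int g \dd P^{x'}_1 \leq e^{\veps_1}\int g \dd P^{x}_1 + \delta_1$, for measurable $g$ with values in $[0,1]$, from the layer-cake formula, and you handle the product structure with Fubini. This makes your version valid for arbitrary measurable output spaces. That is what the paper actually needs, since it applies the theorem to the Gaussian-mechanism outputs $(F_{i,1},\ldots,F_{i,L}) \in \R^L$. The paper's remark about generic $\cX$ and $\sim$ does not by itself cover continuous outputs.
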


Furthermore, given a (deterministic) function $f : \cX \to \R^k$, it is possible, provided some conditions are met, to turn it into an $(\veps,\delta)$-differentially private mechanism for arbitrary $\veps,\delta$ by noising the output of $f$ appropriately. To show this, we first introduce some definitions:

\begin{definition}{\citep[Definition 3.2, Definition 3.8]{dwork2014algorithmic}} \label{def:dp_sensitivity}
    Let $p \geq 1$. Let $\ell_p$-sensitivity of $f : \cX \to \R^k$ is
    \begin{equation}
        \Delta_p(f) := \max_{\substack{x,y \in \cX \\ x \sim y}} |f(x)-f(y)|_p
    \end{equation}
    where $|\cdot|_p$ is the $\ell^p$ norm on $\R^k$.
\end{definition}

Recall also that for every $a \in \R$ and $b > 0$, the Laplace distribution of mean $a$ and scale $b$ is the probability distribution on $\R$ denoted $\Lap(a,b)$ with probability density function 
\begin{equation}
    \Lap(x|a,b) = \frac{1}{2b} \exp\left(-\frac{|x-a|}{b}\right), \quad x \in \R.
\end{equation}
The mean and variance of $\Lap(a,b)$ are respectively $a$ and $2b^2$. We will often note $\Lap(b) := \Lap(0,b)$. We are now ready to construct the Laplace mechanism:

\begin{theorem}{\citep[Theorem 3.3]{dwork2014algorithmic}} \label{th:dp_laplace_mechanism}
    Let $f : \cX \to \R^k$ be some function with $\ell_1$-sensitivity $\Delta_1(f) < +\infty$. Let $\veps > 0$, and define the \emph{Laplace mechanism} as the random function $\cM_{L,\veps} : \cX \to \R^k$ defined by
    \begin{equation}
        \cM_{L,\veps}(x) := f(x) + (Y_1,\ldots,Y_k)
    \end{equation}
    where the $Y_i$ are i.i.d. random variables with law $\Lap(\Delta_1(f)/\veps)$. Then the Laplace mechanism $\cM_{L,\veps}$ is $\veps$-differentially private.
\end{theorem}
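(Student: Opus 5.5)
We prove \Cref{th:dp_laplace_mechanism} by bounding the ratio of output densities pointwise and then integrating over the measurable set $S$. Fix $x \sim x'$ in $\cX$ and write $b := \Delta_1(f)/\veps$. We may assume $\Delta_1(f) > 0$. If $\Delta_1(f) = 0$, then $f(x) = f(x')$ for all neighbouring pairs, so $\cM_{L,\veps}(x)$ and $\cM_{L,\veps}(x')$ have the same law and \eqref{eq:app:eps_dp} holds trivially with $\delta = 0$.

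When $b > 0$, the random vector $\cM_{L,\veps}(x) = f(x) + (Y_1,\ldots,Y_k)$ has a density on $\R^k$. Since the $Y_i$ are i.i.d. $\Lap(b)$, it is the product density
\begin{equation*}
    p_x(z) = \prod_{i=1}^k \frac{1}{2b} \exp\left(-\frac{|z_i - f(x)_i|}{b}\right), \quad z \in \R^k,
\end{equation*}
and $p_{x'}$ is the same expression with $f(x')$ in place of $f(x)$.

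The key step is the pointwise ratio bound. For every $z \in \R^k$,
\begin{equation*}
    \frac{p_{x'}(z)}{p_x(z)} = \exp\left(\frac{1}{b}\sum_{i=1}^k \bigl(|z_i - f(x)_i| - |z_i - f(x')_i|\bigr)\right).
\end{equation*}
By the reverse triangle inequality, each summand satisfies $|z_i - f(x)_i| - |z_i - f(x')_i| \leq |f(x)_i - f(x')_i|$. Summing over $i$ bounds the exponent by $|f(x)-f(x')|_1/b$. By \Cref{def:dp_sensitivity}, $|f(x)-f(x')|_1 \leq \Delta_1(f) = \veps b$. Hence $p_{x'}(z) \leq e^{\veps} p_x(z)$ for all $z$.

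Finally, integrating this inequality over any Borel set $S \subseteq \R^k$ gives
\begin{equation*}
    \bP[\cM_{L,\veps}(x') \in S] = \int_S p_{x'} \leq e^\veps \int_S p_x = e^\veps\, \bP[\cM_{L,\veps}(x) \in S].
\end{equation*}
This is \eqref{eq:app:eps_dp} with $\delta = 0$, so $\cM_{L,\veps}$ is $\veps$-differentially private. The relation $\sim$ is symmetric, so nothing more needs checking. There is no real obstacle. The only care needed is the degenerate case $\Delta_1(f) = 0$ and the remark that nothing about $\cX$ or $\sim$ beyond symmetry is used, which matches the generality of the preceding remark.
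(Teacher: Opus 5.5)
Your proof is correct and is essentially the standard argument. The paper gives no proof of its own and simply cites Theorem 3.3 of Dwork and Roth, whose proof is the same pointwise bound $p_{x'}(z) \leq e^{\veps} p_x(z)$ on the product Laplace densities, obtained from the triangle inequality and the $\ell_1$-sensitivity. Your explicit integration over $S$ and your handling of the degenerate case $\Delta_1(f)=0$ (where $\Lap(0)$ has to be read as a point mass) only make the argument more self-contained.
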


One may also noise the output of $f$ using Gaussian noises to obtain an $(\veps,\delta)$-differentially private mechanism:
\begin{theorem}{\citep[Theorem 3.22]{dwork2014algorithmic}} \label{th:dp_gaussian_mechanism}
    Let $f : \cX \to \R^k$ be some function with $\ell_2$-sensitivity $\Delta_2(f) < +\infty$. Let $\veps \in (0,1)$, $\delta > 0$ and $\sigma > 0$, and define the \emph{Gaussian mechanism} as the random function $\cM_{G,\sigma} : \cX \to \R^k$ defined by
    \begin{equation}
        \cM_{G,\sigma}(x) := f(x) + (Y_1,\ldots,Y_k)
    \end{equation}
    where the $Y_i$ are i.i.d. random variables with law $\cN(0,\sigma^2)$. Then, provided that $\sigma \geq \sqrt{2\ln(1.25/\delta)}\Delta_2(f)/\veps$, the Gaussian mechanism $\cM_{G,\sigma}$ is $(\veps,\delta)$-differentially private.
\end{theorem}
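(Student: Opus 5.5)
The statement to prove is the Gaussian mechanism (\Cref{th:dp_gaussian_mechanism}), and I would follow the standard privacy-loss argument of \citep{dwork2014algorithmic}.

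\emph{Reduction to one dimension.} Fix $x \sim x'$ and set $v := f(x) - f(x') \in \R^k$, so that $|v|_2 \leq \Delta_2(f)$. The output densities are $p_x(z) \propto \exp(-|z-f(x)|^2/(2\sigma^2))$ and $p_{x'}(z)$ likewise. Define the privacy loss
\begin{equation*}
\ell(z) := \ln\frac{p_{x'}(z)}{p_x(z)}.
\end{equation*}
Write $z = f(x) + Y$ with $Y \sim \cN(0,\sigma^2 I_k)$. A direct expansion of the two squared norms gives
\begin{equation*}
\ell(z) = \frac{\langle Y, v\rangle}{\sigma^2} + \frac{|v|^2}{2\sigma^2}.
\end{equation*}
By rotational invariance of $Y$, the quantity $\langle Y,v\rangle$ has the law $\cN(0,\sigma^2|v|^2)$. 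So the problem reduces to a scalar Gaussian, and the worst case is $|v| = \Delta := \Delta_2(f)$, since the bound below is monotone in $|v|$. If the sign convention requires it, I would symmetrize by swapping the roles of $x$ and $x'$.

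\emph{Tail bound on the loss.} The goal is $\bP[|\ell| > \veps] \leq \delta$. With $Z := \langle Y,v\rangle/(\sigma|v|) \sim \cN(0,1)$, we have $\ell > \veps$ exactly when
\begin{equation*}
Z > \frac{\sigma\veps}{\Delta} - \frac{\Delta}{2\sigma}.
\end{equation*}
Take $\sigma = c\Delta/\veps$ with $c^2 \geq 2\ln(1.25/\delta)$. The threshold then becomes $t := c - \veps/(2c)$. Apply the Mill's-ratio bound
\begin{equation*}
\bP[Z>t] \leq \frac{1}{t\sqrt{2\pi}}\,e^{-t^2/2}.
\end{equation*}
We need this to be at most $\delta/2$, which after taking logarithms means
\begin{equation*}
\ln t + \frac{t^2}{2} \geq \ln\frac{2}{\sqrt{2\pi}\,\delta}.
\end{equation*}
This is the main obstacle: a calculus check that it holds whenever $\veps<1$ and $c^2 \geq 2\ln(1.25/\delta)$. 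Expand $t^2 = c^2 - \veps + \veps^2/(4c^2)$ and use $\veps<1$ to absorb the $-\veps$ term. The constant $1.25$ is exactly what makes the leftover inequality hold, in the regime $c \geq 3/2$, which is where Dwork and Roth restrict attention. I would follow their computation, treating the case of small $\delta$ where $c$ is large and verifying the constants.

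\emph{From loss bound to DP.} Let $B := \{z : \ell(z) > \veps\}$, where the loss here is taken for $z$ drawn from the output on $x'$ (by symmetry the same tail bound applies). For any measurable $S$,
\begin{equation*}
\bP[\cM(x')\in S] \leq \bP[\cM(x')\in S\setminus B] + \bP[\cM(x')\in B] \leq e^\veps\,\bP[\cM(x)\in S] + \delta,
\end{equation*}
because $p_{x'} \leq e^{\veps} p_x$ off $B$. This gives \eqref{eq:app:eps_dp}. The bound uses only the tail of $\ell$ and the Gaussian structure, so it holds for arbitrary $\cX$ and $\sim$, consistent with the remark following \Cref{prop:dp_is_preserved}.
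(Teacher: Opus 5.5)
The paper gives no proof of its own here (it imports Theorem 3.22 from Dwork--Roth), and your skeleton (privacy loss, reduction to a scalar Gaussian, conversion through the bad set $B$) is exactly the argument of that reference. Your conversion step is correct. There is one harmless slip: with $z=f(x)+Y$ and $v=f(x)-f(x')$ the expansion gives $\ell=-\langle Y,v\rangle/\sigma^2-|v|^2/(2\sigma^2)$. The law $\cN(|v|^2/(2\sigma^2),|v|^2/\sigma^2)$ you use is the law of $\ell$ under $z\sim\cM(x')$, which is what your last step needs, so nothing breaks.

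The genuine gap is the tail step. You only establish $\ln t+t^2/2\geq\ln\bigl(2/(\sqrt{2\pi}\,\delta)\bigr)$ for $c\geq 3/2$, and the hypothesis guarantees $c\geq 3/2$ only when $\delta\leq 1.25\,e^{-9/8}\approx 0.41$. The statement allows every $\delta>0$. In the uncovered range your inequality is false, not merely unproved. For $\delta=0.9$, $c=\sqrt{2\ln(1.25/0.9)}\approx 0.81$ and $\veps$ close to $1$, one has $t\approx 0.2$, and the Mills bound is about $2$, far above $\delta/2=0.45$. The cited proof has the same hole.

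Following that computation verbatim is also risky even for $c\geq 3/2$. If you discard $\ln t\geq 0$, as the reference does, the remaining condition at $\veps\to 1$ reads $-1+1/(4c^2)\geq\ln(2/\pi)-2\ln 1.25\approx-0.898$. This already fails at $c=1.6$; the reference's claimed bound $c^2-\veps+\veps^2/(4c^2)\geq c^2-8/9$ is where this is hidden. So you must keep the $\ln t$ term. With $t^2>c^2-1$ the leftover condition is $\ln t\geq\tfrac12+\tfrac12\ln(2/\pi)-\ln 1.25\approx 0.05$. This does hold for $c\geq 3/2$, because then $t\geq 7/6$.

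The cleanest repair uses the fact that your conversion step only needs the one-sided bound $\bP[\ell>\veps]\leq\delta$ under $\cM(x')$, not $\delta/2$ per tail.

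\begin{itemize}
\item \emph{Case $t\geq 0$.} Use $\bP[Z>t]\leq\tfrac12 e^{-t^2/2}$ together with $t^2=c^2-\veps+\veps^2/(4c^2)>c^2-1$ and $e^{-c^2/2}\leq\delta/1.25$. This gives $\bP[Z>t]<\tfrac{\sqrt e}{2}\cdot\tfrac{\delta}{1.25}<0.66\,\delta$ for every admissible $c$. It needs no restriction $c\geq 3/2$ and no Mills ratio.
\item \emph{Case $t<0$.} This forces $c^2<\veps/2<1/2$, hence $\delta>1.25\,e^{-1/4}>0.97$. If $\delta<1$, then $c^2\geq 2\ln(1.25/\delta)>2\ln 1.25$, so $t>-0.09$ and $\bP[Z>t]<0.54<\delta$. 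If $\delta\geq 1$, the claim is trivial.
\end{itemize}

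With this replacement, your argument proves the statement as written.
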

Although $(\veps,\delta)$-differential privacy is a weaker property than $\veps$-differential privacy, it may sometimes be desirable to work with Gaussian noises due to their favorable properties.

\subsection{A private federated estimator}

We are now ready to provide modified versions of \Cref{alg:fed_sw1_server} and \Cref{alg:fed_sw1_client} that address the privacy issues raised in \Cref{appendix:sec:sec:fed_sw_privacy_concerns}. As in \Cref{appendix:sec:sec:fed_sw}, the analysis below will focus on the computation of the estimator $\widehat{\SW}_{1,L}$, but it again will be easily adaptable to the $\SW_2$ CDF-based estimators and the smoothed CDF-based estimators. For simplicity, we will assume that the datasets $\cD_1$ and $\cD_2$ have the same number of samples $N = N_1 = N_2$ and are split into the same number of clients $C = C_1 = C_2$, and that each client has the same number of samples $N_{i,j} = N/C$ (so that $\alpha_{i,j} = 1/C$). We also assume that all the measures $\mu_{i,j}$ are supported in some ball $B(0,R)$, so that we may take $m_\theta^\pm = \pm R$ for every $\theta \in \bS^{d-1}$\footnote{In particular, we assume that this information is known to every client}. The modified algorithms are shown in \Cref{alg:fed_sw1_server_private} and \Cref{alg:fed_sw1_client_private}. The main changes are the fact that the knowledge of $R$ allow to eliminate the phase devoted to computing the $m_{\theta_l}^\pm$ ; the use of a secure aggregation protocol for summing the client CDFs ; and the introduction of a noising parameters $\sigma > 0$ at the client side. For every $1 \leq l \leq L$, $i \in \{1,2\}$ and $1 \leq j \leq C$, the client $(i,j)$ sends to the server the noised value $F_{i,j,l} = \frac 1C F_{\mu_{i,j,\theta_l}}(\tilde{U}_l) + Z_{i,j,l}$ where $Z_{i,j,l} \sim \cN(0,\sigma^2)$, so that what the value $F_{i,l}$ the server receives after performing secure aggregation is not the exact value $F_{\mu_{i,\theta_l}}(\tilde{U}_l)$ of the CDF of the global dataset, but a noised value
\begin{equation}
    F_{i,l} = \sum_{j=1}^C \frac 1C F_{\mu_{i,j,\theta_l}}(\tilde{U}_l) + Z_{i,j,l} = F_{\mu_{i,\theta_l}}(\tilde{U}_l) + \sum_{j=1}^C Z_{i,j,l}.
\end{equation}
Moreover, since every $Z_{i,j,l}$ has law $\cN(0,\sigma^2)$, the random variable $\sum_{j=1}^C Z_{i,j,l}$ has law $\cN(0,C\sigma^2)$, that is a centered Gaussian law of standard deviation $\sqrt{C}\sigma$. The vector $(F_{i,1},\ldots,F_{i,L})$ received by the server is thus in practice the result of a Gaussian mechanism applied to the CDF vector $(F_{\mu_{i,\theta_l}}(\tilde{U}_l))_{l=1}^L$. The rest of this section will consist of an analysis of this vector using the framework of differential privacy (\Cref{appendix:sec:sec:differential_privacy}).

\begin{algorithm}[tb]
  \caption{Private federated computation of $\widehat{\SW}_{1,L}$, server side. Here $\mathrm{SecAgg}$ denotes any secure aggregation protocol.}
  \label{alg:fed_sw1_server_private}
  \begin{algorithmic}[1]
    \STATE {\bfseries Input:} Parameter $L$, number of clients per dataset $C$, global dataset size $N$, global dataset support size $R$
    \STATE Sample $(\theta_1,U_1),\ldots,(\theta_L,U_L)$ uniformly from $\bS^{d-1} \times [0,1]$
    \STATE Broadcast $(\theta_1,U_1,\ldots,\theta_L,U_L)$ to the clients
    \FOR{$i=1,2$}
        \FOR{$l = 1$ {\bfseries to} $L$}
            \STATE Compute $F_{i,l} \leftarrow \mathrm{SecAgg}(F_{i,1,l},\ldots,F_{i,C,l})$, where $F_{i,j,l}$ is received from client $(i,j)$
        \ENDFOR
    \ENDFOR
    \STATE Compute $SW \leftarrow \frac 1L \sum_{l=1}^L 2R |F_{1,l} - F_{2,l}|$
    \STATE Return $SW$
  \end{algorithmic}
\end{algorithm}

\begin{algorithm}[tb]
  \caption{Private federated computation of $\widehat{\SW}_{1,L}$, client side. Here $i \in \{1,2\}$ and $j \in \{1,\ldots,C_i\}$ denote the index of the client and are fixed.}
  \label{alg:fed_sw1_client_private}
  \begin{algorithmic}[1]
    \STATE {\bfseries Input:} Private dataset $\mu_{i,j} \in \cP(\R^d)$, number of clients $C$, global dataset size $N_i$, noising parameter $\sigma$, global dataset support size $R$
    \STATE Receive $(\theta_1,U_1),\ldots,(\theta_L,U_L) \in \bS^{d-1} \times [0,1]$ from server
    \FOR{$l = 1$ {\bfseries to} $L$}
        \STATE Compute $\tilde{U}_l \leftarrow -R + 2R U_l$
        \STATE Sample $Z_{i,j,l}$ from $\cN(0,\sigma^2)$
        \STATE Compute $F_{i,j,l} \leftarrow \frac{1}{C} F_{\mu_{i,j,\theta_l}}(\tilde{U}_l) + Z_{i,j,l}$
    \ENDFOR
    \STATE Send $(F_{i,j,1},\ldots,F_{i,j,L})$ to server.
  \end{algorithmic}
\end{algorithm}

For this, we set $\cX := (\R^d)^{N/C \times C \times 2}$ to be the dataset space. An entry $x \in \cX$ is then a tuple $x = (\cD_{i,j})_{i,j}$ containing the client datasets $\cD_{i,j} \in (\R^d)^{N/C}$ for $i=1,2$ and $1 \leq j \leq C$. We will define two closeness relations $\sim_e$ and $\sim_c$. Let $x, x' \in \cX$ with $x = (\cD_{i,j})_{i,j}$ and $x' = (\cD'_{i,j})_{i,j}$, then:
\begin{itemize}
    \item We note $x \sim_c x'$ if there exists at most one pair $(i,j)$ with $i \in \{1,2\}$ and $1 \leq j \leq C$ such that $\cD_{i,j} \neq \cD'_{i,j}$.
    \item We note $x \sim_e x'$ if there exists at most one pair $(i,j)$ such that $\cD_{i,j} \neq \cD'_{i,j}$, and if for such a $(i,j)$ the datasets $\cD_{i,j}$ and $\cD'_{i,j}$ differ by at most one sample.
\end{itemize}
A differentially private mechanism for $\sim_c$ is one where one cannot learn much about individual clients from its output, while a differentially private mechanism for $\sim_e$ is one where one cannot learn much about individual entries (of the global dataset) from its output. One will therefore focus on either depending on the type of protection one wants to achieve.
\medbreak
We thus choose $\sim \in \{\sim_e,\sim_c\}$, and let $\veps,\delta > 0$ be fixed. For simplicity, we focus on a single computation of the estimator $\widehat{\SW}_{1,L}$, and we treat the $(\theta_1,U_1),\ldots,(\theta_L,U_L)$ as fixed values. We are interested in how the noising parameter $\sigma$ of \Cref{alg:fed_sw1_client_private} must be chosen to achieve $(\veps,\delta)$-differential privacy for $\sim$ of the CDF vectors $(F_{i,1},\ldots,F_{i,L})$, $i=1,2$ computed by the server. Indeed, if this is the case, since the noising for the two vectors $i=1,2$ are independent, \Cref{prop:dp_is_preserved} and \Cref{th:dp_compositionality} will ensure that any data computed from them will be $(2\veps,2\delta)$-differentially private, including the $\widehat{\SW}_{1,L}(\mu,\nu)$ estimator. The question of the choice of $\sigma$ is then answered by the following proposition:

\begin{proposition}
    Let $i \in \{1,2\}$, and set $\sigma_0 := \sqrt{2\ln(1.25/\delta)}/\veps$. Then:
    \begin{itemize}
        \item If $\sigma \geq \sigma_0 \frac{\sqrt{L}}{C^{3/2}}$, then the vector $(F_{i,1},\ldots,F_{i,L})$ is $(\veps,\delta)$-differentially private for $\sim_c$.
        \item If $\sigma \geq \sigma_0 \frac{\sqrt{L}}{N\sqrt{C}}$, then the vector $(F_{i,1},\ldots,F_{i,L})$ is $(\veps,\delta)$-differentially private for $\sim_e$.
    \end{itemize}
\end{proposition}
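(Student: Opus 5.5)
The plan is to recognize the vector $(F_{i,1},\ldots,F_{i,L})$ seen by the server as the output of a Gaussian mechanism, and then apply \Cref{th:dp_gaussian_mechanism}. Fix $i$ and the values $(\theta_l,U_l)_{l}$, hence also $\tilde U_l = -R+2RU_l$. Define the deterministic map $f_i:\cX\to\R^L$ by $f_i(x) := (F_{\mu_{i,\theta_l}}(\tilde U_l))_{l=1}^L$, where $F_{\mu_{i,\theta_l}} = \frac1C\sum_{j=1}^C F_{\mu_{i,j,\theta_l}}$.

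As computed just before the statement, secure aggregation hands the server only
\[
F_{i,l} = f_i(x)_l + \sum_{j=1}^C Z_{i,j,l}.
\]
The noises $\sum_j Z_{i,j,l}$, for $l=1,\ldots,L$, are i.i.d.\ $\cN(0,C\sigma^2)$. So $(F_{i,l})_l = \cM_{G,\sqrt{C}\sigma}(x)$ for the function $f_i$. By \Cref{th:dp_gaussian_mechanism}, it then suffices to show
\[
\sqrt{C}\,\sigma \;\ge\; \sqrt{2\ln(1.25/\delta)}\,\Delta_2(f_i)/\veps \;=\; \sigma_0\,\Delta_2(f_i).
\]
Here I implicitly assume $\veps\in(0,1)$, as that theorem requires. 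Everything therefore reduces to bounding the $\ell_2$-sensitivity $\Delta_2(f_i)$ for each of the two relations.

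Take $x\sim x'$ and let $(i',j)$ be the only client whose dataset differs. If $i'\neq i$, then $f_i(x)=f_i(x')$. Otherwise, each coordinate of $f_i(x)-f_i(x')$ equals $\frac1C\bigl(F_{\mu_{i,j,\theta_l}}(\tilde U_l)-F_{\mu'_{i,j,\theta_l}}(\tilde U_l)\bigr)$, and I bound this coordinatewise.
\begin{itemize}
\item For $\sim_c$: both CDF values lie in $[0,1]$, so each coordinate is at most $1/C$ in absolute value. Hence $\Delta_2(f_i)\le\sqrt{L}/C$, and the condition becomes $\sigma\ge\sigma_0\sqrt{L}/C^{3/2}$.
\item For $\sim_e$: $\mu_{i,j}$ is a uniform empirical measure on $N/C$ points, so $F_{\mu_{i,j,\theta}}(a)=\frac{C}{N}\sum_k \bOne_{\sca{x_k}{\theta}\le a}$. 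Replacing one sample changes at most one indicator, so the CDF value moves by at most $C/N$. Each coordinate of $f_i$ then moves by at most $1/N$, giving $\Delta_2(f_i)\le\sqrt{L}/N$ and the condition $\sigma\ge\sigma_0\sqrt{L}/(N\sqrt{C})$.
\end{itemize}

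The computations are routine. The main point needing care is the modelling step: justifying that the server's view is exactly the mechanism output with noise standard deviation $\sqrt{C}\sigma$. This holds because SecAgg reveals only the sum, and the independence of the $Z_{i,j,l}$ across $j$ and $l$ gives i.i.d.\ Gaussian coordinates. A second point is that \Cref{th:dp_gaussian_mechanism} is stated for generic $\cX,\sim$, which the remark after \Cref{prop:dp_is_preserved} allows.
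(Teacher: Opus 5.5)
Your proposal is correct and follows essentially the same route as the paper: you identify the aggregated vector as a Gaussian mechanism with standard deviation $\sqrt{C}\sigma$ applied to $f_i$, bound the $\ell_2$-sensitivity coordinatewise by $\sqrt{L}/C$ for $\sim_c$ and by $\sqrt{L}/N$ for $\sim_e$, and then invoke \Cref{th:dp_gaussian_mechanism}. Your caveat that this theorem requires $\veps\in(0,1)$ is fair; the paper leaves that restriction implicit.
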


\begin{proof}
    Fix $k \in \{1,2\}$ and let $f_k : \cX \to \R^L$ be the function defined, for every $x \in \cX$ corresponding to the measures $(\mu_{i,j})_{i,j}$, by $f_k(x) := (F_{\mu_{k,\theta_l}}(\tilde{U}_l))_{l=1}^L$. Let $x, x' \in \cX$, with $x = (\cD_{i,j})_{i,j}$ and $x' = (\cD'_{i,j})_{i,j}$, corresponding respectively to the measures $(\mu_{i,j})_{i,j}$ and $(\mu'_{i,j})_{i,j}$.
    \begin{itemize}
        \item If $x \sim_c x'$, then there exists at most one pair $(i,j)$ such that $\cD_{i,j} \neq \cD'_{i,j}$. Since every client CDF $F_{\mu_{i,j,\theta_l}}$ contributes to the global CDF $F_{\mu_{i,\theta_l}}$ by some value in $[0,1/C]$, this implies that $f_k(x)$ and $f_k(x')$ differ by at most $1/C$ at each coordinate. Therefore, the $\ell^2$-sensitivity (\Cref{def:dp_sensitivity}) of $f_k$ (for $\sim_c$) is bounded by
        \begin{equation}
            \Delta_2(f_k) \leq \sqrt{\sum_{l=1}^L \left(\frac 1C\right)^2} = \frac{\sqrt{L}}{C}.
        \end{equation}
        \item If $x \sim_e x'$, then $x$ and $x'$ differ by at most one entry in one client dataset. Since every entry in a dataset $\cD_{i,j}$ contributes to the global CDF $F_{\mu_{i,\theta_l}}$ by either $0$ or $1/N$, this implies that $f_k(x)$ and $f_k(x')$ differ by at most $1/N$ at each coordinate. Therefore, the $\ell^2$-sensitivity of $f_k$ (for $\sim_e$) is bounded by
        \begin{equation}
            \Delta_2(f_k) \leq \sqrt{\sum_{l=1}^L \left(\frac 1N\right)^2} = \frac{\sqrt{L}}{N}.
        \end{equation}
    \end{itemize}
    Since we have seen that the vector $(F_{k,1},\ldots,F_{k,L})$ is a Gaussian mechanism applied to $f_k$ with a Gaussian noise of standard deviation $\sqrt{C}\sigma$, the proposition is thus a direct consequence of \Cref{th:dp_gaussian_mechanism} along with the bounds of the $\ell^2$-sensitivity of $f_k$ for $\sim_c$ and $\sim_e$ derived above.
\end{proof}

\begin{remark}
    The considerations above hold for a single computation of $\widehat{\SW}_{1,L}$, with the slices $(\theta_1,U_1),\ldots,(\theta_L,U_L)$ held fixed. However, it can straightforwardly be adapted to the case of $T$ computations of $\widehat{\SW}_{1,L}$, with $L_1,\ldots,L_T$ slices for each successive computation. Indeed, in this case, we can set $L = L_1 + \ldots + L_T$, and choose $\sigma$ in \Cref{alg:fed_sw1_client_private} so that the vectors $(F_{t,i,l})_{1 \leq t \leq T, 1 \leq l \leq L_t} \in \R^L$, $i=1,2$ (where $F_{t,i,l}$ is the value of $F_{i,l}$ computed by \Cref{alg:fed_sw1_server_private} during its $t$-th execution) are $(\veps,\delta)$-differentially private. In this case, by \Cref{prop:dp_is_preserved} and \Cref{th:dp_compositionality}, any data computed from them will be $(2\veps,2\delta)$-differentially private, including the computed values of $\widehat{\SW}_{1,L_t}(\mu,\nu)$, $t \in \{1,\ldots,T\}$.
\end{remark}

\section{Details on the generative modeling experiments} \label{appendix:sec:details_gans}

The goal of this appendix is to provide additional details on the generative modeling experiments carried out in \Cref{sec:experiments}.

\paragraph{GANs and Sliced-Wasserstein GANs} The architecture we employ is that of Generative Adversarial Networks (GANs). We follow here closely the formalism of \citet{nguyen2022amortized}. Let $\mu \in \cP(\R^d)$ be the data distribution from which we want to be able to sample. We want to train a neural network $G_\phi : \R^{d_z} \to \R^d$ parametrized by $\phi$, called the \emph{generator}, such that $G_{\phi\#}\cN(0,I_{d_z}) = \mu$. That is, our goal is to be able to sample from $\mu$ by sampling $Z \sim \cN(0,I_d)$ and computing $G_\phi(Z)$. In order to train $G_\phi$, we use a second neural network $T_\beta : \R^d \to \R$, called the \emph{discriminator}. During the training, the discriminator learns to differentiate the output of the generator from the true data, while the generator learns to trick the discriminator. Formally, the training of a GAN consists in solving alternatively the following two optimization problems:
\begin{equation} \label{eq:app:gan_disc_loss}
    \min_\beta \E_{x\sim \mu}[\min(0,-1+T_\beta(x))] + \E_{z \sim \cN(0,I_{d_z})}[\min(0,-1-T_\beta(x))]
\end{equation}
\begin{equation} \label{eq:app:gan_gen_loss}
    \min_\phi \E_{z \sim \cN(0,I_{d_z})}[-T_\beta(G_\phi(z))]
\end{equation}
In this experiment, we use instead the variant of GANs known as Sliced-Wasserstein GANs (SW-GANs) \citep{deshpande2018generative}, where the discriminator is of the form $T_\beta = T_{\beta_2} \circ T_{\beta_1}$, with $\beta = (\beta_1,\beta_2)$ and where $T_{\beta_1}$ maps the data space to an intermediate feature space, and where the generator loss \eqref{eq:app:gan_gen_loss} is replaced by
\begin{equation} \label{eq:app:swgan_gen_loss}
    \min_\phi \SW_2^2(\tilde{T}_{\beta_1,\beta_2\#}\mu, \tilde{T}_{\beta_1,\beta_2\#}G_{\phi\#}\cN(0,I_d))
\end{equation}
where $\tilde{T}_{\beta_1,\beta_2}(x) := (T_{\beta_1}(x), T_{\beta_2}(T_{\beta_1}(x)))$ is the concatenation of the output of the discriminator and of the intermediate feature values.

\paragraph{Architecture details} The detail of the architectures that we use for training on MNIST and CelebA are presented respectively in \Cref{tab:mnist_gan_architectures} and \Cref{tab:celeba_gan_architectures}. They are taken and adapted from the code of \citet{nguyen2022amortized}\footnote{Which can be found at \url{https://github.com/UT-Austin-Data-Science-Group/AmortizedSW}}, which itself is an adaptation of the architectures used in \citet{miyato2018spectral} (which introduce the technique of spectral normalization, for stabilizing the training of the discriminator) to the SW-GAN framework of \citet{deshpande2018generative}.

\begin{table}
    \centering
    \caption{GAN architectures for MNIST}
    \begin{tabular}{ccc}
        \begin{tabular}{c}
            \hline
             (a) $G_\phi$ \\ \hline
             Input : $z \in \R^{64} \sim \cN(0,1)$ \\ \hline
             Linear \\
             $64 \to 32 \times 7 \times 7$ \\ \hline
             ResBlock upsample 32 \\ \hline
             ResBlock 32 \\ \hline
             ResBlock upsample 32\\ \hline
             ResBlock 32 \\ \hline
             BatchNorm, ReLU, \\ 
             $3 \times 3$ Conv, Tanh \\ \hline
        \end{tabular}
         &
        \begin{tabular}{c}
            \hline
             (b) $T_{\beta_1}$ \\ \hline
             Input : $x \in [-1,1]^{28 \times 28}$ \\ \hline
             ResBlock downsample 32 \\ \hline
             ResBlock downsample 32\\ \hline
             ResBlock 32\\ \hline
             ResBlock 32\\ \hline
        \end{tabular}
         &
         \begin{tabular}{c}
            \hline
             (c) $T_{\beta_2}$ \\ \hline
             Input : $x \in \R^{32 \times 7 \times 7}$ \\ \hline
             ReLU \\ \hline
             Global sum pooling \\ \hline
             Linear $32 \to 1$ \\
             (spectral normalization) \\ \hline
        \end{tabular}
    \end{tabular}
    \label{tab:mnist_gan_architectures}
\end{table}

\begin{table}
    \centering
    \caption{GAN architectures for CelebA}
    \begin{tabular}{ccc}
        \begin{tabular}{c}
            \hline
             (a) $G_\phi$ \\ \hline
             Input : $z \in \R^{128} \sim \cN(0,1)$ \\ \hline
             Linear \\
             $128 \to 256 \times 4 \times 4$ \\ \hline
             ResBlock upsample 256 \\ \hline
             ResBlock upsample 256 \\ \hline
             ResBlock upsample 256 \\ \hline
             ResBlock upsample 256 \\ \hline
             BatchNorm, ReLU, \\ 
             $3 \times 3$ Conv, Tanh \\ \hline
        \end{tabular}
         &
        \begin{tabular}{c}
            \hline
             (b) $T_{\beta_1}$ \\ \hline
             Input : $x \in [-1,1]^{3 \times 64 \times 64}$ \\ \hline
             ResBlock downsample 128 \\ \hline
             ResBlock downsample 128 \\ \hline
             ResBlock downsample 128 \\ \hline
             ResBlock 128\\ \hline
             ResBlock 128\\ \hline
        \end{tabular}
         &
         \begin{tabular}{c}
            \hline
             (c) $T_{\beta_2}$ \\ \hline
             Input : $x \in \R^{128 \times 8 \times 8}$ \\ \hline
             ReLU \\ \hline
             Global sum pooling \\ \hline
             Linear $128 \to 1$ \\
             (spectral normalization) \\ \hline
        \end{tabular}
    \end{tabular}
    \label{tab:celeba_gan_architectures}
\end{table}

\paragraph{Hyperparameters} For both datasets MNIST and CelebA, we train the models for 100 epochs and with a batch size of 128 (this represents a total of 46900 iterations for MNIST and of 127200 iterations for CelebA). We update the discriminator $T_\beta$ every iteration and the generator $G_\phi$ every 5 iterations. We use the Adam optimizer during the training \citep{kingma2015adam}, with learning rate $0.0002$ and parameters $(\beta_1,\beta_2) = (0,0.9)$\footnote{Those are the parameters of Adam and must not be confused with the parameters of the discriminator $T_{\beta_1}$ and $T_{\beta_2}$!}. The discriminator loss \eqref{eq:app:swgan_gen_loss} is computed using three different estimators of the $\SW_2$ distance:
\begin{itemize}
    \item The usual, quantile-based, estimator $\widetilde{\SW}_{2,L}^2$ with $L = 10000$.
    \item The CDF-based estimator $\widehat{\SW}^2_{2,L,\veps}$ with $L = 100000$ and $\veps = 0.01$.
    \item The CDF-based estimator $\widehat{\SW}^2_{2,L,\eta_k,\veps}$ with $L = 100000$, $k = 0.05$ and $\veps = 0.01$.
\end{itemize}

\paragraph{FID scores} We compute FID scores using the \texttt{pytorch-fid} Python package. Each FID score is computed based on 10000 random samples from the trained model and the full training dataset.

\begin{figure}
    \centering
    \includegraphics[width=\textwidth]{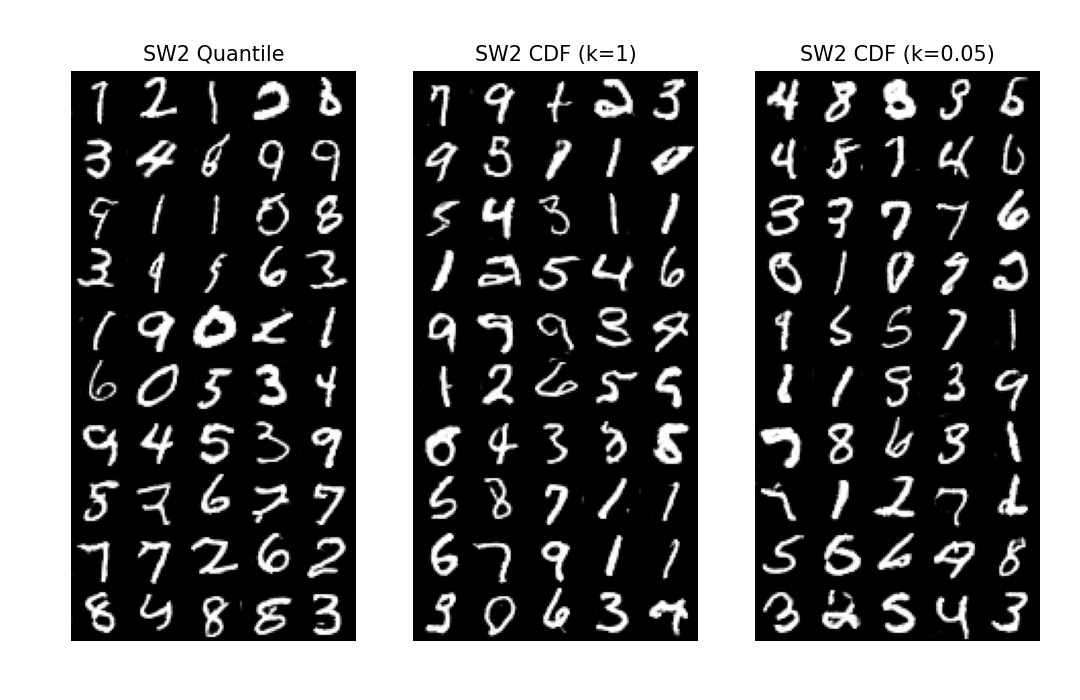}
    \caption{Samples from the GANs trained on MNIST}
    \label{fig:gan_samples_mnist}
\end{figure}

\begin{figure}
    \centering
    \includegraphics[width=\textwidth]{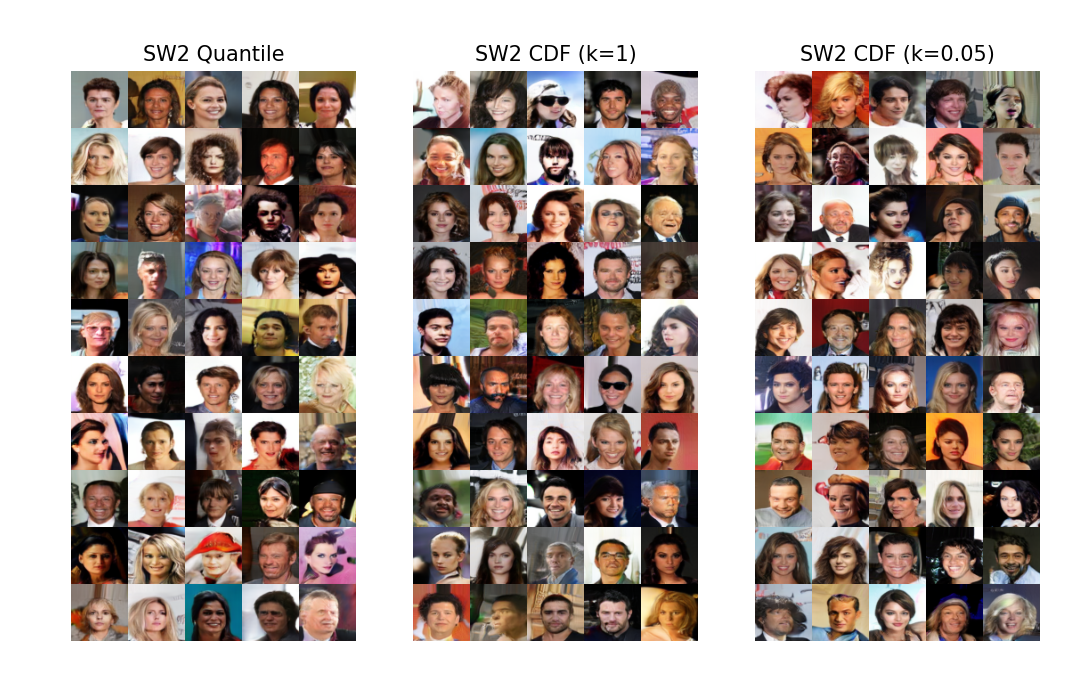}
    \caption{Samples from the GANs trained on CelebA}
    \label{fig:gan_samples_celeba}
\end{figure}

\paragraph{Full results} \Cref{fig:gan_samples_mnist} and \Cref{fig:gan_samples_celeba} show various images sampled from the trained generators for the different datasets and $\SW_2$ estimators.

\section{Details on the federated learning experiments} \label{appendix:sec:details_boosting_fl}

This section provides details on the federated learning algorithm boosting experiments described in \Cref{sec:experiments}.

\subsection{The OTDD and MMDSW dataset distances} \label{appendix:sec:sec:otdd_and_mmdsw}

The Optimal Transport Dataset Distance (OTDD, \citep{alvarez2021dataset}) is a distance between labeled datasets of the form $\cD = \{(x_i,l_i)\}_{i=1}^N \subseteq \R^d \times \{1,\ldots,L\}$. It is defined the following way: for every label $l \in \{1,\ldots,L\}$, let
\begin{equation}
    \nu_{\cD,l} := \frac{1}{\#\{i, x_i = l\}}\sum_{x_i = l} \delta_{x_i} \in \cP(\R^d)
\end{equation}
be the empirical measure of the samples of $\cD$ with label $l$. We then represent the dataset $\cD$ by the measure
\begin{equation}
    \mu_{\cD} := \frac 1N \sum_{i=1}^N \delta_{(x_i,\nu_{\cD,l_i})} \in \cP(\R^d \times \cP(\R^d)),
\end{equation}
and we define the OTDD distance between the labeled datasets $\cD_1$ and $\cD_2$ by
\begin{equation}
    \mathrm{OTDD}^2(\cD_1,\cD_2) := \min_{\gamma \in \Pi(\mu_{\cD_1},\mu_{\cD_2})} \int |x-x'|^2 + \W_2^2(\nu,\nu') \dd\gamma((x,\nu),(x',\nu'))
\end{equation}
where $\Pi(\mu_{\cD_1},\mu_{\cD_2})$ is the set of transport plans between $\mu_{\cD_1}$ and $\mu_{\cD_2}$, that is the set of probability measures $\gamma \in \cP((\R^d \times \cP(\R^d))^2)$ with respective first and second marginals $\mu_{\cD_1}$ and $\mu_{\cD_2}$. The OTDD can thus be computed in a federated manner using the federated algorithm of \citet{rakotomamonjy2024federated} for computing the Wasserstein distance $\W_2$.
\medbreak
Another dataset distance is the Maximum Mean Discrepancy with Riesz $\SW$ kernel (MMDSW, \citep{bonet2025flowing}). Given a labeled dataset $\cD$, we can represent it by the measure
\begin{equation}
    \bP_{\cD} := \frac 1L \sum_{l=1}^L \delta_{\nu_{\cD,l}} \in \cP(\cP(\R^d)).
\end{equation}
Then the MMDSW between two labeled datasets $\cD_1$ and $\cD_2$ is defined as
\begin{equation}
    \mathrm{MMDSW}^2(\cD_1,\cD_2) = \iint K(\mu,\nu)\dd(\bP_{\cD_1}-\bP_{\cD_2})(\mu) \dd(\bP_{\cD_1}-\bP_{\cD_2})(\nu)
\end{equation}
where $K(\mu,\nu) := -\SW_2(\mu,\nu)$ for every $\mu, \nu \in \cP_2(\R^d)$. The MMDSW can therefore be computed in a federated manner using the federated algorithms of \Cref{sec:applications} for computing the Sliced-Wasserstein distance $\SW_2$.

\subsection{The boosting procedure}

Assume then that we are given a labeled dataset $\cD$ split into sub-datasets $\cD_1$, ..., $\cD_C$ among $C$ clients. We wish to train a classifier $T_\theta : \R^d \to \{1,\ldots,L\}$ on this dataset $\cD$ using a federated learning algorithm, but the distributions $\cD_c$ may present some heterogeneity (for instance, some clients may not have samples from all the classes of $\cD$), which may worsen the performance of the trained model. To address this, \citet{rakotomamonjy2024federated} proposed to compute, for every pair $c,c'$ of clients, the OTDD distance $D_{c,c'} := \mathrm{OTDD}(\cD_c,\cD_{c'})$ between their datasets. Then, applying a spectral clustering algorithm (see \citet{vonluxburg2007tutorial}) to the matrix $D \in \R^{C \times C}$, they partition the set of clients into $K$ clusters $C_1$, ..., $C_K$ (with $\{1,\ldots,C\} = C_1 \coprod \ldots \coprod C_k$), and, on every cluster $C_k$, they train one classifier $T_{\theta_k}$ using the federated learning algorithm. Through experiments using various datasets, numbers of clients and clustering and federated learning algorithms, they demonstrate that this approach leads to a consistent increase in the global accuracy of the trained models, which is even more marked when the heterogeneity of the client datasets is increased. We follow the same procedure in our experiment, except that we replace the OTDD with the MMDSW distance.

\subsection{Practical details and results}

\begin{figure}
    \centering
    \includegraphics[width=\textwidth]{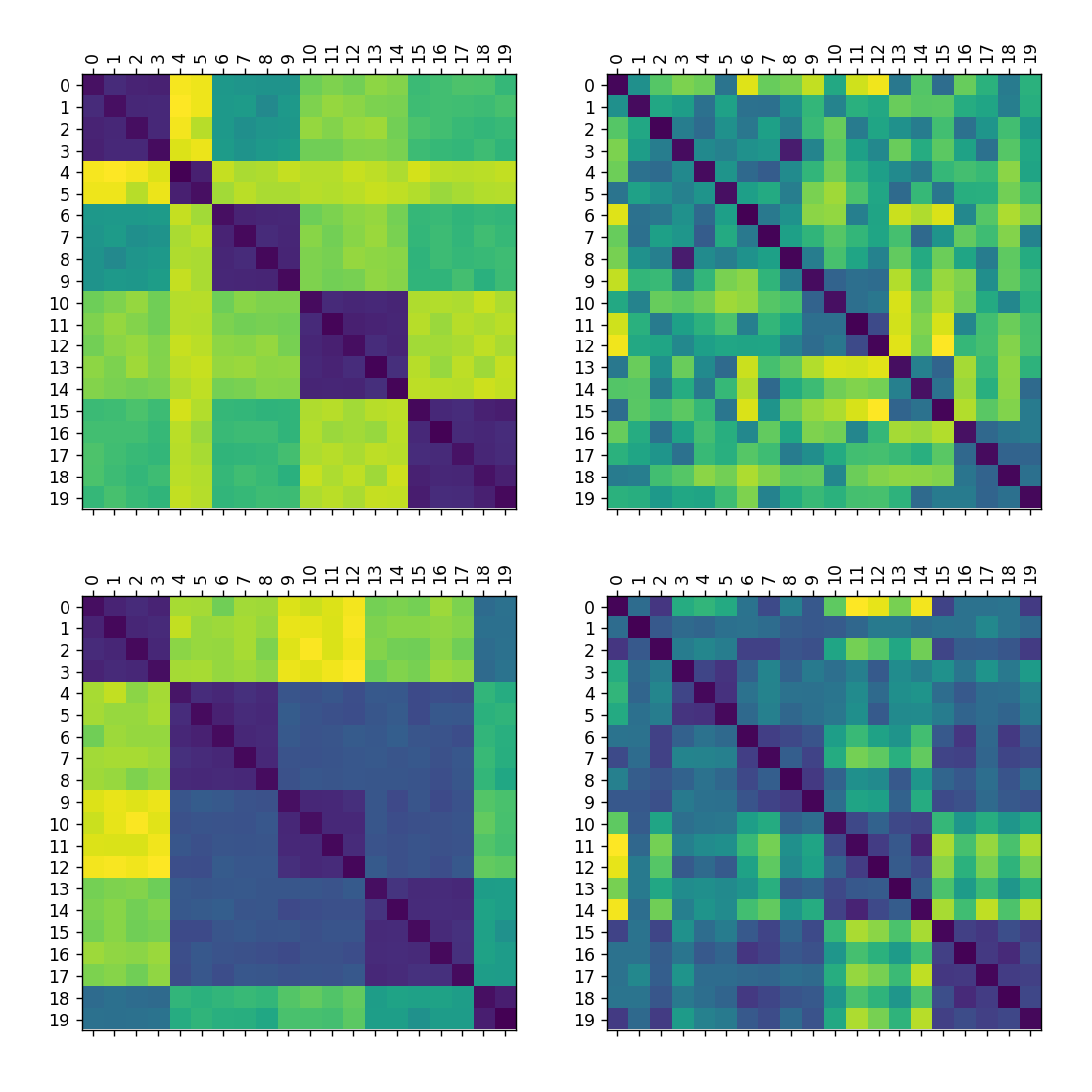}
    \caption{Distance matrices for MMDSW with MNIST (top row) and CIFAR-10 (bottom row), with 20 clients. On the left column we have imposed a cluster structure on the clients, while on the right column we have not imposed a specific structure.}
    \label{fig:dist_matrices}
\end{figure}

\paragraph{Client dataset construction} When we want the client datasets to present a clear cluster structure, we construct them the following way: we assign to each client $c \in \{1,\ldots,C\}$ a random pair $(i_c,j_c)$ of labels among $(0,1),(2,3),(4,5),(6,7),(8,9)$. Then, for every label $l$ of $\cD$, we split the set of samples of $\cD$ with label $l$ equally between the clients to whom $l$ has been assigned. When we \emph{do not} want the client datasets to present a clear cluster structure, we construct them the following way: we split the global dataset $\cD$ into $2C$ shards of similar size, the samples of each shard having the same label, and we randomly assign two shards to each client. \Cref{fig:dist_matrices} show the MMDSW distance matrices obtained after constructing the client datasets for MNIST and CIFAR-10 using these two procedures. We see in particular that when we impose a cluster structure on the clients, this structure is clearly visible on the distance matrix.  

\begin{table}
    \centering
    \caption{Classifier architectures for MNIST (left) and CIFAR-10 (right)}
    \begin{tabular}{cc}
        \begin{tabular}{c}
            \hline
             Input : $x \in [-1,1]^{784}$ \\ \hline
             Linear $784 \to 256$ \\ \hline
             ReLU \\ \hline
             Linear $256 \to 128$ \\ \hline
             ReLU \\ \hline
             Linear $128 \to 32$ \\ \hline
             ReLU \\ \hline
             Linear $32 \to 10$ \\ \hline
             Softmax \\ \hline
        \end{tabular}
         &
        \begin{tabular}{c}
            \hline
             Input : $x \in [-1,1]^{3072}$ \\ \hline
             Linear $3072 \to 512$ \\ \hline
             ReLU \\ \hline
             Linear $512 \to 256$ \\ \hline
             ReLU \\ \hline
             Linear $256 \to 64$ \\ \hline
             ReLU \\ \hline
             Linear $64 \to 10$ \\ \hline
             Softmax \\ \hline
        \end{tabular}
    \end{tabular}
    \label{tab:fl_classifier_architectures}
\end{table}

\paragraph{Classifier architectures} The architectures for the classifiers are shown in \Cref{tab:fl_classifier_architectures}. When using the algorithms FedPer or FedRep, we use the last linear layer of the model as the client's personal head.

\paragraph{Hyperparameters} We compute the MMDSW using the $\widehat{\SW}_{2,L}$ estimator with $L = 10000$. Each classifier is trained during 10 epochs with stochastic gradient descent, with a learning rate of $0.01$ and a momentum of $0.5$, and with a batch size of $10$. The number of clusters is set to $K = 5$.

\paragraph{Supplementary results} Extended results for various settings of the experiment are reported in \Cref{tab:federated_full}. "Vanilla" refers to the setting where we do not perform clustering and train a single classifier on the entire set of clients. In contrast, "Affinity", "Sparse G. (3)" and "Sparse G. (5)" denote three different spectral clustering algorithms: "Affinity" denotes the setting where the MMDSW distance matrix $D$ is directly used (after rescaling) as an affinity matrix by the spectral clustering algorithm (parameter \texttt{affinity="precomputed"} of \texttt{sklearn.SpectralClustering}) while "Sparse G. (3)" and "Sparse G. (5)" denote the setting where $D$ is used to construct a binary matrix of the 3 or 5 nearest neighbors between the clients, which is then used as the affinity matrix for spectral clustering (parameter \texttt{affinity="precomputed\_nearest\_neighbors"}). Note that the spectral clustering algorithm used in \Cref{tab:federated} corresponds to the setting "Affinity".

\begin{table*}
    \small
    \centering
    \caption{Performance for boosted FL algorithms (full results). The average uplifts for OTDD are taken from Tables 2-5 of \citet{rakotomamonjy2024federated} (maximum between support sizes 10 and 100). Note that the higher uplifts for OTDD are in part due to the fact that the baseline performance ("Vanilla") is often lower in \citet{rakotomamonjy2024federated}, so that in our reimplementation the boosting has less room to improve from.}
    
    \resizebox{\linewidth}{!}{
    \begin{tabular}{rcccccccc}
        \hline
         & \multicolumn{4}{c}{Cluster structure} & \multicolumn{4}{c}{No cluster structure} \\ \cline{2-5} \cline{6-9}
         & & \multicolumn{3}{c}{Clustering} & & \multicolumn{3}{c}{Clustering} \\ \cline{3-5} \cline{7-9}
         & Vanilla & Affinity & Sparse G. (3) & Sparse G. (5) & Vanilla & Affinity & Sparse G. (3) & Sparse G. (5) \\ \hline
         & \multicolumn{8}{c}{MNIST} \\ \cline{2-9}
         FedAvg &&&& &&&& \\
         10 & 58.2 $\pm$ 2.3 & \textbf{99.5} $\pm$ \textbf{0.0} & 97.9 $\pm$ 3.2 & 90.8 $\pm$ 4.8 & 66.5 $\pm$ 3.5 & 94.4 $\pm$ 3.1 & \textbf{96.1} $\pm$ \textbf{1.3} & 90.7 $\pm$ 4.2\\
         20 & 58.0 $\pm$ 1.7 & 99.2 $\pm$ 0.1 & 98.6 $\pm$ 1.2 & \textbf{99.2} $\pm$ \textbf{0.1} & 65.6 $\pm$ 5.0 & 84.0 $\pm$ 4.3 & \textbf{85.2} $\pm$ \textbf{3.4} & 85.0 $\pm$ 3.4\\
         40 & 51.7 $\pm$ 3.6 & \textbf{99.0} $\pm$ \textbf{0.0} & 93.8 $\pm$ 4.1 & 99.0 $\pm$ 0.0 & 71.3 $\pm$ 3.7 & 77.0 $\pm$ 3.3 & 76.8 $\pm$ 2.1 & \textbf{77.0} $\pm$ \textbf{1.9}\\
         100 & 56.5 $\pm$ 4.5 & \textbf{98.8} $\pm$ \textbf{0.0} & 96.7 $\pm$ 2.8 & 91.3 $\pm$ 7.5 & 70.3 $\pm$ 2.9 & \textbf{72.5} $\pm$ \textbf{2.3} & 64.8 $\pm$ 2.9 & 70.6 $\pm$ 0.6\\
         FedPer &&&& &&&& \\
         10 & 97.6 $\pm$ 1.4 & \textbf{99.5} $\pm$ \textbf{0.0} & 99.5 $\pm$ 0.0 & 99.3 $\pm$ 0.1 & 96.3 $\pm$ 1.7 & 99.0 $\pm$ 0.1 & \textbf{99.2} $\pm$ \textbf{0.1} & 98.7 $\pm$ 0.5\\
         20 & 98.0 $\pm$ 0.5 & 99.2 $\pm$ 0.0 & 99.2 $\pm$ 0.1 & \textbf{99.2} $\pm$ \textbf{0.1} & 95.7 $\pm$ 1.1 & \textbf{97.9} $\pm$ \textbf{0.3} & 97.9 $\pm$ 0.4 & 97.8 $\pm$ 0.4\\
         40 & 97.5 $\pm$ 0.4 & 99.0 $\pm$ 0.0 & 98.7 $\pm$ 0.3 & \textbf{99.0} $\pm$ \textbf{0.0} & 94.7 $\pm$ 0.9 & 96.5 $\pm$ 0.4 & 96.6 $\pm$ 0.6 & \textbf{97.0} $\pm$ \textbf{0.3}\\
         100 & 95.9 $\pm$ 2.4 & \textbf{98.8} $\pm$ \textbf{0.0} & 98.6 $\pm$ 0.3 & 98.6 $\pm$ 0.2 & 94.7 $\pm$ 1.2 & 96.1 $\pm$ 0.6 & 94.9 $\pm$ 1.2 & \textbf{96.6} $\pm$ \textbf{0.6}\\
         FedRep &&&& &&&& \\
         10 & 98.5 $\pm$ 0.1 & \textbf{98.8} $\pm$ \textbf{0.1} & 98.8 $\pm$ 0.1 & 98.7 $\pm$ 0.1 & 98.1 $\pm$ 0.2 & \textbf{98.7} $\pm$ \textbf{0.1} & \textbf{98.7} $\pm$ \textbf{0.1} & 98.6 $\pm$ 0.2\\
         20 & 98.2 $\pm$ 0.1 & 98.8 $\pm$ 0.1 & \textbf{98.8} $\pm$ \textbf{0.1} & 98.8 $\pm$ 0.1 & 97.3 $\pm$ 0.2 & 97.9 $\pm$ 0.3 & \textbf{98.0} $\pm$ \textbf{0.1} & 97.4 $\pm$ 0.5\\
         40 & 97.4 $\pm$ 0.2 & 98.7 $\pm$ 0.1 & 98.4 $\pm$ 0.4 & \textbf{98.7} $\pm$ \textbf{0.1} & 95.7 $\pm$ 1.2 & 97.3 $\pm$ 0.5 & 97.3 $\pm$ 0.5 & \textbf{97.4} $\pm$ \textbf{0.3}\\
         100 & 92.9 $\pm$ 4.5 & \textbf{98.4} $\pm$ \textbf{0.0} & 98.1 $\pm$ 0.6 & 97.5 $\pm$ 0.9 & 93.8 $\pm$ 0.5 & \textbf{96.1} $\pm$ \textbf{0.5} & 94.9 $\pm$ 1.2 & 95.8 $\pm$ 0.6\\
         \hline
         \multicolumn{2}{l}{Average uplift} &&&&&&& \\
         \multicolumn{2}{l}{MMDSW} & 15.7 $\pm$ 19.8 & 15.2 $\pm$ 19.2 & 14.7 $\pm$ 18.8 & -  & 6.2 $\pm$ 9.7 & 5.3 $\pm$ 11.0 & 5.9 $\pm$ 9.1\\
         \multicolumn{2}{l}{OTDD} & \textbf{29.8} $\pm$ \textbf{28.4} & \textbf{29.0} $\pm$ \textbf{27.2} & \textbf{27.6} $\pm$ \textbf{26.3} & -  & \textbf{18.9} $\pm$ \textbf{18.9} & \textbf{15.7} $\pm$ \textbf{18.6} & \textbf{14.3} $\pm$ \textbf{18.1}\\ \hline
         & \multicolumn{8}{c}{CIFAR-10} \\ \cline{2-9}
         FedAvg &&&& &&&& \\
         10 & 33.5 $\pm$ 1.0 & \textbf{84.8} $\pm$ \textbf{0.2} & 81.2 $\pm$ 7.3 & 65.1 $\pm$ 4.9 & 34.4 $\pm$ 2.0 & \textbf{71.2} $\pm$ \textbf{3.9} & 70.4 $\pm$ 2.4 & 64.0 $\pm$ 4.5\\
         20 & 32.9 $\pm$ 0.9 & \textbf{82.6} $\pm$ \textbf{3.9} & 80.3 $\pm$ 5.4 & 79.4 $\pm$ 6.4 & 35.5 $\pm$ 1.4 & 60.6 $\pm$ 2.2 & \textbf{62.8} $\pm$ \textbf{2.9} & 61.8 $\pm$ 2.7\\
         40 & 31.8 $\pm$ 1.1 & \textbf{84.0} $\pm$ \textbf{0.3} & 81.2 $\pm$ 3.7 & 84.0 $\pm$ 0.3 & 35.2 $\pm$ 1.9 & \textbf{57.5} $\pm$ \textbf{1.8} & 53.9 $\pm$ 2.0 & 57.0 $\pm$ 3.6\\
         100 & 27.2 $\pm$ 2.3 & \textbf{82.4} $\pm$ \textbf{0.2} & 67.5 $\pm$ 9.4 & 74.8 $\pm$ 7.5 & 31.2 $\pm$ 1.1 & \textbf{47.7} $\pm$ \textbf{2.2} & 40.5 $\pm$ 3.3 & 46.1 $\pm$ 2.0\\
         FedPer &&&& &&&& \\
         10 & 82.7 $\pm$ 0.6 & \textbf{84.5} $\pm$ \textbf{0.2} & 84.4 $\pm$ 0.4 & 84.1 $\pm$ 0.3 & 81.6 $\pm$ 3.5 & \textbf{84.0} $\pm$ \textbf{2.8} & 83.9 $\pm$ 2.8 & 84.0 $\pm$ 2.9\\
         20 & 81.6 $\pm$ 0.5 & \textbf{84.7} $\pm$ \textbf{0.2} & 84.5 $\pm$ 0.2 & 84.4 $\pm$ 0.5 & 81.9 $\pm$ 2.5 & 84.2 $\pm$ 2.4 & 84.2 $\pm$ 2.0 & \textbf{84.3} $\pm$ \textbf{2.0}\\
         40 & 80.5 $\pm$ 0.6 & \textbf{84.1} $\pm$ \textbf{0.3} & 84.0 $\pm$ 0.3 & 84.1 $\pm$ 0.2 & 81.7 $\pm$ 1.0 & 82.9 $\pm$ 1.1 & 83.4 $\pm$ 1.1 & \textbf{83.7} $\pm$ \textbf{0.9}\\
         100 & 76.9 $\pm$ 0.8 & \textbf{82.3} $\pm$ \textbf{0.2} & 81.2 $\pm$ 0.5 & 81.8 $\pm$ 0.6 & 77.8 $\pm$ 1.1 & 80.5 $\pm$ 0.7 & 80.2 $\pm$ 1.2 & \textbf{80.8} $\pm$ \textbf{0.7}\\
         FedRep &&&& &&&& \\
         10 & 81.1 $\pm$ 0.3 & \textbf{83.2} $\pm$ \textbf{0.4} & 83.0 $\pm$ 0.6 & 82.6 $\pm$ 0.4 & 80.9 $\pm$ 3.4 & 82.6 $\pm$ 3.0 & 82.9 $\pm$ 2.9 & \textbf{83.1} $\pm$ \textbf{2.7}\\
         20 & 78.7 $\pm$ 0.8 & \textbf{82.6} $\pm$ \textbf{0.2} & 82.5 $\pm$ 0.6 & 82.4 $\pm$ 0.4 & 80.7 $\pm$ 2.6 & \textbf{83.2} $\pm$ \textbf{2.4} & 82.8 $\pm$ 2.2 & 83.1 $\pm$ 2.0\\
         40 & 76.5 $\pm$ 0.6 & \textbf{81.3} $\pm$ \textbf{0.3} & 80.7 $\pm$ 0.5 & 81.2 $\pm$ 0.3 & 78.2 $\pm$ 1.2 & 81.1 $\pm$ 1.1 & 81.2 $\pm$ 0.8 & \textbf{81.5} $\pm$ \textbf{1.0}\\
         100 & 64.2 $\pm$ 1.1 & \textbf{72.9} $\pm$ \textbf{1.0} & 70.0 $\pm$ 1.3 & 70.9 $\pm$ 2.1 & 70.4 $\pm$ 0.8 & 72.9 $\pm$ 0.7 & 72.5 $\pm$ 0.9 & \textbf{73.7} $\pm$ \textbf{0.5}\\
         \hline
         \multicolumn{2}{l}{Average uplift} &&&&&&& \\
         \multicolumn{2}{l}{MMDSW} & \textbf{24.3} $\pm$ \textbf{27.2} & \textbf{22.6} $\pm$ \textbf{26.0} & \textbf{22.4} $\pm$ \textbf{25.6} & -  & 15.7 $\pm$ 18.8 & \textbf{14.5} $\pm$ \textbf{17.9} & \textbf{15.4} $\pm$ \textbf{17.7}\\
         \multicolumn{2}{l}{OTDD} & 20.0 $\pm$ 21.3 & 12.4 $\pm$ 15.1 & 13.3 $\pm$ 16.1 & -  & \textbf{19.9} $\pm$ \textbf{18.0} & 8.6 $\pm$ 15.4 & 9.1 $\pm$ 16.6\\ \hline
    \end{tabular}}
    \label{tab:federated_full}
\end{table*}

%%%%%%%%%%%%%%%%%%%%%%%%%%%%%%%%%%%%%%%%%%%%%%%%%%%%%%%%%%%%

\newpage
\section*{NeurIPS Paper Checklist}

\begin{enumerate}

\item {\bf Claims}
    \item[] Question: Do the main claims made in the abstract and introduction accurately reflect the paper's contributions and scope?
    \item[] Answer: \answerYes{} % Replace by \answerYes{}, \answerNo{}, or \answerNA{}.
    \item[] Justification: The abstract and introduction reflect the paper's contributions and scope.
    \item[] Guidelines:
    \begin{itemize}
        \item The answer \answerNA{} means that the abstract and introduction do not include the claims made in the paper.
        \item The abstract and/or introduction should clearly state the claims made, including the contributions made in the paper and important assumptions and limitations. A \answerNo{} or \answerNA{} answer to this question will not be perceived well by the reviewers. 
        \item The claims made should match theoretical and experimental results, and reflect how much the results can be expected to generalize to other settings. 
        \item It is fine to include aspirational goals as motivation as long as it is clear that these goals are not attained by the paper. 
    \end{itemize}

\item {\bf Limitations}
    \item[] Question: Does the paper discuss the limitations of the work performed by the authors?
    \item[] Answer: \answerYes{} % Replace by \answerYes{}, \answerNo{}, or \answerNA{}.
    \item[] Justification: The limitations are discussed in the conclusion.
    \item[] Guidelines:
    \begin{itemize}
        \item The answer \answerNA{} means that the paper has no limitation while the answer \answerNo{} means that the paper has limitations, but those are not discussed in the paper. 
        \item The authors are encouraged to create a separate ``Limitations'' section in their paper.
        \item The paper should point out any strong assumptions and how robust the results are to violations of these assumptions (e.g., independence assumptions, noiseless settings, model well-specification, asymptotic approximations only holding locally). The authors should reflect on how these assumptions might be violated in practice and what the implications would be.
        \item The authors should reflect on the scope of the claims made, e.g., if the approach was only tested on a few datasets or with a few runs. In general, empirical results often depend on implicit assumptions, which should be articulated.
        \item The authors should reflect on the factors that influence the performance of the approach. For example, a facial recognition algorithm may perform poorly when image resolution is low or images are taken in low lighting. Or a speech-to-text system might not be used reliably to provide closed captions for online lectures because it fails to handle technical jargon.
        \item The authors should discuss the computational efficiency of the proposed algorithms and how they scale with dataset size.
        \item If applicable, the authors should discuss possible limitations of their approach to address problems of privacy and fairness.
        \item While the authors might fear that complete honesty about limitations might be used by reviewers as grounds for rejection, a worse outcome might be that reviewers discover limitations that aren't acknowledged in the paper. The authors should use their best judgment and recognize that individual actions in favor of transparency play an important role in developing norms that preserve the integrity of the community. Reviewers will be specifically instructed to not penalize honesty concerning limitations.
    \end{itemize}

\item {\bf Theory assumptions and proofs}
    \item[] Question: For each theoretical result, does the paper provide the full set of assumptions and a complete (and correct) proof?
    \item[] Answer: \answerYes{} % Replace by \answerYes{}, \answerNo{}, or \answerNA{}.
    \item[] Justification: We provide rigorous proofs of all of our theoretical claims in the appendix.
    \item[] Guidelines:
    \begin{itemize}
        \item The answer \answerNA{} means that the paper does not include theoretical results. 
        \item All the theorems, formulas, and proofs in the paper should be numbered and cross-referenced.
        \item All assumptions should be clearly stated or referenced in the statement of any theorems.
        \item The proofs can either appear in the main paper or the supplemental material, but if they appear in the supplemental material, the authors are encouraged to provide a short proof sketch to provide intuition. 
        \item Inversely, any informal proof provided in the core of the paper should be complemented by formal proofs provided in appendix or supplemental material.
        \item Theorems and Lemmas that the proof relies upon should be properly referenced. 
    \end{itemize}

    \item {\bf Experimental result reproducibility}
    \item[] Question: Does the paper fully disclose all the information needed to reproduce the main experimental results of the paper to the extent that it affects the main claims and/or conclusions of the paper (regardless of whether the code and data are provided or not)?
    \item[] Answer: \answerYes{} % Replace by \answerYes{}, \answerNo{}, or \answerNA{}.
    \item[] Justification: All the information needed to reproduce the experiments are disclosed in the paper and its appendix.
    \item[] Guidelines:
    \begin{itemize}
        \item The answer \answerNA{} means that the paper does not include experiments.
        \item If the paper includes experiments, a \answerNo{} answer to this question will not be perceived well by the reviewers: Making the paper reproducible is important, regardless of whether the code and data are provided or not.
        \item If the contribution is a dataset and\slash or model, the authors should describe the steps taken to make their results reproducible or verifiable. 
        \item Depending on the contribution, reproducibility can be accomplished in various ways. For example, if the contribution is a novel architecture, describing the architecture fully might suffice, or if the contribution is a specific model and empirical evaluation, it may be necessary to either make it possible for others to replicate the model with the same dataset, or provide access to the model. In general. releasing code and data is often one good way to accomplish this, but reproducibility can also be provided via detailed instructions for how to replicate the results, access to a hosted model (e.g., in the case of a large language model), releasing of a model checkpoint, or other means that are appropriate to the research performed.
        \item While NeurIPS does not require releasing code, the conference does require all submissions to provide some reasonable avenue for reproducibility, which may depend on the nature of the contribution. For example
        \begin{enumerate}
            \item If the contribution is primarily a new algorithm, the paper should make it clear how to reproduce that algorithm.
            \item If the contribution is primarily a new model architecture, the paper should describe the architecture clearly and fully.
            \item If the contribution is a new model (e.g., a large language model), then there should either be a way to access this model for reproducing the results or a way to reproduce the model (e.g., with an open-source dataset or instructions for how to construct the dataset).
            \item We recognize that reproducibility may be tricky in some cases, in which case authors are welcome to describe the particular way they provide for reproducibility. In the case of closed-source models, it may be that access to the model is limited in some way (e.g., to registered users), but it should be possible for other researchers to have some path to reproducing or verifying the results.
        \end{enumerate}
    \end{itemize}

\item {\bf Open access to data and code}
    \item[] Question: Does the paper provide open access to the data and code, with sufficient instructions to faithfully reproduce the main experimental results, as described in supplemental material?
    \item[] Answer: \answerYes{} % Replace by \answerYes{}, \answerNo{}, or \answerNA{}.
    \item[] Justification: Our code will be made public.
    \item[] Guidelines:
    \begin{itemize}
        \item The answer \answerNA{} means that paper does not include experiments requiring code.
        \item Please see the NeurIPS code and data submission guidelines (\url{https://neurips.cc/public/guides/CodeSubmissionPolicy}) for more details.
        \item While we encourage the release of code and data, we understand that this might not be possible, so \answerNo{} is an acceptable answer. Papers cannot be rejected simply for not including code, unless this is central to the contribution (e.g., for a new open-source benchmark).
        \item The instructions should contain the exact command and environment needed to run to reproduce the results. See the NeurIPS code and data submission guidelines (\url{https://neurips.cc/public/guides/CodeSubmissionPolicy}) for more details.
        \item The authors should provide instructions on data access and preparation, including how to access the raw data, preprocessed data, intermediate data, and generated data, etc.
        \item The authors should provide scripts to reproduce all experimental results for the new proposed method and baselines. If only a subset of experiments are reproducible, they should state which ones are omitted from the script and why.
        \item At submission time, to preserve anonymity, the authors should release anonymized versions (if applicable).
        \item Providing as much information as possible in supplemental material (appended to the paper) is recommended, but including URLs to data and code is permitted.
    \end{itemize}

\item {\bf Experimental setting/details}
    \item[] Question: Does the paper specify all the training and test details (e.g., data splits, hyperparameters, how they were chosen, type of optimizer) necessary to understand the results?
    \item[] Answer: \answerYes{} % Replace by \answerYes{}, \answerNo{}, or \answerNA{}.
    \item[] Justification: All the details necessary to understand the results are described in the paper and its appendix. 
    \item[] Guidelines:
    \begin{itemize}
        \item The answer \answerNA{} means that the paper does not include experiments.
        \item The experimental setting should be presented in the core of the paper to a level of detail that is necessary to appreciate the results and make sense of them.
        \item The full details can be provided either with the code, in appendix, or as supplemental material.
    \end{itemize}

\item {\bf Experiment statistical significance}
    \item[] Question: Does the paper report error bars suitably and correctly defined or other appropriate information about the statistical significance of the experiments?
    \item[] Answer: \answerNo{} % Replace by \answerYes{}, \answerNo{}, or \answerNA{}.
    \item[] Justification: The experiments in our papers do not involve error bars or statistical significance tests. We report standard deviations for the classifier accuracies in the federated learning experiment and for the FID scores in the generative modeling experiment.
    \item[] Guidelines:
    \begin{itemize}
        \item The answer \answerNA{} means that the paper does not include experiments.
        \item The authors should answer \answerYes{} if the results are accompanied by error bars, confidence intervals, or statistical significance tests, at least for the experiments that support the main claims of the paper.
        \item The factors of variability that the error bars are capturing should be clearly stated (for example, train/test split, initialization, random drawing of some parameter, or overall run with given experimental conditions).
        \item The method for calculating the error bars should be explained (closed form formula, call to a library function, bootstrap, etc.)
        \item The assumptions made should be given (e.g., Normally distributed errors).
        \item It should be clear whether the error bar is the standard deviation or the standard error of the mean.
        \item It is OK to report 1-sigma error bars, but one should state it. The authors should preferably report a 2-sigma error bar than state that they have a 96\% CI, if the hypothesis of Normality of errors is not verified.
        \item For asymmetric distributions, the authors should be careful not to show in tables or figures symmetric error bars that would yield results that are out of range (e.g., negative error rates).
        \item If error bars are reported in tables or plots, the authors should explain in the text how they were calculated and reference the corresponding figures or tables in the text.
    \end{itemize}

\item {\bf Experiments compute resources}
    \item[] Question: For each experiment, does the paper provide sufficient information on the computer resources (type of compute workers, memory, time of execution) needed to reproduce the experiments?
    \item[] Answer: \answerYes{} % Replace by \answerYes{}, \answerNo{}, or \answerNA{}.
    \item[] Justification: In the case of the experiments on synthetic data in the appendix, where execution time is important, we give all the necessary information on the computer resources necessary to reproduce the experiments.
    \item[] Guidelines:
    \begin{itemize}
        \item The answer \answerNA{} means that the paper does not include experiments.
        \item The paper should indicate the type of compute workers CPU or GPU, internal cluster, or cloud provider, including relevant memory and storage.
        \item The paper should provide the amount of compute required for each of the individual experimental runs as well as estimate the total compute. 
        \item The paper should disclose whether the full research project required more compute than the experiments reported in the paper (e.g., preliminary or failed experiments that didn't make it into the paper). 
    \end{itemize}
    
\item {\bf Code of ethics}
    \item[] Question: Does the research conducted in the paper conform, in every respect, with the NeurIPS Code of Ethics \url{https://neurips.cc/public/EthicsGuidelines}?
    \item[] Answer: \answerYes{} % Replace by \answerYes{}, \answerNo{}, or \answerNA{}.
    \item[] Justification: The research conducted in the paper is conform to the NeurIPS Code of Ethics.
    \item[] Guidelines:
    \begin{itemize}
        \item The answer \answerNA{} means that the authors have not reviewed the NeurIPS Code of Ethics.
        \item If the authors answer \answerNo, they should explain the special circumstances that require a deviation from the Code of Ethics.
        \item The authors should make sure to preserve anonymity (e.g., if there is a special consideration due to laws or regulations in their jurisdiction).
    \end{itemize}

\item {\bf Broader impacts}
    \item[] Question: Does the paper discuss both potential positive societal impacts and negative societal impacts of the work performed?
    \item[] Answer: \answerNA{} % Replace by \answerYes{}, \answerNo{}, or \answerNA{}.
    \item[] Justification: There are no direct negative societal impact downstream of the work. 
    \item[] Guidelines:
    \begin{itemize}
        \item The answer \answerNA{} means that there is no societal impact of the work performed.
        \item If the authors answer \answerNA{} or \answerNo, they should explain why their work has no societal impact or why the paper does not address societal impact.
        \item Examples of negative societal impacts include potential malicious or unintended uses (e.g., disinformation, generating fake profiles, surveillance), fairness considerations (e.g., deployment of technologies that could make decisions that unfairly impact specific groups), privacy considerations, and security considerations.
        \item The conference expects that many papers will be foundational research and not tied to particular applications, let alone deployments. However, if there is a direct path to any negative applications, the authors should point it out. For example, it is legitimate to point out that an improvement in the quality of generative models could be used to generate Deepfakes for disinformation. On the other hand, it is not needed to point out that a generic algorithm for optimizing neural networks could enable people to train models that generate Deepfakes faster.
        \item The authors should consider possible harms that could arise when the technology is being used as intended and functioning correctly, harms that could arise when the technology is being used as intended but gives incorrect results, and harms following from (intentional or unintentional) misuse of the technology.
        \item If there are negative societal impacts, the authors could also discuss possible mitigation strategies (e.g., gated release of models, providing defenses in addition to attacks, mechanisms for monitoring misuse, mechanisms to monitor how a system learns from feedback over time, improving the efficiency and accessibility of ML).
    \end{itemize}
    
\item {\bf Safeguards}
    \item[] Question: Does the paper describe safeguards that have been put in place for responsible release of data or models that have a high risk for misuse (e.g., pre-trained language models, image generators, or scraped datasets)?
    \item[] Answer: \answerNA{} % Replace by \answerYes{}, \answerNo{}, or \answerNA{}.
    \item[] Justification: The paper poses no such risks.
    \item[] Guidelines:
    \begin{itemize}
        \item The answer \answerNA{} means that the paper poses no such risks.
        \item Released models that have a high risk for misuse or dual-use should be released with necessary safeguards to allow for controlled use of the model, for example by requiring that users adhere to usage guidelines or restrictions to access the model or implementing safety filters. 
        \item Datasets that have been scraped from the Internet could pose safety risks. The authors should describe how they avoided releasing unsafe images.
        \item We recognize that providing effective safeguards is challenging, and many papers do not require this, but we encourage authors to take this into account and make a best faith effort.
    \end{itemize}

\item {\bf Licenses for existing assets}
    \item[] Question: Are the creators or original owners of assets (e.g., code, data, models), used in the paper, properly credited and are the license and terms of use explicitly mentioned and properly respected?
    \item[] Answer: \answerNA{} % Replace by \answerYes{}, \answerNo{}, or \answerNA{}.
    \item[] Justification: We do not use existing assets in the paper.
    \item[] Guidelines:
    \begin{itemize}
        \item The answer \answerNA{} means that the paper does not use existing assets.
        \item The authors should cite the original paper that produced the code package or dataset.
        \item The authors should state which version of the asset is used and, if possible, include a URL.
        \item The name of the license (e.g., CC-BY 4.0) should be included for each asset.
        \item For scraped data from a particular source (e.g., website), the copyright and terms of service of that source should be provided.
        \item If assets are released, the license, copyright information, and terms of use in the package should be provided. For popular datasets, \url{paperswithcode.com/datasets} has curated licenses for some datasets. Their licensing guide can help determine the license of a dataset.
        \item For existing datasets that are re-packaged, both the original license and the license of the derived asset (if it has changed) should be provided.
        \item If this information is not available online, the authors are encouraged to reach out to the asset's creators.
    \end{itemize}

\item {\bf New assets}
    \item[] Question: Are new assets introduced in the paper well documented and is the documentation provided alongside the assets?
    \item[] Answer: \answerNA{} % Replace by \answerYes{}, \answerNo{}, or \answerNA{}.
    \item[] Justification: We do not release new assets.
    \item[] Guidelines:
    \begin{itemize}
        \item The answer \answerNA{} means that the paper does not release new assets.
        \item Researchers should communicate the details of the dataset\slash code\slash model as part of their submissions via structured templates. This includes details about training, license, limitations, etc. 
        \item The paper should discuss whether and how consent was obtained from people whose asset is used.
        \item At submission time, remember to anonymize your assets (if applicable). You can either create an anonymized URL or include an anonymized zip file.
    \end{itemize}

\item {\bf Crowdsourcing and research with human subjects}
    \item[] Question: For crowdsourcing experiments and research with human subjects, does the paper include the full text of instructions given to participants and screenshots, if applicable, as well as details about compensation (if any)? 
    \item[] Answer: \answerNA{} % Replace by \answerYes{}, \answerNo{}, or \answerNA{}.
    \item[] Justification: Our paper does not involve such research.
    \item[] Guidelines:
    \begin{itemize}
        \item The answer \answerNA{} means that the paper does not involve crowdsourcing nor research with human subjects.
        \item Including this information in the supplemental material is fine, but if the main contribution of the paper involves human subjects, then as much detail as possible should be included in the main paper. 
        \item According to the NeurIPS Code of Ethics, workers involved in data collection, curation, or other labor should be paid at least the minimum wage in the country of the data collector. 
    \end{itemize}

\item {\bf Institutional review board (IRB) approvals or equivalent for research with human subjects}
    \item[] Question: Does the paper describe potential risks incurred by study participants, whether such risks were disclosed to the subjects, and whether Institutional Review Board (IRB) approvals (or an equivalent approval/review based on the requirements of your country or institution) were obtained?
    \item[] Answer: \answerNA{} % Replace by \answerYes{}, \answerNo{}, or \answerNA{}.
    \item[] Justification:  Our paper does not involve such research.
    \item[] Guidelines:
    \begin{itemize}
        \item The answer \answerNA{} means that the paper does not involve crowdsourcing nor research with human subjects.
        \item Depending on the country in which research is conducted, IRB approval (or equivalent) may be required for any human subjects research. If you obtained IRB approval, you should clearly state this in the paper. 
        \item We recognize that the procedures for this may vary significantly between institutions and locations, and we expect authors to adhere to the NeurIPS Code of Ethics and the guidelines for their institution. 
        \item For initial submissions, do not include any information that would break anonymity (if applicable), such as the institution conducting the review.
    \end{itemize}

\item {\bf Declaration of LLM usage}
    \item[] Question: Does the paper describe the usage of LLMs if it is an important, original, or non-standard component of the core methods in this research? Note that if the LLM is used only for writing, editing, or formatting purposes and does \emph{not} impact the core methodology, scientific rigor, or originality of the research, declaration is not required.
    %this research? 
    \item[] Answer: \answerNA{} % Replace by \answerYes{}, \answerNo{}, or \answerNA{}.
    \item[] Justification: The core method development in this paper does not involve LLMs as any important, original, or non-standard components.
    \item[] Guidelines:
    \begin{itemize}
        \item The answer \answerNA{} means that the core method development in this research does not involve LLMs as any important, original, or non-standard components.
        \item Please refer to our LLM policy in the NeurIPS handbook for what should or should not be described.
    \end{itemize}

\end{enumerate}

\end{document}